\documentclass{article}

\usepackage{amsmath, amssymb, amsthm, amsfonts}

\usepackage{xspace}
\usepackage{boxedminipage}
\usepackage{epigraph}
\usepackage{framed}
\usepackage[framemethod=tikz]{mdframed}
\usepackage{titlesec}
\usepackage{lipsum}
\usepackage{algorithmic}
\usepackage{caption}
\usepackage{verbatim}
\usepackage{natbib}
\usepackage{comment}

\usepackage{microtype}
\usepackage{graphicx}
\usepackage{subfigure}
\usepackage{booktabs} 
\usepackage{hyperref}
\usepackage{cleveref}


\usepackage[accepted]{icml2021}

\icmltitlerunning{Spectral Analysis of the Neural Tangent Kernel for Deep Residual Networks}

\newcommand{\Real}{\mathbb{R}}
\newcommand{\Sphere}{\mathbb{S}}

\newcommand{\hh}{{\mathcal{H}}}


\newcommand{\bias}{\mathbf{b}}

\newcommand{\vv}{\mathbf{v}}

\newcommand{\x}{\mathbf{x}}

\newcommand{\z}{\mathbf{z}}

\newcommand{\norm}[1]{\left\lVert#1\right\rVert}

\newlength\myindent
\setlength\myindent{2em}

\usepackage{enumerate}

\newtheorem{theorem}{Theorem}[section]
\newtheorem{lemma}[theorem]{Lemma}

\newtheorem*{theorem*}{Theorem}
\newtheorem*{claim*}{Claim}
\newtheorem{corollary}[theorem]{Corollary}

\newtheorem{definition*}{Definition}


\newcommand{\ntk}{\boldsymbol{k}}
\newcommand{\ntkl}[1]{{\boldsymbol{k}^{(#1)}}}
\newcommand{\tntkl}[1]{{\tilde{\boldsymbol{k}}^{(#1)}}}
\newcommand{\resntk}{\boldsymbol{r}}
\newcommand{\resntkl}[1]{{\boldsymbol{r}^{(#1)}}}
\newcommand{\lap}{\boldsymbol{k}_{Lap}}
\newcommand{\hlap}{\boldsymbol{k}_{HLap}}
\newcommand{\Rd}{\Real^d}
\newcommand{\Spdm}{\Sphere^{d-1}}
\newcommand{\cpie}{\frac{\sqrt{2}}{\pi}}

\newcounter{casecount}
\setcounter{casecount}{0}

\newenvironment{proof-sketch}{\noindent{\it Proof Sketch.}\hspace*{1em}}{\qed\bigskip}

\usepackage{etoolbox}
\AtBeginEnvironment{proof}{\setcounter{casecount}{0}}

\begin{document}
\twocolumn[
\icmltitle{Spectral Analysis of the Neural Tangent Kernel for Deep Residual Networks}




\begin{icmlauthorlist}
\icmlauthor{Yuval Belfer}{}
\icmlauthor{Amnon Geifman}{}
\icmlauthor{Meirav Galun}{}
\icmlauthor{Ronen Basri}{}
\\ \vskip 0.1cm
\icmlauthor{{\small Weizmann Institute of Science}}{}
\\
\icmlauthor{{\small Rehovot, Israel}}{}
\end{icmlauthorlist}


\icmlcorrespondingauthor{Yuval Belfer}{ep@eden.co.uk}

\icmlkeywords{Machine Learning, ICML}

\vskip 0.3in
]
\begin{abstract}
    Deep residual network architectures have been shown to achieve superior accuracy over classical feed-forward networks, yet their success is still not fully understood. Focusing on massively over-parameterized, fully connected residual networks with ReLU activation through their respective neural tangent kernels (ResNTK), we provide here a spectral analysis of these kernels. Specifically, we show that, much like NTK for fully connected networks (FC-NTK), for input distributed uniformly on the hypersphere $\Spdm$, the eigenfunctions of ResNTK are the spherical harmonics and the eigenvalues decay polynomially with frequency $k$ as $k^{-d}$. These in turn imply that the set of functions in their Reproducing Kernel Hilbert Space are identical to those of FC-NTK, and consequently also to those of the Laplace kernel. We further show, by drawing on the analogy to the Laplace kernel, that depending on the choice of a hyper-parameter that balances between the skip and residual connections ResNTK can either become spiky with depth, as with FC-NTK, or maintain a stable shape.
\end{abstract}

\section{Introduction}

Deep residual networks (ResNets), first introduced in~\cite{he2016deep}, are to date amongst the most effective network architectures for image understanding~\cite{howard2019searching,radosavovic2020designing,tan2019mnasnet} as well as for other tasks~\cite{greenfeld2019amg,Siravenha2019resmlp}. These networks use blocks of two or three layers with skip connections such that the input to each block is added to its output (called the \textit{residual}) and the sum is passed to the next block. These architectural changes allowed researchers to train networks with hundreds, and even thousands of layers and to achieve unprecedentedly accurate classification results on the competitive ImageNet dataset \cite{he2016deep,he2016identity}.

The reasons for the advantage of residual  over classical feed-forward architectures are not yet fully understood. Several papers argue that skip connections alleviate the problem of \textit{vanishing gradients}, which is prevalent in classical deep architectures \cite{balduzzi2017shattered,veit2016residual}. 
Subsequent work showed that ResNets can avoid spurious local minima \cite{liu2019towards}, while \cite{li2018visualizing} showed, by empirically visualizing the loss landscape, that skip connections make the loss smoother.

In this work we examine residual networks from the perspective of the neural tangent kernels. As with many existing network models, residual network applications are typically over-parameterized. \cite{he2016deep}'s implementation, for example, trains a network with roughly 60M trainable parameters on the 1.2M images of ImageNet. Recent work~\cite{jacot2018} suggested that massively overparameterized neural networks behave similarly to kernel regressors with a family of kernels called \textit{Neural Tangent Kernels} (NTKs). \cite{Huang2020WhyDD,tirer2020kernelbased} proved that fully connected residual networks of infinite width converge to such kernel, which we here call \textit{ResNTK}, and provided a closed form derivation.

Kernel regression is characterized by the set of functions in the corresponding Reproducing Kernel Hilbert Space (RKHS) and by the norm induced in this space. These in turn are determined by the eigenfunctions and eigenvalues of the respective kernel under the uniform measure, with the decay rate of the eigenvalues playing a particularly important role. In this paper we prove that the eigenfunctions of ResNTK on the hypersphere $\Spdm$ are the spherical harmonics and that with ReLU activations the eigenvalues decay polynomially with frequency $k$ at the rate of $k^{-d}$, thus characterizing the set of functions in the corresponding RKHS. We conclude that this set of functions is identical to the functions in the RKHS of NTK of classical, fully connected networks (denoted \textit{FC-NTK}) \cite{basri2020frequency,bietti2020deep}, and, as is implied by previous work \cite{geifman2020similarity,bietti2020deep,chen2020laplace}, also to those of the Laplace kernel, restricted to $\Spdm$. We further discuss how this characterization extends outside of the hypersphere to $\Rd$.

Various properties of ResNTK appear to critically depend on a choice of hyperparameter $\alpha$, which balances between the residual and skip connections. In particular, we examine these properties when $\alpha$ is either constant or decaying with the depth of the corresponding network and make the following additional contributions:
\begin{enumerate}
    \item With no bias and a decaying $\alpha$ ($\alpha=L^{-\gamma}$ and $0.5<\gamma\le 1$ where $L$ denotes the number of hidden layers in the corresponding network), deep ResNTK is significantly biased toward the even frequencies. Specifically, with deep ResNTK the leading eigenfunctions beyond frequencies 0,1, and 2 are the even frequencies, and eigenfunctions of odd frequency have significantly lower eigenvalues. Ultimately when the depth $L \rightarrow \infty$ ResNTK converges to a two-layer FC-NTK, for which with no bias all the eigenvalues corresponding to odd frequency eigenfunction (except frequency 1) vanish. Such a parity difference is not observed if bias is used, if $\alpha=1/\sqrt{L}$, or if $\alpha$ is constant.
    \item Through the analogy to the Laplace kernel we can show the condition for which ResNTK become spiky. Specifically, we show that, with a decaying $\alpha=L^{-\gamma}$ with $0.5 \le \gamma \le 1$ ResNTK maintains a roughly stable shape, but becomes spiky with deep architectures if $\alpha$ is constant independent of depth. With this choice ResNTK exhibits the same behavior as FC-NTK. Our expreiments indeed indicate that with real datasets (UCI, CIFAR-10 and SVHN) a spiky kernel achieves inferior classification results compared to less steep kernels, implying that with FC-NTK and ResNTK with a constant $\alpha$ deep architectures are in fact inferior to shallow ones. 
\end{enumerate}


\section{Previous work}


Existing neural network models are typically applied with many more learnable parameters than training data items, yet somewhat counter-intuitively they successfully generalize to unseen data. Attempting to explain this phenomenon \cite{jacot2018} showed that infinite width networks whose parameters do not change much from their initial values behave like kernel regression with novel kernels called the Neural Tangent Kernels. Specifically, for an input $\x \in \Rd$ and learnable parameters $\theta \in \Real^m$, denote the network by $f(\x,\theta)$, then the corresponding NTK is given by
\begin{equation*}
    \mathbb{E}_{\theta \sim \mathcal{P}} \left< \frac{\partial f(\x_i,\theta)}{\partial \theta}, \frac{\partial f(\x_j,\theta)}{\partial \theta} \right>,
\end{equation*}
where $\x_i$ and $\x_j$ is a training pair, and the expectation is over the distribution $\mathcal{P}$ with which $\theta$ is initialized (typically the standard normal distribution). We note that the relevance of these models, referred to as \textit{lazy training}, to realistic neural networks is the subject of an ongoing debate (see, e.g., \cite{chizat2019lazy,lee2020finite}). 

Subsequent work showed that very wide networks of finite width converge to a global minimum~\cite{du2019gradient,allen2019convergence,chizat2019lazy} and further characterized the speed of convergence as a function of the data distribution and the frequency of the target function \cite{arora2019finegrained,Basri2019TheCR,basri2020frequency}. In particular, for data distributed uniformly in the hypersphere $\Spdm$, it was shown that the eigenfunctions of FC-NTK are the spherical harmonics and the eigenvalues decay at the rate of $k^{-d}$, where $k$ denotes frequency \cite{bietti2019inductive,bietti2020deep}. This completely characterizes the set of functions in the RKHS of FC-NTK. Subsequent work showed that this set of functions is identical to the functions in the RKHS of the classical Laplace kernel \cite{geifman2020similarity,bietti2020deep,chen2020laplace}. Our paper extends these results to NTK of residual networks  of any depth.

Several recent studies examined the behavior of over-parameterized residual networks. \cite{du2019gradient,zhang2019stability} showed that very wide ResNets of finite size converge to their global minima.
\cite{Huang2020WhyDD,tirer2020kernelbased} derived a formula for ResNTK. \cite{tirer2020kernelbased}'s analysis further suggested that ResNTK gives rise to a class of smoother function than FC-NTK. \cite{Huang2020WhyDD} showed that FC-NTK becomes spiky for deep networks, indicating that learning with these kernels becomes degenerate, while ResNTK remains stable with depth. Our work shows that the functions in the RKHS of both ResNTK and FC-NTK have the same smoothness properties. Moreover, we show that the specific choice of $\alpha$, the hyper-parameter that balances between the skip and residual connections, has a significant effect on the shape of ResNTK for deep architecture, so, for example, with constant $\alpha$ ResNTK too becomes spiky with depth.

Understanding the spectrum of a kernel is useful for a number of objectives. It indicates whether a kernel exhibits a frequency bias~\cite{cao2019,rahaman2019spectral,Xu2019}, it provides an estimate of the number of gradient descent iterations needed to learn certain target functions~\cite{Basri2019TheCR}, and it can be used to estimate the generalization error obtained by using the kernel as a minimum interpolant regressor (ridge-less kernel regression). For example, \cite{liang2020just,liang2019risk,pagliana2020interpolation} analyzed the bias-variance interplay of minimum norm interpolation with a growing number of samples when the dimension is either fixed or growing at the same rate.

\comment{
\begin{enumerate}
    \item Original ResNet Paper \url{https://openaccess.thecvf.com/content_cvpr_2016/papers/He_Deep_Residual_Learning_CVPR_2016_paper.pdf}  
    \item Raja Gyries's paper \url{https://arxiv.org/pdf/2009.10008.pdf}
    \item Du optimization of ResNet  \url{https://arxiv.org/pdf/1909.04653.pdf},\url{https://arxiv.org/pdf/1811.03804.pdf}
    \item ResNTK - \url{https://arxiv.org/pdf/2002.06262.pdf}
    \item Allen-Zhu Resnet \url{https://papers.nips.cc/paper/2019/file/5857d68cd9280bc98d079fa912fd6740-Paper.pdf}
    \item Zhang's optimization of ResNet- (also overparametrized) \url{https://openreview.net/pdf?id=HJe-oRVtPB}. Another similar paper by Zhang \url{https://arxiv.org/pdf/1903.07120.pdf}
    \item Mean field analysis of ResNet \url{https://arxiv.org/pdf/2003.05508.pdf}
\end{enumerate}
How is our work related to previous results?
\yb{Works from NIPS that are all experimental: \url{https://arxiv.org/pdf/2010.15110.pdf}, \url{https://proceedings.neurips.cc/paper/2020/file/ad086f59924fffe0773f8d0ca22ea712-Paper.pdf}}

}

\section{Preliminaries}
We consider positive definite kernels $\ntk: \Rd \times \Rd \rightarrow \Real$ over inputs $\x, \z \in \Rd$. $\ntk$ is called zonal if when $\x,\z$ are restricted to the hypersphere $\Spdm$ $\ntk$ can be expressed as a function of $\x^T\z$. In such case we overload our definition of $\ntk$ defining also $\ntk: [-1,1] \rightarrow \Real$ by letting $u=\x^T\z$ and writing $\ntk(\x,\z)=\ntk(u)$. To avoid unnecessary scalings, a good practice is to normalize the kernel such that $\ntk(1) = 1$. The eigenfunctions and eigenvalues derived in this paper are with respect to the uniform measure on the hypersphere $\Spdm$, or with respect to radial distributions in $\Rd$. Note however that the resulting RKHS definition is independent of data distribution. The kernels we use in this paper are ResNTK and FC-NTK, denoted respectively by $\resntk$ and $\ntk$, as well as the Laplace kernel (denoted $\lap$), with superscripts denoting the number of hidden layers, e.g. $\ntk^{(L)}$, i.e., $L=1$ corresponds to a network with one hidden layer (i.e., a two-layer network). Except when noted our kernels will correspond to networks with no bias. All proofs are the deferred to the supplementary material.

\subsection{NTK for FC Networks}

A fully-connected neural network (also called multilayer perceptron, MLP) with $L$ hidden layers and $m$ units in each hidden layer is expressed as
\begin{align*}
    f(\theta,\x) &= \vv^T \x_{L}\\
    \x_\ell & = \sqrt{\frac{c_\sigma}{m}}
    \sigma \left( W^{(l)}\x_{\ell-1} \right), ~~ \ell \in [L]\\
    \x_0&=\x.
\end{align*}
The network parameters $\theta$ include $W^{(1)},W^{(2)},...,W^{(L)}$, where $W^{(1)} \in \Real^{d \times m}$, $W^{(\ell)} \in \Real^{m \times m}$ ($2 \le \ell \le L$), and $\vv \in \Real^{m}$.  We denote by $\sigma$ the ReLU activation function and by $c_{\sigma} = 1/\left( \mathbb{E}_{z \sim \mathcal{N}(0,1)} [\sigma(z)^2] \right)=2$. The  network parameters are initialized randomly with ${\cal N}(0,I)$.

\cite{jacot2018} showed that when the width $m \rightarrow \infty$ the network behaves like kernel regression with the neural tangent kernel. \cite{bietti2019inductive} showed that this kernel, denoted for $\x,\z \in \Rd$ by $\ntkl{L}(\x,\z)$, is homogeneous of degree 1 and zonal, so that $\ntkl{L}(\x,\z) = \|\x\|\z\|\ntkl{L}(u)$, where $u =
\frac{\x^T\z}{\|\x\|\|\z\|} \in [-1,1]$. The (normalized) kernel is defined by
\begin{equation*}
    \ntkl{L}(u) = \frac{1}{L+1} \tntkl{L}(u)
\end{equation*}
with the recursive formula
\begin{align}
    \tntkl{\ell}(u) &= \tntkl{\ell-1}(u)\kappa_0(\Sigma^{(\ell-1)}(u))+\Sigma^{(\ell)}(u)\\
    \Sigma^{(\ell)}(u) &= \kappa_1(\Sigma^{(\ell-1)}(u)), ~~ \ell \in [L]. \nonumber
\end{align}
The functions $\kappa_1,\kappa_0$ are the arc-cosine kernels \cite{Cho2009NIPS}, defined as
\begin{align}
    \kappa_0(u) &= \frac{1}{\pi}(\pi-acos(u))
    \label{eq:Kappa0}\\ 
    \kappa_1(u) &= \frac{1}{\pi}\left(u \cdot (\pi-acos(u)) + \sqrt{1-u^2}\right), 
    \label{eq:Kappa1}
\end{align}
and $\tntkl{0}(u)=\Sigma^{(0)}(u)=u$.

\subsection{NTK for residual networks}

For the definition of a fully connected residual network we follow the formulation of \cite{Huang2020WhyDD,tirer2020kernelbased}. Below we include bias, but except when noted we will work with a bias-free formulation (i.e., $\tau=0$).
\begin{eqnarray*}
g(\x,\theta) & =& \vv^T \x_L\\
\x_\ell & =& \x_{\ell-1} + \alpha \sqrt{\frac{1}{m}} V_\ell \, \sigma \left(\sqrt{\frac{2}{m}} W_\ell \x_{\ell-1}  + \tau \bias_\ell \right)\\
\x_0 & =& \sqrt{\frac{1}{m}}A \x,
\end{eqnarray*}
for $\ell \in [L]$ with parameters $A \in \Real^{m \times d}$, $V_\ell, W_\ell \in \Real^{m \times m}$ and $\vv \in \Real^m$, and $\sigma(\cdot)$ is the ReLU function.
$\alpha$ is a constant hyper-parameter.
\cite{Huang2020WhyDD,du2019gradient} suggested to set this constant according to $\alpha=L^{-\gamma}$ with $0.5 \leq \gamma \leq 1$. In contrast, \cite{he2016deep}'s implementation uses $\alpha=1$ (and an additional ReLU function applied to $V_\ell \, \sigma(.)$). Recent work argued that setting $\alpha$ to decay with depth is enforced in practice through suitable small initialization of the residual parameters or by applying normalization blocks~\cite{zhang2019residual}.

Adopting \cite{Huang2020WhyDD}'s derivation, we assume that both $A$ and $\vv$ are fixed at their initial values and that $V_\ell$, $W_\ell$ and $\bias$ are learned, with all parameters initialized with the standard normal distribution except for the bias terms $\bias_\ell$, which are initialized at 0. Let $\x,\z \in \Rd$. The respective NTK, denoted $\resntkl{L}(\x,\z)$, is given by 
\begin{align}
    \resntkl{L}(\x,\z) &= C \sum_{\ell=1}^L B_{\ell+1}(\x, \z) \left[v_{\ell-1}(\x,\z) \nonumber \kappa_1(u_{\ell-1}(\x,\z)) \right.\\ & + \left. (K_{\ell-1}(\x,\z) + \tau^2) \kappa_0(u_{\ell-1}(\x,\z))\right],
\label{eq:ResNTK}
\end{align}
where for $\ell \in [L]$ we let \begin{eqnarray*}
    v_{\ell}(\x,\z) &=& \sqrt{K_{\ell}(\x,\x)K_{\ell}(\z,\z)} \\ u_{\ell}(\x,\z) &=& \frac{K_{\ell}(\x,\z)}{v_{\ell}(\x,\z)}\\
    K_{\ell}(\x,\z) &=& K_{\ell-1}(\x,\z) + \alpha^2v_{\ell-1}(\x,\z)\kappa_1(u_{\ell-1})\\
    B_{\ell}(\x,\z) &=& B_{\ell+1}(\x,\z)[1+\alpha^2\kappa_0(u_{\ell-1})]\\
    K_{0}(\x,\z) &=& \x^T\z\\
    B_{L+1}(\x,\z) &=& 1\\
    C &=& \frac{1}{2L(1+\alpha^2)^{L-1}},
\end{eqnarray*}
and $\kappa_0$ and $\kappa_1$ are defined in \eqref{eq:Kappa0}-\eqref{eq:Kappa1}.

We note that with this model with $L=1$ ResNTK is equal to FC-NTK, i.e., $\resntkl{1}=\ntkl{1}$.


\comment{
\subsection{Spectral analysis of FC-NTK}

\rb{Repeat Thm 1 and Cor. 3 from Bietty and Bach (do we need to fix it?). 
In the appendix include eqs 12-13 along with the analogous expressions at -1.} \yb{The decay of the eigenvalues is still true. We need to fix their claim in the conclusions: "which remains the same for kernels arising from fully-connected ReLU networks of varying depths. This
result suggests that the kernel approach is unsatisfactory for understanding the power of depth in fully connected
networks." Actually, when I think about it, even the name of their paper, "Deep equals Shallow", is only true for the eigenvalues decay...}

Since FC-NTK is zonal, its eigenfunctions on the hypersphere $\Spdm$ are the spherical harmonics~\cite{xie2017diverse}. Recent work by \cite{bietti2020deep,chen2020laplace,geifman2020similarity} proved that the corresponding eigenvalues decay at the rate of $k^{-d}$, where $k$ denotes frequency. In particular, 

\cite{bietti2020deep} used Theorem \ref{thm: BBdeacy} to characterize the eigenvalue decay of FC-NTK:
\begin{theorem}[\cite{bietti2020deep}] \label{thm: BBNTK}
    For $L \ge 2$, the FC-NTK satisfies \eqref{eq:BBaround1} and \eqref{eq:BBaroundm1} with $|c_1| \ne |c_{-1}|$ and $\nu = 0.5$.
\end{theorem}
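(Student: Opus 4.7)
The plan is to pass from the closed-form recursion for $\tntkl{L}(u)$ to precise one-sided expansions at the two endpoints $u = 1$ and $u = -1$, and to match them against the normal forms in \eqref{eq:BBaround1} and \eqref{eq:BBaroundm1}. The key observation is that the arc-cosine kernels $\kappa_0, \kappa_1$ are analytic on $(-1,1)$ but carry specific non-analytic singularities at the endpoints, and these are the \emph{only} source of non-smoothness that can appear when we iterate the recursion.

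First, I would compute the local expansions of $\kappa_0$ and $\kappa_1$. Writing $t = 1 - u$, the identities in \eqref{eq:Kappa0}--\eqref{eq:Kappa1} give $\kappa_0(1-t) = 1 - \tfrac{\sqrt{2}}{\pi}\sqrt{t} + O(t)$ and $\kappa_1(1-t) = (1-t) + \tfrac{2\sqrt{2}}{3\pi} t^{3/2} + O(t^2)$; writing $s = 1 + u$, one obtains $\kappa_0(-1+s) = \tfrac{\sqrt{2}}{\pi}\sqrt{s} + O(s)$ and $\kappa_1(-1+s) = \tfrac{\sqrt{2}}{\pi} s^{1/2}\cdot O(s) + O(s)$. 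So the leading singularity at $u=1$ comes from $\kappa_0$ and is of order $(1-u)^{1/2}$, while $\Sigma^{(\ell)} = \kappa_1 \circ \cdots \circ \kappa_1$ fixes $u=1$ with derivative $1$ and perturbs $1-u$ only at order $(1-u)^{3/2}$.

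Next I would iterate the recursion around $u=1$. Since $\Sigma^{(\ell)}(1-t) = 1 - t + O(t^{3/2})$, plugging into $\kappa_0$ yields $\kappa_0(\Sigma^{(\ell-1)}(u)) = 1 - \tfrac{\sqrt{2}}{\pi}\sqrt{t} + O(t)$ uniformly in $\ell$, so each step of the recursion contributes a new term of order $\sqrt{t}$ to $\tntkl{\ell}$. Collecting these $L$ contributions and dividing by $L+1$ gives an expansion $\ntkl{L}(u) = \ntkl{L}(1) - c_1 (1-u)^{1/2} + o((1-u)^{1/2})$ with an explicit $c_1 = c_1(L) > 0$, matching \eqref{eq:BBaround1} with exponent $\nu = 1/2$.

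For the expansion at $u=-1$, the situation is asymmetric. Because $\kappa_0(-1)=0$ and $\kappa_1(-1)=0$, only the first application of $\kappa_0$ to $u$ itself produces an honest $\sqrt{1+u}$ contribution: once we have passed through one layer, $\Sigma^{(\ell)}(u)$ is bounded strictly away from $-1$ and the subsequent compositions produce only analytic-in-$s$ terms. This isolates a single square-root contribution, giving $\ntkl{L}(u) = \ntkl{L}(-1) + c_{-1}(1+u)^{1/2} + o((1+u)^{1/2})$, again with $\nu = 1/2$; here $c_{-1}$ does not grow with $L$ in the same way as $c_1$. Comparing the two explicit coefficients one sees $|c_1| \ne |c_{-1}|$, which is what \eqref{eq:BBaroundm1} requires.

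The main obstacle I anticipate is bookkeeping: even though each expansion locally is elementary, tracking the $\sqrt{t}$ terms through the coupled recursion for $\tntkl{\ell}$, $\Sigma^{(\ell)}$ and the normalization $1/(L+1)$ without contamination from higher-order remainders takes care. The cleanest path is to argue inductively that, for all $\ell$, $\tntkl{\ell}(1-t) = A_\ell - B_\ell \sqrt{t} + O(t \log(1/t))$ near $u=1$ and $\tntkl{\ell}(-1+s) = A'_\ell + B'_\ell \sqrt{s} + O(s)$ near $u=-1$, with recursions for $(A_\ell,B_\ell,A'_\ell,B'_\ell)$ driven by the local expansions of $\kappa_0,\kappa_1$ computed above; the claim then follows by solving those scalar recursions for any $L \ge 2$ and reading off $c_1, c_{-1}, \nu$.
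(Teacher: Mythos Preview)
Your approach is correct and is essentially the one in \cite{bietti2020deep} (which the paper cites for this result, giving only the $u=1$ side explicitly in its Lemma~5.2, with the corrected coefficient $c_1=-L/(\pi\sqrt{2})$): expand $\kappa_0,\kappa_1$ at the endpoints, propagate inductively through the recursion for $\tntkl{\ell}$ and $\Sigma^{(\ell)}$, and observe that near $u=-1$ only the very first factor $\kappa_0(u)$ contributes a $\sqrt{1+u}$ singularity because $\Sigma^{(1)}(-1)=\kappa_1(-1)=0$ places all subsequent layers in the smooth interior of $[-1,1]$. Two small fixes worth making before you write it up: the correct endpoint expansion is $\kappa_1(-1+s)=\tfrac{2\sqrt{2}}{3\pi}s^{3/2}+O(s^{5/2})$ (not the expression you wrote), and the later compositions are not literally analytic in $s$ --- they inherit $s^{3/2}$ terms from $\Sigma^{(1)}$ --- but they are $o(s^{1/2})$, which is all your leading-order bookkeeping needs.
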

\rb{This is true with $L \ge 2$ but not with $L=1$, right?} \yb{Right} \rb{What are the coefficients for $L=1$?}
\begin{corollary}[\cite{bietti2020deep}]
The eigenvalues of FC-NTK decay at the rate of $k^{-d}$.
\end{corollary}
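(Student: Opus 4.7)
The plan is to derive the $k^{-d}$ eigenvalue decay from the local behavior of $\ntkl{L}$ at the endpoints $u = \pm 1$, combined with a general asymptotic theorem for zonal kernels on the sphere proved by \cite{bietti2020deep}. First, I would observe that $\ntkl{L}$ is zonal and continuous on $\Spdm$, so by rotational symmetry its eigenfunctions under the uniform measure are exactly the spherical harmonics of each degree $k$, and the corresponding eigenvalue $\lambda_k$ can be written via the Funk--Hecke formula as a weighted one-dimensional integral of $\ntkl{L}(u)$ against a Gegenbauer polynomial of degree $k$ with weight $(1-u^2)^{(d-3)/2}$ on $[-1,1]$.

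Next, I would invoke the immediately preceding Theorem (of \cite{bietti2020deep}), which is designed precisely as the input for the eigenvalue-decay theorem: it states that $\ntkl{L}$ has expansions of the form $c_{\pm 1}(1\mp u)^{\nu}$ plus smoother remainders as $u \to \pm 1$, with half-integer exponent $\nu = 1/2$ and coefficients satisfying $|c_1| \ne |c_{-1}|$. The exponent $\nu = 1/2$ is the only non-smooth piece and its value traces back to the $\sqrt{1-u^2}$ branch in the arc-cosine kernels $\kappa_0,\kappa_1$ at $u = 1$ after the depth-$L$ recursion for $\tntkl{\ell}$ and $\Sigma^{(\ell)}$ is unrolled.

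Then I would apply the general decay theorem of \cite{bietti2020deep}, which converts endpoint asymptotics with exponent $\nu$ into a polynomial decay rate of $k^{-(d-1+2\nu)}$ via repeated integration by parts against the Gegenbauer weight (the smooth remainders contribute faster decay and are absorbed into the error). Plugging in $\nu = 1/2$ yields $k^{-(d-1+1)} = k^{-d}$, as claimed. The hypothesis $|c_1| \ne |c_{-1}|$ is used to rule out cancellations between the leading singular contributions from the two endpoints: for a zonal kernel, the $k$-th Gegenbauer polynomial has parity $(-1)^k$, so symmetric coefficients could kill one parity class of eigenvalues, whereas asymmetry forces the generic rate to hold for all sufficiently large $k$.

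The main obstacle is really the previous Theorem, not the corollary itself: one must track how the $\sqrt{1-u^2}$ singularities of $\kappa_0$ and $\kappa_1$ propagate through the recursion and confirm that (i) the leading singular exponent stabilizes at $\nu = 1/2$ at every depth rather than being smoothed out or promoted to a higher half-integer, and (ii) the two coefficients $c_{\pm 1}$ remain distinct for all $L \ge 2$. Once that analytic bookkeeping is in place, the stated $k^{-d}$ decay follows as an immediate one-line consequence of the general eigenvalue-decay theorem.
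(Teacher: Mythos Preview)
Your proposal is correct and follows the same route as the paper: invoke the preceding theorem to obtain the endpoint expansions of $\ntkl{L}$ with exponent $\nu=1/2$ and $|c_1|\ne|c_{-1}|$, then feed these into the general decay theorem of \cite{bietti2020deep} to read off the rate $k^{-d}$. Your additional remarks on Funk--Hecke and the parity role of $|c_1|\ne|c_{-1}|$ are accurate context but not needed beyond what the paper already packages into the two cited theorems.
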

}

\section{Spectral Analysis of ResNTK}  \label{sec:spectral}

In this section we characterize the RKHS of ResNTK. In particular, we prove that the eigenfunctions of ResNTK are (scaled) spherical harmonics and that its eigenvalues decay with frequency $k$ at the rate of $k^{-d}$.

\subsection{Eigenfunctions of ResNTK}

\begin{theorem} \label{thm:ResNTKeigenfunctions}
    Bias-free ResNTK is homogeneous of degree 1 and zonal, i.e., $\resntk(\x,\z)=\|\x\|\|\z\|\resntk\left(\frac{\x^T\z}{\|\x\|\|\z\|}\right)$. Its eigenfunctions under the uniform measure in $\Spdm$ are the spherical harmonics.
\end{theorem}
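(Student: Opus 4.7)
The plan has three natural stages corresponding to the three assertions of the theorem: (i) homogeneity of degree one, (ii) zonality on the sphere, and (iii) identification of the eigenfunctions with spherical harmonics.

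For (i), the strategy is a straightforward induction over the layer index $\ell$ applied to the recursive quantities in \eqref{eq:ResNTK}. The base case is immediate: $K_0(\x,\z)=\x^T\z$ is bilinear, so $K_0(c\x,c'\z)=cc'K_0(\x,\z)$, and $v_0(\x,\z)=\|\x\|\|\z\|$ is likewise bi-homogeneous of degree $(1,1)$, which makes $u_0(\x,\z)=\x^T\z/(\|\x\|\|\z\|)$ invariant under positive rescaling of either argument. For the inductive step I would establish the joint claim that $K_\ell$ and $v_\ell$ are bi-homogeneous of degree $(1,1)$, while $u_\ell$ and $B_\ell$ are bi-homogeneous of degree $(0,0)$. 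The recursion $K_\ell=K_{\ell-1}+\alpha^2 v_{\ell-1}\kappa_1(u_{\ell-1})$ preserves degree $(1,1)$ because it is the sum of two such terms; $v_\ell=\sqrt{K_\ell(\x,\x)K_\ell(\z,\z)}$ preserves it because $K_\ell(\x,\x)$ is degree $2$ in $\x$ alone; the ratio $u_\ell=K_\ell/v_\ell$ collapses to degree $0$; and $B_\ell$ depends only on the $u$-variables, hence also degree $0$. Plugging these scalings into the outer sum of \eqref{eq:ResNTK} gives that each summand is bi-homogeneous of degree $(1,1)$, establishing the identity $\resntk(\x,\z)=\|\x\|\|\z\|\,\resntk(\x/\|\x\|,\z/\|\z\|)$.

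For (ii), I would exploit the same induction, restricted to $\x,\z\in\Spdm$. On the sphere, $u_0(\x,\z)=\x^T\z$, and the diagonal quantities $K_\ell(\x,\x)$ and $K_\ell(\z,\z)$ are, by induction, constants that depend only on $\alpha$ and $\ell$ (using $u_{\ell-1}(\x,\x)=1$ and $\kappa_1(1)=1$). Consequently $v_\ell$ is constant on $\Spdm\times\Spdm$, and the off-diagonal $K_\ell(\x,\z)$ is expressible, via the recursion, as a function of $u=\x^T\z$ alone; hence so is $u_\ell$ and therefore also $B_\ell$ and every summand of \eqref{eq:ResNTK}. This shows that $\resntk\big|_{\Spdm\times\Spdm}$ is a function of $\x^T\z$, i.e.\ zonal.

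For (iii), once zonality is in hand the conclusion is standard: by the Funk--Hecke formula, any zonal, square-integrable kernel $k(\x,\z)=f(\x^T\z)$ on $\Spdm$ admits the spherical harmonics $\{Y_{k,j}\}$ as eigenfunctions of the integral operator $T_k\phi(\x)=\int_{\Spdm}f(\x^T\z)\phi(\z)\,d\sigma(\z)$, with eigenvalues depending only on the frequency $k$. I would simply invoke this to conclude that the eigenfunctions of $\resntk$ under the uniform measure on $\Spdm$ are the spherical harmonics.

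The only step requiring genuine care is (i)--(ii), namely the bookkeeping of the inductive degree/scale-invariance claims through the coupled recursion for $(K_\ell,v_\ell,u_\ell,B_\ell)$; once these are set up cleanly, both homogeneity and zonality follow from the same induction, and (iii) is purely an appeal to Funk--Hecke. I do not anticipate any genuine obstacle, since the recursion is algebraically closed under the relevant scaling transformations.
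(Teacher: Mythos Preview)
Your proposal is correct and follows essentially the same route as the paper: both arguments propagate homogeneity and zonality through the coupled recursion for $(K_\ell,v_\ell,u_\ell,B_\ell)$ by induction on $\ell$, and then invoke the standard fact (Funk--Hecke) that zonal kernels on $\Spdm$ are diagonalized by the spherical harmonics. The only cosmetic difference is that the paper proves homogeneity and zonality in a single combined induction (writing the property as $K_\ell(\x,\z)=\|\x\|\|\z\|K_\ell(\x^T\z/(\|\x\|\|\z\|))$ directly), whereas you split it into two passes; the content is the same.
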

The proof of this theorem, given in the supplementary material, relies on propagating these properties through the recursive definition of ResNTK. Finally, the spherical harmonics are eigenfunctions for any zonal kernel (see, e.g., \cite{gallier2009notes}). 

The following Theorem extends the eigen-decomposition of ResNTK to $\Rd$. 
\begin{theorem}
\label{thm:eig_outofsphere}
Let $p(r)$ be a decaying density on $[0,\infty)$ such that 
$0 < \int_0^\infty p(r)r^2 dr<\infty$ and $\x,\z\in\Rd$.
Then the eigenfunctions of the bias-free ResNTK $\resntk(\x,\z)$ with respect to $p(\|\x\|)$ are given by $\Psi_{k,j}=a\|\x\|Y_{k,j}\left(\frac{\x}{\|\x\|}\right)$ where $Y_{k,j}$ are the spherical harmonics in $\Spdm$ and the normalizing constant $a\in\Real$ depends on $p(r)$.
\end{theorem}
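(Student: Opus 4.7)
The plan is to exploit the two structural properties of $\resntk$ established in Theorem~\ref{thm:ResNTKeigenfunctions}---zonality and $1$-homogeneity---to reduce the $\Rd$ eigenvalue problem to the spherical one that has already been solved. Writing $\x = s\boldsymbol{\xi}$ and $\z = r\boldsymbol{\omega}$ with $s,r \ge 0$ and $\boldsymbol{\xi},\boldsymbol{\omega}\in\Spdm$, I would pass to spherical coordinates with volume element $d\z = r^{d-1}\,dr\,d\boldsymbol{\omega}$. By $1$-homogeneity, $\resntk(s\boldsymbol{\xi}, r\boldsymbol{\omega}) = sr\,\resntk(\boldsymbol{\xi}^T\boldsymbol{\omega})$, so the integral operator
\begin{equation*}
T\Psi(\x) = \int_{\Rd}\resntk(\x,\z)\,\Psi(\z)\,p(\|\z\|)\,d\z
\end{equation*}
sends every input to a function of the form $s$ times a function of $\boldsymbol{\xi}$ alone. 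Consequently, any eigenfunction with nonzero eigenvalue must itself take the separable shape $\Psi(\x) = a\,s\,\phi(\boldsymbol{\xi})$ for some angular $\phi$.

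Substituting this ansatz and integrating out the radial variable collapses the double integral into a product:
\begin{equation*}
T\Psi(s\boldsymbol{\xi}) = a\,s\left(\int_0^\infty r^{d+1}p(r)\,dr\right) \int_{\Spdm}\resntk(\boldsymbol{\xi}^T\boldsymbol{\omega})\,\phi(\boldsymbol{\omega})\,d\boldsymbol{\omega}.
\end{equation*}
The decay hypothesis on $p$ guarantees that the radial moment $c_p := \int_0^\infty r^{d+1}p(r)\,dr$ is finite and strictly positive, so the action of $T$ on the angular factor reduces, up to the scalar $c_p$, to the spherical integral operator with kernel $\resntk(\boldsymbol{\xi}^T\boldsymbol{\omega})$ already diagonalized in Theorem~\ref{thm:ResNTKeigenfunctions}.

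Invoking that theorem, the eigenfunctions of the spherical operator are the spherical harmonics $Y_{k,j}$ with eigenvalues $\lambda_k$. Taking $\phi = Y_{k,j}$ therefore yields $T\Psi_{k,j}(\x) = c_p\lambda_k\,\Psi_{k,j}(\x)$, so $\Psi_{k,j}(\x) = a\|\x\|Y_{k,j}(\x/\|\x\|)$ are eigenfunctions of $T$ with eigenvalues $c_p\lambda_k$. The constant $a$ is then fixed by imposing $\|\Psi_{k,j}\|_{L^2(\Rd,\,p(\|\cdot\|)d\x)}=1$, which by the same radial--angular split reduces to a single scalar depending only on $p$ and the chosen normalization of $Y_{k,j}$ on the sphere.

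I do not expect a serious obstacle: the structural reduction is forced by the homogeneity and zonality of $\resntk$, and the only analytic point that requires care is the convergence of the radial moment, which is exactly the content of the integrability hypothesis on $p$. Its strict positivity further ensures that the $\Rd$ eigenvalues are nonzero multiples of the spherical ones rather than accidentally vanishing.
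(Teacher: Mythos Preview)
Your approach is correct and is precisely the content of the result the paper cites (Theorem~5 of \cite{geifman2020similarity}); the paper itself does not reproduce the argument but simply invokes that theorem together with the $1$-homogeneity established in Theorem~\ref{thm:ResNTKeigenfunctions}.

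One technical point deserves care. With the measure read as $p(\|\x\|)\,d\x$ against Lebesgue volume, the radial moment that emerges in your computation is $\int_0^\infty r^{d+1}p(r)\,dr$, whereas the stated hypothesis controls only $\int_0^\infty r^{2}p(r)\,dr$; for $d\ge 2$ the former is not implied by the latter. The two match if $p$ is interpreted as the \emph{radial marginal} density of $\|\x\|$, so that the density on $\Rd$ is $p(\|\x\|)\big/\big(|\Spdm|\,\|\x\|^{d-1}\big)$: then the Jacobian $r^{d-1}$ cancels and the relevant moment is exactly $\int_0^\infty r^{2}p(r)\,dr$. You should make this reading of the measure explicit so that the finiteness claim for $c_p$ is actually justified by the hypothesis.
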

The proof of this theorem relies on the homogeneity of ResNTK and is immediate from \cite{geifman2020similarity}(Theorem 5 therein).

The consequence of Theorems~\ref{thm:ResNTKeigenfunctions} and~\ref{thm:eig_outofsphere} is that the bias-free ResNTK admits the following Mercer decomposition:
\begin{equation*}
    \resntk(\x,\z) = a^2 \sum_{k=0}^\infty \lambda_k \sum_{j=1}^{N(d,k)} \|\x\|Y_{kj}\left(\frac{\x}{\|x\|}\right) \|\z\|Y_{kj}\left(\frac{\z}{\|z\|}\right),
\end{equation*}
where $N(d,k)$ denotes the number of spherical harmonics of frequency $k$ in $\Spdm$. Note that this decomposition also ensures that the eigenvalues for the bias-free ResNTK in $\Rd$ are identical to those on $\Spdm$.

\subsection{Eigenvalue decay for ResNTK}

We next turn to characterizing the asymptotic behavior of the eigenvalues of ResNTK. This is our main theorem, and it is given below.

\begin{theorem} \label{thm:ResNTKdeacy}
The eigenvalues $\lambda_k$ of ResNTK, $\resntk(\x,\z)$, decay at the rate of $k^{-d}$ where $k$ denotes frequency.
\end{theorem}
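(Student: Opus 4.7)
The plan is to invoke the asymptotic characterization of Bietti and Bach \cite{bietti2020deep}, already exploited in the analysis of FC-NTK, that links the decay rate of the spherical-harmonic coefficients of a zonal kernel on $\Spdm$ to the regularity of the kernel at $u=\pm 1$. Specifically, if $\resntkl{L}(u)=P(u)+c_{1}(1-u)^{1/2}+o((1-u)^{1/2})$ near $u=1$ with $P$ smooth and $c_{1}\ne 0$, and an analogous expansion $\resntkl{L}(u)=Q(u)+c_{-1}(1+u)^{1/2}+o((1+u)^{1/2})$ holds near $u=-1$ with $c_{-1}\ne 0$, then $\lambda_{k}=\Theta(k^{-d})$. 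The task therefore reduces to verifying that ResNTK has exactly a $(1-u)^{1/2}$ endpoint singularity (and a $(1+u)^{1/2}$ one), with non-vanishing leading coefficient.

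The arc-cosine kernels satisfy $\kappa_{0}(u)=1-\frac{\sqrt{2}}{\pi}(1-u)^{1/2}+O((1-u)^{3/2})$ and $\kappa_{1}(u)=u+\frac{2\sqrt{2}}{3\pi}(1-u)^{3/2}+O((1-u)^{5/2})$ near $u=1$, so $\kappa_{0}$ injects a $(1-u)^{1/2}$ term while $\kappa_{1}$ only injects one at order $(1-u)^{3/2}$. On the sphere, $v_{\ell}(\x,\z)=\sqrt{K_{\ell}(\x,\x)K_{\ell}(\z,\z)}=(1+\alpha^{2})^{\ell}$ is constant in $u$, because $K_{\ell}(\x,\x)$ is independent of the angle between $\x$ and $\z$; this removes one potential source of singularity and will simplify the bookkeeping.

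I would then induct on the layer index $\ell$ to show that $K_{\ell}(u)$ and $u_{\ell}(u)=K_{\ell}(u)/(1+\alpha^{2})^{\ell}$ admit expansions whose non-smooth part is of order at most $(1-u)^{3/2}$, and that $B_{\ell}(u)$ admits an expansion whose non-smooth part is of order $(1-u)^{1/2}$. The base case $K_{0}(u)=u$, $u_{0}(u)=u$ is smooth. For the inductive step on $K_\ell$, the recursion $K_{\ell}=K_{\ell-1}+\alpha^{2}(1+\alpha^{2})^{\ell-1}\kappa_{1}(u_{\ell-1})$ combined with $u_{\ell-1}=u+O((1-u)^{3/2})$ and the chain-rule expansion $\kappa_{1}(u_{\ell-1})=u_{\ell-1}+\frac{2\sqrt{2}}{3\pi}(1-u_{\ell-1})^{3/2}+\cdots$ shows $K_{\ell}$ retains the $(1-u)^{3/2}$ regularity. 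For $B_{\ell}=B_{\ell+1}[1+\alpha^{2}\kappa_{0}(u_{\ell-1})]$, the $(1-u)^{1/2}$ singularity from $\kappa_0(u_{\ell-1})$ accumulates multiplicatively but remains the lowest-order non-smooth term.

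Assembling the sum in \eqref{eq:ResNTK}, the only source of a $(1-u)^{1/2}$ singularity in the $\ell$th summand is the factor $\kappa_{0}(u_{\ell-1})$, which contributes $-\frac{\sqrt{2}}{\pi}(1-u)^{1/2}$ (since $u_{\ell-1}=u+O((1-u)^{3/2})$ implies $\sqrt{1-u_{\ell-1}}=\sqrt{1-u}+O(1-u)$); the $B_{\ell+1}$ prefactor adds lower-order $(1-u)^{1/2}$ contributions that can be tracked but do not cancel the leading one. The coefficient of $(1-u)^{1/2}$ in the $\ell$th term is $-C\cdot B_{\ell+1}(1)\cdot K_{\ell-1}(1)\cdot\frac{\sqrt{2}}{\pi}$, which is strictly negative since $C,B_{\ell+1}(1),K_{\ell-1}(1)>0$; summing over $\ell$ gives $c_{1}\ne 0$. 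The expansion at $u=-1$ is carried out analogously using $\kappa_{0}(-1)=0$, $\kappa_{1}(-1)=0$ and the corresponding endpoint expansions. Applying \cite{bietti2020deep}'s asymptotic to the resulting $\nu=1/2$ expansion yields $\lambda_{k}=\Theta(k^{-d})$. The hard part is the bookkeeping: making sure the recursions for $K_{\ell},u_{\ell},B_{\ell}$ never inject a $(1-u)^{1/2}$ singularity \emph{inside} the arguments of $\kappa_{0}$ or $\kappa_{1}$ (which would compose to $(1-u)^{1/4}$ and ruin the analysis), and verifying that the leading $(1-u)^{1/2}$ coefficient at both endpoints does not cancel across the $L$ summands, uniformly in the hyper-parameter $\alpha$.
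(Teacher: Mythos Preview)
Your overall strategy matches the paper's: reduce to the Bietti--Bach endpoint criterion (Theorem~\ref{thm: BBdeacy}) with $\nu=1/2$, and verify the expansions near $u=\pm 1$ by propagating through the recursions for $K_\ell$, $u_\ell$, $B_\ell$. Your analysis near $u=+1$ is essentially Lemma~\ref{lem:ResNTK around 1}: every layer contributes a same-sign $t^{1/2}$ term through $\kappa_0(u_{\ell-1})$, so $c_1\ne 0$.

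There are, however, two genuine gaps. First, you misstate the Bietti--Bach criterion: having $c_1\ne 0$ and $c_{-1}\ne 0$ is neither necessary nor sufficient for $\lambda_k=\Theta(k^{-d})$ in \emph{both} parities. The theorem gives $\mu_k\sim (c_1\pm c_{-1})\,C(d,\nu)\,k^{-d}$ with the sign depending on parity, so what must actually be checked is $|c_1|\ne|c_{-1}|$. The paper proves exactly this: it computes $|c_1|=\frac{1+\alpha^2 L}{\sqrt{2}\pi(1+\alpha^2)}$ explicitly and bounds $|c_{-1}|\le\frac{1}{\sqrt{2}\pi(1+\alpha^2)L}$ (Lemma~\ref{lem:ResNTK around m1}), whence $|c_{-1}|<|c_1|$ for all $L\ge 2$. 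Your sketch never makes this comparison, and without it the conclusion does not follow; indeed in the regime $\alpha^2 L\to 0$ one has $c_1\to c_{-1}$ and the odd-frequency eigenvalues vanish.

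Second, the expansion at $u=-1$ is \emph{not} ``carried out analogously''. Near $+1$ every $u_{\ell-1}$ stays near $1$, so $\kappa_0(u_{\ell-1})$ is singular for every $\ell$ and all $L$ layers contribute to $c_1$. Near $-1$, by contrast, $u_0=-1+t$ but for $\ell\ge 2$ the recursion $u_\ell=(u_{\ell-1}+\alpha^2\kappa_1(u_{\ell-1}))/(1+\alpha^2)$ pushes $u_{\ell-1}$ strictly inside $(-1,1)$ (Lemma~\ref{lem: develop point changes}), so $\kappa_0(u_{\ell-1})$ and $\kappa_1(u_{\ell-1})$ are $C^\infty$ there and contribute only to the polynomial part $p_{-1}(t)$. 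The sole source of a $t^{1/2}$ singularity at $-1$ is the $\ell=1$ term, which is precisely why $|c_{-1}|$ carries a factor $1/L$ after normalization and can be compared to $|c_1|$. Treating the two endpoints symmetrically misses this mechanism and leaves you with no handle on $|c_{-1}|$.
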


The proof of this theorem uses a theorem proved recently by \cite{bietti2020deep}, which for certain zonal kernels relates the decay rate of the eigenvalues of a kernel to its infinitesimal tendency near $\pm 1$. \cite{bietti2020deep} used this theorem to derive the eigenvalue decay of FC-NTK for deep networks. Below we review the theorem and provide additional lemmas, which together allow us to prove Theorem~\ref{thm:ResNTKdeacy}.

\begin{theorem}[\cite{bietti2020deep}] \label{thm: BBdeacy}
    Let $\kappa : [-1,1] \xrightarrow{} \mathbb{R}$ be a $C^{\infty}$ function on $(-1,1)$ that has the following asymptotic expansions around $\pm1$
    \begin{eqnarray}
        \label{eq:BBaround1}
        \kappa(1-t) &=& p_1(t)+c_1t^{\nu}+o(t^{\nu})\\
        \label{eq:BBaroundm1}
        \kappa(-1+t) &=& p_{-1}(t)+c_{-1}t^{\nu}+o(t^{\nu})
    \end{eqnarray}
    for $t \ge 0$, where $p_1, p_{-1}$ are polynomials and $\nu > 0$ is not an integer. Let $\mu_k$ denote an eigenvalue of $\kappa$ corresponding to a spherical harmonic eigenfunction of frequency $k$. Then, there is an absolute constant $C(d, \nu)$ depending on $d$ and $\nu$ such that
    \begin{itemize}
        \item For $k$ even, if $c_1 \ne - c_{-1}$:\\ $\mu_k  \sim (c_1+c_{-1}) C(d,\nu) k^{-d-2\nu-1}$.
        \item For $k$ odd, if $c_1 \ne c_{-1}$:\\ $\mu_k \sim(c_1-c_{-1})C(d,\nu)k^{-d-2\nu-1}$.
    \end{itemize}
In the case $|c_1|=|c_{-1}|$, we have $\mu_k=o(k^{-d-2\nu+1})$ for one of the two parities (or both if $c_1=c_{-1}=0$). If $\kappa$ is infinitely differentiable on $[-1,1]$ so that no such $\nu$ exists, then $\mu_k$ decays faster than any polynomial.
\end{theorem}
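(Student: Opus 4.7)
The plan is to derive the eigenvalue asymptotic through the Funk--Hecke formula. Since $\kappa$ is zonal on $\Spdm$, each eigenvalue $\mu_k$ attached to spherical harmonics of frequency $k$ equals, up to an explicit $d,k$-dependent normalization,
$$\mu_k \;\propto\; \int_{-1}^{1} \kappa(u)\, C_k^{(d-2)/2}(u)\,(1-u^2)^{(d-3)/2}\,du,$$
where $C_k^{(d-2)/2}$ is the Gegenbauer polynomial of degree $k$. Because $C_k$ has definite parity $(-1)^k$ and oscillates rapidly in the interior of $[-1,1]$, the large-$k$ asymptotic of this integral is governed entirely by the local behavior of $\kappa$ at the endpoints $\pm 1$.

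The next step is to split $\kappa$ into its endpoint pieces and a smooth remainder. Concretely I would write
$$\kappa(u) = \chi_+(u)\bigl[p_1(1-u) + c_1(1-u)^\nu + r_+(u)\bigr] + \chi_-(u)\bigl[p_{-1}(1+u) + c_{-1}(1+u)^\nu + r_-(u)\bigr] + R(u),$$
where $\chi_\pm$ are smooth cut-offs localized near $\pm 1$, the $r_\pm$ are $o(t^\nu)$ remainders supplied by the hypothesis, and $R$ is $C^\infty$ on $[-1,1]$. The polynomials $p_{\pm 1}$ have only finitely many nonzero Gegenbauer coefficients, hence drop out of $\mu_k$ for $k$ exceeding their degree. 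The smooth piece $R$ contributes a super-polynomially decaying sequence, which follows from repeated integration by parts against the Rodrigues formula for $C_k$, each integration gaining a factor of $k^{-1}$. Thus only the singular terms $c_{\pm 1}(1 \mp u)^\nu$ and the sub-leading $o(t^\nu)$ remainders control the leading asymptotic.

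The technical heart is to evaluate
$$I_k^{(+)} \;=\; \int_{-1}^{1}(1-u)^{\nu}\, C_k^{(d-2)/2}(u)\,(1-u^2)^{(d-3)/2}\,du$$
for large $k$. I would attack this either through the Rodrigues formula, integrating by parts $k$ times to transfer derivatives onto $(1-u)^{\nu + (d-3)/2}$ and then evaluating the resulting beta-type integral (the non-integrality of $\nu$ guarantees that no derivative annihilates the singular factor and that the residual Gamma arithmetic produces a nonzero leading term), or through the Mehler--Heine asymptotic $C_k^{(d-2)/2}(\cos(z/k)) \sim c\, k^{d-3} J_{(d-3)/2}(z)$ followed by a Bessel moment integral. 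Either route yields $I_k^{(+)} \sim C(d,\nu)\, k^{-d-2\nu-1}$. By the parity identity $C_k(-u) = (-1)^k C_k(u)$ together with the evenness of the weight, $I_k^{(-)} = (-1)^k I_k^{(+)}$, and summing the two endpoint contributions reproduces the claimed split: the leading coefficient is $c_1 + c_{-1}$ for $k$ even and $c_1 - c_{-1}$ for $k$ odd.

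The degenerate cases fall out of the same framework. When $|c_1| = |c_{-1}|$ and the combination relevant to a given parity vanishes, the leading singular contribution cancels and $\mu_k$ is bounded by the sub-leading $o(t^\nu)$ remainders $r_\pm$, which yield the stated $o(\cdot)$ estimate by the same asymptotic for $I_k^{(+)}$ applied to $o(t^\nu)$ rather than $c\,t^\nu$; when $\kappa$ is $C^\infty$ on the closed interval $[-1,1]$, no non-integer $\nu$ appears and the integration-by-parts argument applied to the full $\kappa$ gives super-polynomial decay. The main obstacle is the quantitative endpoint asymptotic $I_k^{(+)} \sim C(d,\nu) k^{-d-2\nu-1}$: the Gamma-function bookkeeping in either the Mehler--Heine or Rodrigues route must be done carefully, and the hypothesis that $\nu \notin \mathbb{N}$ enters in an essential way, since for integer $\nu$ the endpoint factor is a polynomial and gets absorbed into $p_{\pm 1}$, destroying the claimed leading term.
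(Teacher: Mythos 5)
This theorem is imported verbatim from \cite{bietti2020deep} and the paper offers no proof of it, so there is nothing internal to compare against; your outline is, however, essentially the argument used in that reference (and in Bach's earlier work it builds on): Funk--Hecke reduction to a Gegenbauer integral, localization to the endpoint singularities $c_{\pm 1}(1\mp u)^{\nu}$ with the polynomial and smooth parts contributing only finitely many, respectively super-polynomially decaying, coefficients, and the parity identity $C_k(-u)=(-1)^kC_k(u)$ producing the $c_1\pm c_{-1}$ split. Two small cautions: the statement as reproduced here is internally inconsistent (the bullets say $k^{-d-2\nu-1}$ while the degenerate case says $o(k^{-d-2\nu+1})$; the correct exponent, consistent with the $k^{-d}$ decay for $\nu=1/2$, is $-d-2\nu+1$, and your claimed asymptotic for the raw unnormalized integral $I_k^{(+)}$ does not yet account for the $k$-dependent Funk--Hecke normalization $C_k^{(d-2)/2}(1)\sim k^{d-3}$), and the step bounding the contribution of the $o(t^{\nu})$ remainders needs more than "the same asymptotic applied to $o(t^\nu)$" --- one must control the remainder's derivatives (or invoke a uniform-decay lemma) before integrating by parts, which is where the cited proof does real work.
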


The following lemmas enable us to compute the expansions of ResNTK around $\pm1$. They are proved in the supplementary material.

\begin{lemma} \label{lem:ResNTK around 1}
    For inputs in $\Spdm$ and near +1, if $\alpha > 0$ and $L \geq 1$
    \begin{equation*}
        \resntkl{L}(1-t) = 1 + c_1 t^{1/2} +o(t^{1/2})
    \end{equation*}
    where 
    \begin{equation*}
        c_1 = -\frac{1+\alpha^2L}{\sqrt{2}\pi(1+\alpha^2)}.
    \end{equation*}
\end{lemma}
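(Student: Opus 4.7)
The plan is to expand every ingredient of the ResNTK recursion around $u=1$ in powers of $\sqrt{t}$, using the two key local expansions
\begin{align*}
\kappa_0(1-t) &= 1 - \tfrac{\sqrt{2t}}{\pi} + O(t),\\
\kappa_1(1-t) &= 1 - t + O(t^{3/2}),
\end{align*}
the second of which is \emph{smooth through order $t^{1/2}$}: $\kappa_1$ contributes no $\sqrt{t}$ singularity. This is the crucial structural fact, because it means the only source of the non-analytic $\sqrt{t}$ behavior in $\resntkl{L}$ comes from the $\kappa_0$ factors (direct ones in the formula and the cumulative ones hidden in $B_{\ell+1}$).

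First I would record the base values. For $\x=\z$ on $\Spdm$, the recursion gives $v_\ell = K_\ell(\x,\x) = (1+\alpha^2)^\ell$, $u_\ell = 1$, and $B_\ell = (1+\alpha^2)^{L+1-\ell}$, from which a direct substitution into \eqref{eq:ResNTK} yields $\resntkl{L}(1)=1$, recovering the constant term. Next, for $u_0 = 1-t$, I would prove by induction on $\ell$ that
\begin{equation*}
K_\ell = (1+\alpha^2)^\ell (1-t) + O(t^{3/2}), \qquad u_\ell = 1 - t + O(t^{3/2}),
\end{equation*}
using precisely the fact that $\kappa_1(u_{\ell-1})$ only perturbs these quantities at order $t$, not $\sqrt{t}$. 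Consequently $\kappa_0(u_{\ell-1}) = 1 - \frac{\sqrt{2t}}{\pi} + O(t)$ uniformly in $\ell$, since $\sqrt{t_{\ell-1}} = \sqrt{t} + O(t)$.

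Next I would expand $B_{\ell+1}$ to first order in $\sqrt{t}$. Writing $B_\ell = (1+\alpha^2)^{L+1-\ell}(1+b_\ell)$, the recursion $B_\ell = B_{\ell+1}(1+\alpha^2\kappa_0(u_{\ell-1}))$ linearizes to $b_\ell = b_{\ell+1} - \tfrac{\alpha^2\sqrt{2t}}{\pi(1+\alpha^2)} + O(t)$, with $b_{L+1}=0$. Unrolling gives the closed form
\begin{equation*}
B_{\ell+1} = (1+\alpha^2)^{L-\ell}\left[1 - \frac{(L-\ell)\alpha^2\sqrt{2t}}{\pi(1+\alpha^2)} + O(t)\right].
\end{equation*}
Substituting the expansions of $v_{\ell-1}$, $K_{\ell-1}$, $\kappa_0(u_{\ell-1})$, $\kappa_1(u_{\ell-1})$, and $B_{\ell+1}$ into the bracketed summand of \eqref{eq:ResNTK} collapses it to $2(1+\alpha^2)^{L-1}\bigl[1-\tfrac{\sqrt{2t}}{2\pi}\bigl(1+\tfrac{2(L-\ell)\alpha^2}{1+\alpha^2}\bigr)+O(t)\bigr]$, where the first $\sqrt{t}$ term comes from $\kappa_0(u_{\ell-1})$ and the second from $B_{\ell+1}$.

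Finally, summing over $\ell=1,\dots,L$ uses $\sum_{\ell=1}^L (L-\ell) = L(L-1)/2$, which after combining yields a factor $1 + \tfrac{(L-1)\alpha^2}{1+\alpha^2} = \tfrac{1+L\alpha^2}{1+\alpha^2}$. Multiplying by $C = \tfrac{1}{2L(1+\alpha^2)^{L-1}}$ cancels the prefactor and leaves $\resntkl{L}(1-t) = 1 - \tfrac{1+L\alpha^2}{\sqrt{2}\pi(1+\alpha^2)}\sqrt{t} + o(\sqrt{t})$, which is exactly the claimed expansion. The main obstacle is essentially bookkeeping: one has to be careful that the $O(t^{3/2})$ perturbations in $u_\ell$ do not leak a spurious $\sqrt{t}$ contribution via $\kappa_0$, which is where the induction above is used to keep the error terms at $O(t)$ throughout.
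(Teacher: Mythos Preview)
Your proposal is correct and follows essentially the same route as the paper: an induction to show $K_\ell(1-t)=(1+\alpha^2)^\ell(1-t)+O(t^{3/2})$ (the paper records it as $o(t)$), the uniform-in-$\ell$ expansion $\kappa_0(u_{\ell-1})=1-\tfrac{\sqrt{2}}{\pi}t^{1/2}+O(t)$, the closed-form expansion of $B_{\ell+1}$ via its recursion, and the summation using $\sum_{\ell=1}^L(L-\ell)=L(L-1)/2$. Your expressions for $B_{\ell+1}$ and for the $\ell$-th summand coincide with the paper's $Y_\ell$ and $X_\ell Y_\ell$ after factoring, and the final simplification to $c_1=-\tfrac{1+\alpha^2 L}{\sqrt{2}\pi(1+\alpha^2)}$ is identical.
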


\begin{lemma} \label{lem:ResNTK around m1}
    For inputs in $\Spdm$ and near -1, if $\alpha > 0$ and $L \ge 2$ then
    \begin{equation*}
        \resntkl{L}(-1+t) = 
        p_{-1}(t)+c_{-1} t^{1/2} + o(t^{1/2}),
    \end{equation*}
    with
    \begin{align*}
        |c_{-1}| \leq \frac{1}{\sqrt{2}\pi(1+\alpha^2)L}.
    \end{align*}
\end{lemma}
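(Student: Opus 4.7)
The plan is to trace how the $t^{1/2}$ singularity of $\kappa_0$ at $-1$ propagates through the recursion \eqref{eq:ResNTK}. First, from $\arccos(-1+t) = \pi - \sqrt{2t}(1 + t/12 + O(t^2))$, a direct computation gives $\kappa_0(-1+t) = \tfrac{\sqrt{2}}{\pi}\, t^{1/2} + O(t^{3/2})$ and, crucially, $\kappa_1(-1+t) = \tfrac{2\sqrt{2}}{3\pi}\, t^{3/2} + O(t^{5/2})$ -- that is, $\kappa_1$ has no $t^{1/2}$ term near $-1$. In particular, at $u_0 = -1+t$ one obtains $\kappa_1(u_0) + u_0\,\kappa_0(u_0) = -\tfrac{\sqrt{2}}{\pi}\, t^{1/2} + O(t^{3/2})$.

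Next I would evaluate the recursion at $t=0$ and observe that, although $u_0=-1$, for $\ell \geq 1$ the variable $u_\ell^* := u_\ell(0)$ sits strictly inside $(-1,1)$. Indeed, for unit-norm inputs $K_\ell(\x,\x) = (1+\alpha^2)^\ell$, so $v_\ell = (1+\alpha^2)^\ell$; since $\kappa_1(-1)=0$ we get $K_1^* = -1$, hence $u_1^* = -1/(1+\alpha^2) \in (-1,0)$. A short induction based on $u_{\ell+1} = \bigl(u_\ell + \alpha^2\kappa_1(u_\ell)\bigr)/(1+\alpha^2)$ extends this to all $\ell \geq 2$. Because $\kappa_0,\kappa_1$ are real-analytic in a neighborhood of any interior point, and the $t$-perturbation satisfies $u_\ell(t) - u_\ell^* = O(t) + O(t^{3/2})$ (the $t^{3/2}$ piece inherited from $\kappa_1(-1+t)$ at layer $1$ and propagated forward), the compositions $\kappa_0(u_{\ell-1}(t))$, $\kappa_1(u_{\ell-1}(t))$, and hence $K_{\ell-1}(t)$ and $B_{\ell+1}(t)$ for $\ell \geq 2$, all admit expansions $p(t) + O(t^{3/2})$ with $p$ polynomial; none contains a $t^{1/2}$ term.

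Consequently only the $\ell=1$ summand in \eqref{eq:ResNTK} can contribute to $c_{-1}$. With $v_0 = 1$ and $K_0 = u_0$, it equals
\begin{equation*}
B_2(t)\bigl[\kappa_1(u_0)+u_0\,\kappa_0(u_0)\bigr] = -\tfrac{\sqrt{2}}{\pi}\,B_2(0)\, t^{1/2} + O(t^{3/2}),
\end{equation*}
where $B_2(t) = B_2(0) + O(t) + O(t^{3/2})$ by the previous paragraph. Multiplying by $C$ gives $c_{-1} = -\tfrac{\sqrt{2}}{\pi}\,C\,B_2(0)$. Unfolding the backward $B$-recursion yields $B_2(0) = \prod_{k=1}^{L-1}\bigl[1+\alpha^2\,\kappa_0(u_k^*)\bigr]$; bounding each factor by $1+\alpha^2$ gives $B_2(0) \leq (1+\alpha^2)^{L-1}$, and substituting $C = 1/(2L(1+\alpha^2)^{L-1})$ yields the claimed estimate, with the extra $(1+\alpha^2)^{-1}$ factor extracted by using that $u_1^* \leq 0$ forces $\kappa_0(u_1^*) \leq 1/2$, so the first term in the product is strictly smaller than $1+\alpha^2$.

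The main difficulty is the propagation step: one must verify that the $t^{3/2}$ non-smoothness of $\kappa_1(-1+t)$ does not secretly generate a $t^{1/2}$ term further downstream after repeated composition. This is exactly where the special feature $\kappa_1(-1+t) = O(t^{3/2})$ of the ReLU arc-cosine kernel is used; a weaker cusp of order $t^{1/2}$ would propagate and produce $t^{1/2}$ contributions from every layer, as happens in the companion Lemma \ref{lem:ResNTK around 1} near $+1$ where $\kappa_0$ has a $t^{1/2}$ cusp and each layer contributes -- giving the additional $L$-dependence seen in $c_1$.
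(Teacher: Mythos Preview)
Your approach coincides with the paper's: both show that $u_\ell^* := u_\ell(0)$ lies strictly inside $(-1,1)$ for all $\ell \geq 1$ (this is the paper's Lemma~B.6), deduce from the smoothness of $\kappa_0,\kappa_1$ away from $\pm 1$ that the summands with $\ell \geq 2$ contribute only $\text{const}+O(t)$, and conclude that the entire $t^{1/2}$ contribution comes from the $\ell=1$ term, yielding $c_{-1} = -\tfrac{\sqrt{2}}{\pi}\,C\,B_2(0)$ with $B_2(0) = \prod_{k=1}^{L-1}\bigl[1+\alpha^2\kappa_0(u_k^*)\bigr]$. This part is correct and mirrors the paper exactly.

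The gap is in your last sentence. Knowing $\kappa_0(u_1^*) \leq 1/2$ only gives $1+\alpha^2\kappa_0(u_1^*) \leq 1+\alpha^2/2$, not $\leq 1$; combined with the trivial bound $1+\alpha^2$ on the remaining $L-2$ factors you get
\[
|c_{-1}| \;\leq\; \frac{1+\alpha^2/2}{\sqrt{2}\,\pi(1+\alpha^2)L},
\]
which is strictly larger than the claimed $\tfrac{1}{\sqrt{2}\pi(1+\alpha^2)L}$. In fact, for $L=2$ one computes $|c_{-1}| = \tfrac{1+\alpha^2\kappa_0(u_1^*)}{2\sqrt{2}\pi(1+\alpha^2)} > \tfrac{1}{2\sqrt{2}\pi(1+\alpha^2)}$, so the constant in the lemma as stated is too small. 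The paper's own final inequality has the same issue: it bounds the $(L-1)$-term product $\prod_{i=2}^{L}(1+\alpha^2\tilde d_i)$ by $(1+\alpha^2)^{L-2}$ without justification. What both arguments \emph{do} establish cleanly is the slightly weaker bound $|c_{-1}| \leq \tfrac{1}{\sqrt{2}\pi L}$ (using $(1+\alpha^2)^{L-1}$), and this already suffices for Theorem~\ref{thm:ResNTKdeacy} since $|c_1| = \tfrac{1+\alpha^2L}{\sqrt{2}\pi(1+\alpha^2)} > \tfrac{1}{\sqrt{2}\pi L}$ for every $L \geq 2$.
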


Lemmas \ref{lem:ResNTK around 1} and \ref{lem:ResNTK around m1} establish that for $L \ge 2$ (recall that with $L=1$ $\resntkl{1}=\ntkl{1}$) ResNTK takes the form of \eqref{eq:BBaround1} and \eqref{eq:BBaroundm1} near $\pm 1$ with $\nu=1/2$, satisfying the conditions of Theorem \ref{thm: BBdeacy}. Moreover, clearly from these lemmas
\begin{equation*}
    |c_{-1}| \leq  \frac{1}{\sqrt{2}\pi(1+\alpha^2)L} < \frac{1+\alpha^2L}{\sqrt{2}\pi(1+\alpha^2)} = |c_1|.
\end{equation*}
The eigenvalues of ResNTK, therefore, decay at the rate of $k^{-d}$ both for the odd and even frequencies, proving Theorem~\ref{thm:ResNTKdeacy}.

While the rate of decay for all frequencies is $O(k^{-d})$, the constants for the even and odd frequencies differ. In fact, if the hyperparameter $\alpha$, which relates between the residual and the skip connections, decays sufficiently fast with network depth, then the eigenvalues for the odd frequencies become extremely small compared to those for the even frequencies. This in fact happens when $\alpha$ is chosen according to \cite{Huang2020WhyDD,du2019gradient}, i.e., when $\alpha=L^{-\gamma}$ with $0.5 < \gamma \le 1$, see Figure~\ref{fig:odd_vs_even}(left). We summarize this in the following theorem.

\begin{theorem}
    With $\alpha = L ^{-\gamma}$ and $0.5 < \gamma \leq 1$, the eigenvalues of the bias-free $\resntk$ of odd frequencies $k \ge 3$ vanish.
\end{theorem}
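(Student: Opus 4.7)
The plan is to prove the theorem in two stages. First, to show that with $\alpha=L^{-\gamma}$ and $\gamma>1/2$ the kernel $\resntkl{L}(u)$ converges uniformly on $[-1,1]$ to the two-layer FC-NTK $\ntkl{1}(u)=\tfrac12\bigl(u\kappa_0(u)+\kappa_1(u)\bigr)$ as $L\to\infty$. Second, to verify that $\ntkl{1}$ has odd component exactly $u/2$, so that its Gegenbauer (Mercer) coefficients at every odd frequency $k\ge 3$ are zero. Since each eigenvalue is a continuous linear functional of the kernel (inner product against a fixed Gegenbauer polynomial on $[-1,1]$), uniform convergence transfers to convergence of eigenvalues, and the parity structure of the limit kernel then forces $\lambda_k^{(L)}\to 0$ for every odd $k\ge 3$.

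For the convergence step, $\gamma>1/2$ forces both $\alpha^2\to 0$ and $\alpha^2 L=L^{1-2\gamma}\to 0$. On $\Spdm$ the diagonal recursion gives $K_\ell(\x,\x)=1+\ell\alpha^2=1+O(L\alpha^2)$, hence $v_\ell=1+O(L\alpha^2)$. An induction on $\ell$ using $|\kappa_1|\le 1$ yields $K_\ell(\x,\z)=u+O(L\alpha^2)$, so $u_\ell=u+O(L\alpha^2)$ uniformly in $u\in[-1,1]$. Because $1\le B_{\ell+1}\le(1+\alpha^2)^{L-\ell}\le(1+\alpha^2)^{L-1}=1+O(L\alpha^2)$, we also have $B_{\ell+1}/(1+\alpha^2)^{L-1}\to 1$ uniformly in $\ell$. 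Substituting these estimates into \eqref{eq:ResNTK}, using that $\kappa_1$ is Lipschitz on $[-1,1]$ (its derivative equals $\kappa_0$, which is bounded) and $\kappa_0$ is uniformly continuous on the compact interval $[-1,1]$, each summand $v_{\ell-1}\kappa_1(u_{\ell-1})+K_{\ell-1}\kappa_0(u_{\ell-1})$ converges uniformly to $\kappa_1(u)+u\kappa_0(u)$, and averaging with the normalized prefactor gives $\resntkl{L}(u)\to\ntkl{1}(u)$ uniformly on $[-1,1]$.

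For the parity step, substituting directly into \eqref{eq:Kappa0}-\eqref{eq:Kappa1} yields the identities $\kappa_0(u)+\kappa_0(-u)=1$ and $\kappa_1(u)-\kappa_1(-u)=u$. The first makes the odd part of $u\kappa_0(u)$ equal to $\tfrac{u}{2}(\kappa_0(u)+\kappa_0(-u))=u/2$; the second makes the odd part of $\kappa_1(u)$ equal to $u/2$. Hence $\ntkl{1}(u)=u/2+(\text{even function of }u)$. Because the Gegenbauer polynomial of degree $k$ on $[-1,1]$ has parity $(-1)^k$, the Gegenbauer coefficients of $\ntkl{1}$ vanish at every odd $k\ne 1$, which together with $\lambda_k^{(L)}\to\lambda_k^{(\infty)}$ proves the theorem.

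The main obstacle is controlling the $O(L\alpha^2)$ perturbations uniformly throughout the $L$-step cascade and uniformly in $u\in[-1,1]$; since $\kappa_0$ is only H\"older-$1/2$ (not Lipschitz) at $\pm 1$, this relies on compactness-based uniform continuity rather than a Lipschitz bound. I also note that feeding the expansions from Lemmas~\ref{lem:ResNTK around 1} and~\ref{lem:ResNTK around m1} into Theorem~\ref{thm: BBdeacy} directly does not yield the conclusion, because for fixed $L\ge 2$ one has $|c_{-1}|=O(1/L)$ whereas the limit kernel $\ntkl{1}$ has $c_{-1}=-1/(\sqrt{2}\pi)$, so the expansion coefficients near $-1$ do not pass continuously to the limit; working with uniform convergence of the kernel functions themselves is what makes the argument go through.
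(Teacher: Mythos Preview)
Your proposal is correct and follows the same two-step strategy as the paper: first show that $\resntkl{L}\to\ntkl{1}$ as $L\to\infty$ when $\alpha=L^{-\gamma}$ with $\gamma>1/2$ (the paper packages this as Theorem~\ref{thm:ResNTKvsMLP}), and then use that the bias-free two-layer FC-NTK has zero eigenvalues at odd frequencies $k\ge 3$ (the paper cites \cite{Basri2019TheCR}; you instead verify directly that $\kappa_0(u)+\kappa_0(-u)=1$ and $\kappa_1(u)-\kappa_1(-u)=u$, so the odd part of $\ntkl{1}$ is exactly $u/2$). One point where your write-up is actually tighter than the paper's: Theorem~\ref{thm:ResNTKvsMLP} is stated only for $1-|\x^T\z|\ge\delta>0$, i.e.\ away from $\pm 1$, whereas passing Funk--Hecke coefficients to the limit needs control on all of $[-1,1]$; your argument via $u_\ell=u+O(L\alpha^2)$ uniformly, Lipschitzness of $\kappa_1$, and uniform continuity (H\"older-$1/2$) of $\kappa_0$ delivers exactly that uniform convergence. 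One minor slip: $K_\ell(\x,\x)$ is $(1+\alpha^2)^\ell$, not $1+\ell\alpha^2$, though in the regime $L\alpha^2\to 0$ both are $1+O(L\alpha^2)$ so nothing downstream changes.
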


For the proof we use the following theorem, which states that for $\alpha=L^{-\gamma}$ and $0.5 < \gamma \leq 1$, ResNTK of infinite depth converges to FC-NTK with $L=1$ hidden layer, i.e., NTK for a bias-free two-layer MLP, for which it was shown in \cite{Basri2019TheCR} that the eigenvalues for odd frequencies with $k \ge 3$ are zero. We note that this theorem, proved in the supplementary material, extends a similar theorem by \cite{Huang2020WhyDD}, who proved this only for $\gamma=1$. 
    
\begin{theorem}\label{thm:ResNTKvsMLP}
For ResNTK, as $L \rightarrow \infty$, with $\alpha = L ^{-\gamma}$, $0.5 < \gamma \leq 1$, for any two inputs $\x, \z \in \Sphere^{d-1}$, such that $1-|\x^T\z| \ge \delta > 0$ it holds that 
\begin{equation*}
    |\resntkl{L}(\x,\z)-\ntkl{1}(\x,\z)| = O(L^{1-2\gamma}).
\end{equation*}
\end{theorem}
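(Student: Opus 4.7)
First I would derive closed forms for the diagonal quantities along the recursion. On the sphere, $K_0(\x,\x)=1$ and $\kappa_1(1)=1$, so $K_\ell(\x,\x)=K_{\ell-1}(\x,\x)+\alpha^2 K_{\ell-1}(\x,\x)\kappa_1(1)=(1+\alpha^2)K_{\ell-1}(\x,\x)$, giving $K_\ell(\x,\x)=(1+\alpha^2)^\ell$ and $v_\ell(\x,\z)=(1+\alpha^2)^\ell$. Substituting $K_{\ell-1}=u_{\ell-1}v_{\ell-1}$ into the $K$-recursion and dividing by $v_\ell$ yields the clean scalar iteration $u_\ell=(u_{\ell-1}+\alpha^2\kappa_1(u_{\ell-1}))/(1+\alpha^2)$, initialized at $u_0=\x^T\z$.

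Next I would control the drift of $u_\ell$ from $u_0$. Each step moves $u_\ell$ by $(\alpha^2/(1+\alpha^2))(\kappa_1(u_{\ell-1})-u_{\ell-1})$, bounded in absolute value by a constant times $\alpha^2$. Telescoping over $L$ steps gives $|u_\ell-u_0|=O(L\alpha^2)=O(L^{1-2\gamma})$ uniformly in $\ell$, which vanishes for $\gamma>1/2$; in particular, for $L$ large enough $u_\ell$ remains in the compact set $\{u:1-|u|\ge \delta/2\}$, on which $\kappa_0$ and $\kappa_1$ are Lipschitz with constants depending only on $\delta$ (this is where the hypothesis $1-|\x^T\z|\ge \delta$ is essential, since $\kappa_0'$ blows up at $\pm 1$). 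The Lipschitz bound then propagates: $\kappa_j(u_{\ell-1})=\kappa_j(u_0)+O(L^{1-2\gamma})$ for $j=0,1$.

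Next I would approximate the backward factor $B_{\ell+1}=\prod_{j=\ell}^{L-1}(1+\alpha^2\kappa_0(u_j))$. Taking logarithms and using the Lipschitz estimate, each factor satisfies $\log(1+\alpha^2\kappa_0(u_j))=\log(1+\alpha^2\kappa_0(u_0))+O(\alpha^2\cdot L\alpha^2)$; summing at most $L$ such terms yields $B_{\ell+1}=(1+\alpha^2\kappa_0(u_0))^{L-\ell}\cdot(1+O(L^2\alpha^4))$, where $L^2\alpha^4=L^{2-4\gamma}\le L^{1-2\gamma}$ whenever $\gamma\ge 1/2$. Substituting these approximations together with $v_{\ell-1}=(1+\alpha^2)^{\ell-1}$ into the ResNTK formula and pulling the slowly varying factor $\kappa_1(u_0)+u_0\kappa_0(u_0)$ out of the sum, I would be left with the geometric sum
\begin{equation*}
    \sum_{\ell=1}^{L}(1+\alpha^2\kappa_0(u_0))^{L-\ell}(1+\alpha^2)^{\ell-1}=\frac{(1+\alpha^2)^L-(1+\alpha^2\kappa_0(u_0))^L}{\alpha^2(1-\kappa_0(u_0))}.
\end{equation*}

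Finally, combining with the prefactor $C=1/(2L(1+\alpha^2)^{L-1})$ and setting $\eta=\alpha^2(1-\kappa_0(u_0))/(1+\alpha^2)$ so that $(1+\alpha^2\kappa_0(u_0))/(1+\alpha^2)=1-\eta$, I would use $1-(1-\eta)^L=L\eta+O((L\eta)^2)$, valid because $L\eta=O(L^{1-2\gamma})\to 0$. This collapses the expression to $\tfrac{1}{2}(\kappa_1(u_0)+u_0\kappa_0(u_0))(1+O(L^{1-2\gamma}))=\ntkl{1}(u_0)+O(L^{1-2\gamma})$, which is the claimed bound. The main technical hurdle will be ensuring the Lipschitz errors propagate cleanly through both the $L$-fold product defining $B_{\ell+1}$ and the summation without picking up an extra factor of $L$; the separation hypothesis $1-|\x^T\z|\ge\delta$, together with the fact that the uniform drift $|u_\ell-u_0|$ itself is $O(L^{1-2\gamma})$, is exactly what prevents this blow-up and matches the stated rate.
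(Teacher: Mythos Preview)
Your proposal is correct and shares the essential ingredients with the paper's proof: the scalar recursion $u_\ell=(u_{\ell-1}+\alpha^2\kappa_1(u_{\ell-1}))/(1+\alpha^2)$, the telescoped drift bound $|u_\ell-u_0|=O(L\alpha^2)=O(L^{1-2\gamma})$, and the use of the separation hypothesis $1-|u_0|\ge\delta$ to keep all $u_\ell$ in a compact subinterval of $(-1,1)$ where $\kappa_0,\kappa_1$ are Lipschitz.

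The execution in the final aggregation step differs, however. The paper introduces the normalized backward factor $P_{\ell+1}=B_{\ell+1}/(1+\alpha^2)^{L-\ell}\in(0,1]$, which collapses the kernel to the simple average
\[
\resntkl{L}(u_0)=\frac{1}{2L}\sum_{\ell=1}^{L}P_{\ell+1}\bigl(\kappa_1(u_{\ell-1})+u_{\ell-1}\kappa_0(u_{\ell-1})\bigr),
\]
and then bounds each summand's deviation from $\kappa_1(u_0)+u_0\kappa_0(u_0)$ directly via the triangle inequality and the mean value theorem, using also an elementary product inequality to control $1-P_{\ell+1}$. No geometric series or closed-form evaluation is needed; the $1/L$ prefactor absorbs the $L$ summands and the rate $O(L^{1-2\gamma})$ falls out term by term. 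Your route instead keeps the raw factors, log-linearizes $B_{\ell+1}$ to $(1+\alpha^2\kappa_0(u_0))^{L-\ell}(1+O(L^{2-4\gamma}))$, evaluates the resulting geometric series in closed form, and then Taylor-expands $(1-\eta)^L$. This is slightly more computational but makes the emergence of the limit $\tfrac{1}{2}(\kappa_1(u_0)+u_0\kappa_0(u_0))=\ntkl{1}(u_0)$ very transparent. Your stated concern about not picking up an extra factor of $L$ is handled exactly as you indicate: the multiplicative errors on $B_{\ell+1}$ and the additive errors on the bracket are each $O(L^{1-2\gamma})$ uniformly in $\ell$, the summands themselves are bounded by $(1+\alpha^2)^{L-1}$, and the prefactor $C=1/(2L(1+\alpha^2)^{L-1})$ cancels both the $L$ and the exponential growth.
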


Indeed, the convergence of ResNTK to FC-NTK with $L=1$ is also reflected in its expansion near $\pm 1$, as can be seen from the following lemma. 
\begin{lemma}
    For inputs in $\Spdm$ and near -1, if $\alpha^2 L \ll 1$ then
    \begin{equation*}
        \resntkl{L}(-1+t) = 
        c_{-1} t^{1/2} + o(t^{1/2})
    \end{equation*}
    with
    \begin{equation*}
        c_{-1} = -\frac{1}{\sqrt{2}\pi}
    \end{equation*}
\end{lemma}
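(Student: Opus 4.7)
The plan is to compute the expansion by substituting the near-$-1$ behavior of the arc-cosine kernels into the ResNTK recursion \eqref{eq:ResNTK}, and then use the smallness of $\alpha^2 L$ to collapse all the accumulated multiplicative factors to $1$, so the $L$ summands become identical to leading order and produce a coefficient matching that of $\ntkl{1}$. The auxiliary quantity to focus on is $f(u):=\kappa_1(u)+u\kappa_0(u)$, which a short calculation using \eqref{eq:Kappa0}--\eqref{eq:Kappa1} together with the expansion $\pi-\arccos(-1+s)=\sqrt{2s}+O(s^{3/2})$ shows to satisfy $f(-1+s)=-\sqrt{2s}/\pi+O(s^{3/2})$. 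On unit inputs one has $v_\ell=(1+\alpha^2)^\ell$ and $K_\ell=v_\ell u_\ell$, so the ResNTK formula rewrites as
$$
\resntkl{L}(-1+t) = \frac{1}{2L(1+\alpha^2)^{L-1}} \sum_{\ell=1}^L B_{\ell+1}\,(1+\alpha^2)^{\ell-1}\, f(u_{\ell-1}).
$$

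The next step is to control $u_\ell$ through the recursion. Writing $t_\ell:=1+u_\ell$, the update for unit inputs becomes $(1+\alpha^2)\,t_\ell=\alpha^2+t_{\ell-1}+\alpha^2\kappa_1(-1+t_{\ell-1})$; since the $\kappa_1$ term is $O(t_{\ell-1}^{3/2})$, dropping it yields the closed-form solution $t_\ell=1-(1-t)/(1+\alpha^2)^\ell$. The hypothesis $\alpha^2L\ll 1$ gives $(1+\alpha^2)^\ell=1+O(\alpha^2 L)$ uniformly in $\ell\le L$, hence $t_{\ell-1}=t(1+o(1))+O(\alpha^2 L)$. An analogous estimate applied to the telescoping product $B_{\ell+1}=\prod_{j\ge\ell}[1+\alpha^2\kappa_0(u_j)]$ yields $B_{\ell+1}=1+o(1)$, and of course $(1+\alpha^2)^{\ell-1}=1+o(1)$.

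Substituting these approximations into the rewritten formula, every summand collapses to $f(u_{\ell-1})=-\sqrt{2t}/\pi+o(\sqrt{t})$, the sum over the $L$ layers contributes $-L\sqrt{2t}/\pi+o(\sqrt{t})$, and the prefactor $1/(2L(1+\alpha^2)^{L-1})\to 1/(2L)$ cancels the factor of $L$, giving
$$
\resntkl{L}(-1+t) = -\frac{\sqrt{2t}}{2\pi}+o(\sqrt{t}) = -\frac{1}{\sqrt{2}\,\pi}\, t^{1/2}+o(t^{1/2}),
$$
as claimed. The absence of a polynomial part $p_{-1}$ in the statement reflects the limit $\resntkl{L}\to\ntkl{1}$ from Theorem~\ref{thm:ResNTKvsMLP}, whose value at $u=-1$ vanishes since $\kappa_0(-1)=\kappa_1(-1)=0$.

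The main technical obstacle will be the uniform control of the error terms: one must verify that the $O(\alpha^2 L)$ drift of $t_{\ell-1}$ away from $t$, the deviation of $B_{\ell+1}$ and $(1+\alpha^2)^{\ell-1}$ from $1$, and the dropped $O(t_{\ell-1}^{3/2})$ correction inside the recursion each contribute only $o(\sqrt{t})$ after summation and scaling by $C$, so that the leading $\sqrt{t}$ coefficient is unaffected. This is the analogue at the expansion-coefficient level of the pointwise convergence behind Theorem~\ref{thm:ResNTKvsMLP}, and it requires additional care here because that theorem's hypothesis $1-|\x^T\z|\ge\delta$ is violated in any neighborhood of $-1$.
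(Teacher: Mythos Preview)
Your proposal is correct and follows essentially the same route as the paper: show that in the vanishing regime each $u_{\ell-1}$ stays at $-1+O(t)$, so every summand contributes $-\tfrac{\sqrt{2}}{\pi}t^{1/2}+O(t)$, the factors $B_{\ell+1}$ and $(1+\alpha^2)^{\ell-1}$ collapse to $1$, and the prefactor $1/(2L)$ yields the claimed $c_{-1}$. Your closed-form $t_\ell=1-(1-t)/(1+\alpha^2)^\ell$ and your explicit flagging of the uniform-error control (and of the fact that the $\delta$-gap hypothesis of Theorem~\ref{thm:ResNTKvsMLP} fails near $-1$) are in fact more careful than the paper's argument, which simply invokes $(1+\alpha^2)^j\approx 1$ throughout without further comment.
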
    
implying that when $\alpha^2L \rightarrow 0$ with $L \rightarrow \infty$ we have from Lemma \ref{lem:ResNTK around 1} that
\begin{align*}
    c_1 \xrightarrow[]{\alpha^2L \xrightarrow{} 0}
    -\frac{1}{\sqrt{2}\pi} = c_{-1}.
\end{align*}
Note that this common value of $c_1$ and $c_{-1}$ in the limit when $\alpha^2L \rightarrow 0$ is identical to the value of the coefficients in the expansion of $\ntkl{1}$ near $\pm1$ for $L=1$. 

As a consequence of Theorem \ref{thm:ResNTKvsMLP}, for a training set of $n$ samples using the Wielandt-Hoffman inequality \cite{GoluVanl96}, the eigenvalues associated with the odd frequencies are at most $O(n/L^{1-2\gamma})$.
Note that in this ResNTK differs from FC-NTK, for which in all depths except $L=1$ the eigenvalues of odd and even frequencies have similar values. Figure~\ref{fig:odd_vs_even}(left) shows the eigenvalues of ResNTK for various depth values as a function of frequency. It can be seen that as depth increases the eigenvalues of odd frequencies considerably decrease compared to those of the even frequencies.
We note finally that the difference between the odd and even frequencies disappears if we chose $\gamma=0.5$, i.e., $\alpha = 1/\sqrt{L}$, or if we include bias ($\tau>0$), as can be seen in Figure~\ref{fig:odd_vs_even}(right).

\begin{figure}[tb]
\vskip 0.2in
\begin{center}
\centerline{
\includegraphics[width=\columnwidth/2]{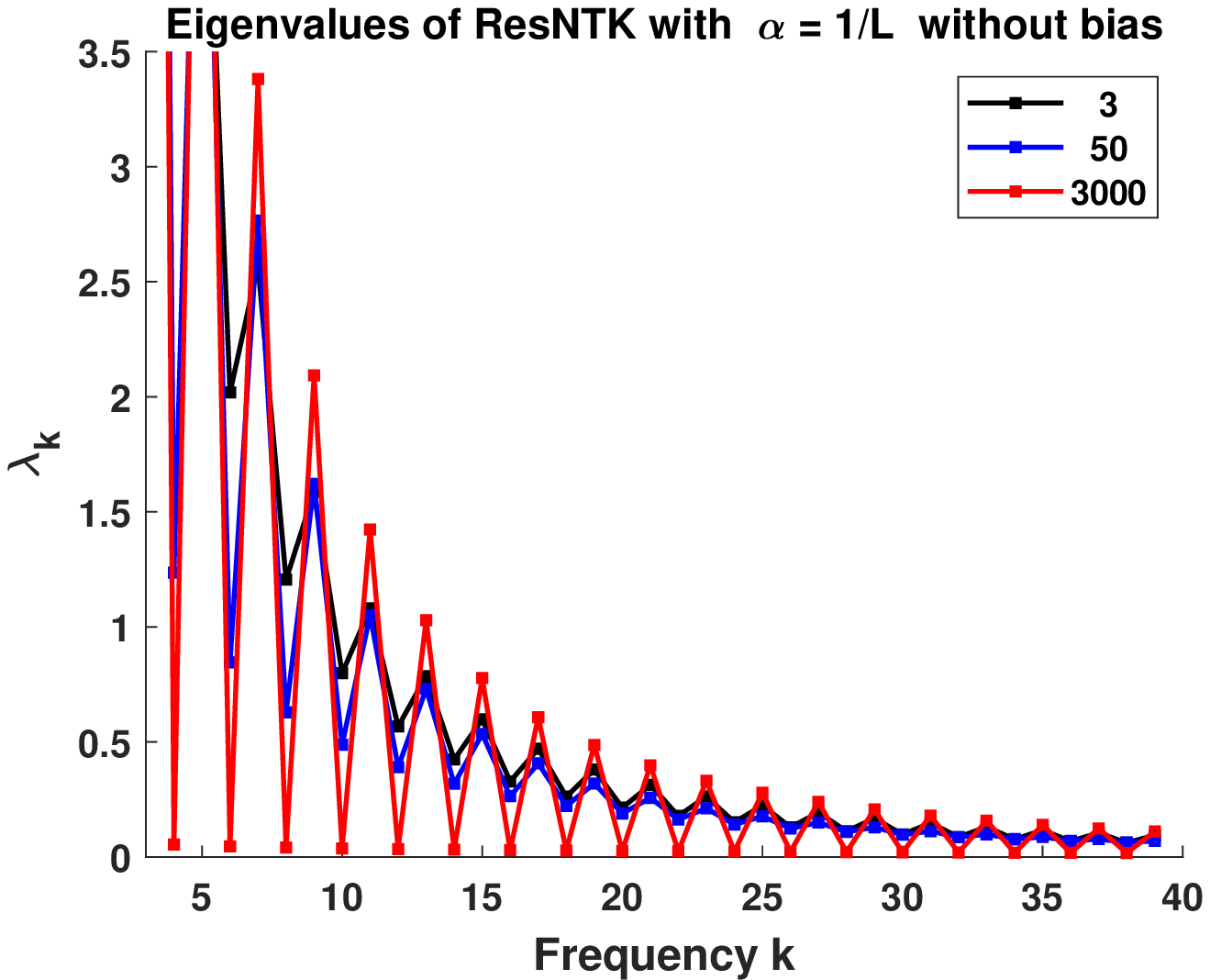} 
\includegraphics[width=\columnwidth/2]{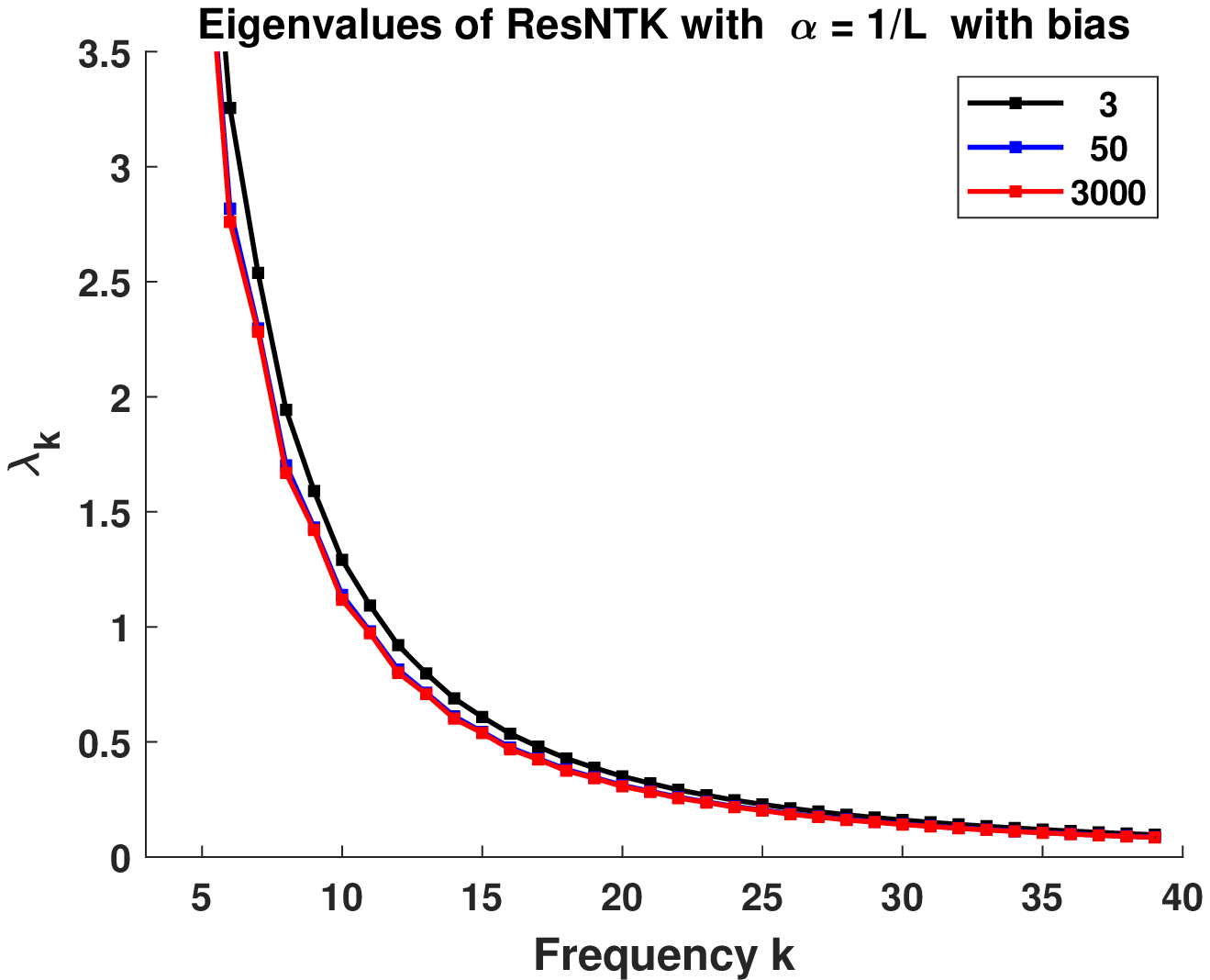} 
}

\caption{The eigenvalues of ResNTK without ($\tau=0$, left) and with bias ($\tau=1$, right) as a function of frequency for different network depths and with $\gamma=1$, i.e., $\alpha = 1/L$. With deep networks the eigenvalues of the bias-free ResNTK associated with odd frequencies ($k \ge 3$) become small, approaching 0 at $L \rightarrow \infty$. In contrast, with bias the eigenvalues decrease monotonically with frequency.} 
\label{fig:odd_vs_even}
\end{center}
\vskip -0.2in
\end{figure}

\section{Comparison of ResNTK and FC-NTK}

Theorems~\ref{thm:ResNTKeigenfunctions} and~\ref{thm:ResNTKdeacy} provide a full characterization of the set of functions in the reproducing kernel Hilbert space of ResNTK, denoted $\hh_{\resntk}$, defined in $\Spdm$ as
\begin{equation*}
    \hh_{\resntk} = \left\{f(\x) = \sum_{\substack{k\ge 0\\ \lambda_k\ne 0}}\sum_{j=1}^{N(d,k)} a_{kj} Y_{kj}(\x) \mathrm{~~s.t.~} \|f\|_{\hh_{\resntk}} < \infty \right\},
\end{equation*}
where $\lambda_k$ are the eigenvalues of $\resntk$ and
\begin{equation} \label{eq:hnorm}
    \|f\|_{\hh_{\resntk}} = \sum_{\substack{k\ge 0\\ \lambda_k\ne 0}}\sum_{j=1}^{N(d,k)} \frac{a_{kj}^2}{\lambda_k}.
\end{equation}
Our characterization of the RKHS structure of ResNTK yields similar results to those shown for FC-NTK and for the Laplace kernel \cite{bietti2020deep,chen2020laplace,geifman2020similarity}, yielding the following theorem.
\begin{theorem} \label{thm:RKHS}
Denote by $\hh_{\ntk}$ (resp.~$\bar\hh_{\ntk}$) the space of functions in the RKHS of a kernel $\ntk$ in $\Spdm$ (resp.~in $\Rd$). Then,
\begin{equation*}
    \hh_{\ntk} = \hh_{\resntk} = \hh_{\lap},
\end{equation*}
Moreover, in $\Rd$, with a radial measure (as in Thm.~\ref{thm:eig_outofsphere})
\begin{equation*}
    \bar\hh_{\ntk} = \bar\hh_{\resntk} = \bar\hh_{\hlap}.
\end{equation*}
where for $\x,\z \in \Spdm$, $\lap$ denotes the standard Laplace kernel defined by 
\begin{equation} \label{eq:laplace}
    \lap(\x,\z) = e^{-c\|\x-\z\|} = e^{-c\sqrt{2(1-\x^T\z)}},
\end{equation}
and for $\x,\z \in \Rd$ $\hlap$ is the homogenized version of the Laplace kernel, defined in \cite{geifman2020similarity} as 
\begin{equation*}
    \hlap=\|\x\|\|\z\|e^{-c\sqrt{2\left(1-\frac{\x^T\z}{\|\x\|\|\z\|}\right)}}.
\end{equation*}
\end{theorem}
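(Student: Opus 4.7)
The plan is to reduce everything to a Mercer-type representation and then exploit the fact that two kernels with the same eigenfunctions and asymptotically equivalent (matched up to constants) eigenvalues induce the same RKHS as a set of functions, with equivalent norms. Theorems \ref{thm:ResNTKeigenfunctions} and \ref{thm:ResNTKdeacy} already supply the spectral side for $\resntk$; the job is therefore to line up the spectral pictures of $\ntk$ and of the Laplace kernel against this, on $\Spdm$ and then in $\Rd$.

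On the sphere I would first observe that all three kernels are zonal: $\resntkl{L}$ by Theorem \ref{thm:ResNTKeigenfunctions}, $\ntkl{L}$ immediately from its recursion in $u = \x^T\z$, and $\lap(\x,\z) = e^{-c\sqrt{2(1-\x^T\z)}}$ by inspection. Hence each admits the spherical harmonics as eigenfunctions under the uniform measure on $\Spdm$. Next, I would invoke Theorem \ref{thm:ResNTKdeacy} for $\resntk$, the parallel result of \cite{bietti2020deep} for $\ntk$ (their Theorem 1 applied with $\nu = 1/2$), and the results of \cite{bietti2020deep,chen2020laplace,geifman2020similarity} for $\lap$, to conclude that in all three cases the eigenvalues $\lambda_k$ satisfy the same polynomial decay rate, with matching sets of nonzero indices. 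Consequently, the RKHS characterization
\begin{equation*}
    \hh_{\ntk} = \Bigl\{ f = \sum_{k,j} a_{kj} Y_{kj} : \sum_{\substack{k \ge 0 \\ \lambda_k^{\ntk}\neq 0}} \sum_j \frac{a_{kj}^2}{\lambda_k^{\ntk}} < \infty \Bigr\}
\end{equation*}
from \eqref{eq:hnorm} gives equivalent norms for the three kernels, and hence equality $\hh_{\ntk} = \hh_{\resntk} = \hh_{\lap}$ as sets of functions.

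For the $\Rd$ statement I would use that each relevant kernel is homogeneous of degree $1$ in each argument: for $\resntk$ this is part of Theorem \ref{thm:ResNTKeigenfunctions}, for $\ntk$ it follows from $\ntkl{L}(\x,\z) = \|\x\|\|\z\|\,\ntkl{L}(u)$, and for $\hlap$ it is by construction. Theorem \ref{thm:eig_outofsphere} (whose proof, imported from \cite{geifman2020similarity}, rests only on $1$-homogeneity and therefore applies verbatim to each of the three kernels) then gives eigenfunctions $\Psi_{k,j} = a\|\x\| Y_{kj}(\x/\|\x\|)$ under any radial measure $p(r)$, with eigenvalues that agree with the spherical ones up to a common constant depending on $p$. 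The same norm-equivalence argument then yields $\bar\hh_{\ntk} = \bar\hh_{\resntk} = \bar\hh_{\hlap}$.

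The main obstacle is ensuring the decay rate is matched as $\Theta(k^{-d})$ rather than just $O(k^{-d})$, since $\asymp$-equivalence (not just upper bounds) is what drives equality of RKHS. For $\resntk$ this is exactly the content of Lemmas \ref{lem:ResNTK around 1} and \ref{lem:ResNTK around m1} together with $|c_1| \neq |c_{-1}|$, which feed Theorem \ref{thm: BBdeacy} with both $c_1 + c_{-1}$ and $c_1 - c_{-1}$ nonzero, pinning the rate on both parities; the identical mechanism underlies the analogous statements for $\ntk$ and $\lap$. A subtle side point is the degenerate regime (see after Theorem \ref{thm:ResNTKdeacy}) in which odd-frequency eigenvalues of $\resntk$ collapse; this arises only in an $L \to \infty$ limit with specific scalings of $\alpha$, and falls outside the fixed-$L$, fixed-$\alpha > 0$ setting presumed in Theorem \ref{thm:RKHS}.
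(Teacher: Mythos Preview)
Your proposal is correct and follows essentially the same route as the paper. The paper does not give a standalone proof of this theorem; it simply states that the spectral characterization of $\resntk$ established in Theorems~\ref{thm:ResNTKeigenfunctions} and~\ref{thm:ResNTKdeacy} matches the known characterizations of $\ntk$ and $\lap$ from \cite{bietti2020deep,chen2020laplace,geifman2020similarity}, whence the RKHS equality via \eqref{eq:hnorm}. Your write-up fleshes this out and, in fact, is more careful than the paper on the point that one needs $\Theta(k^{-d})$ on both parities (via $|c_1|\neq |c_{-1}|$) and on excluding the degenerate $L\to\infty$ regime.
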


A consequence of Theorem~\ref{thm:RKHS} is that the three kernels, ResNTK, FC-NTK, and the (homogenized) Laplace kernel generate functions of the same smoothness properties, i.e., all three RKHSs include functions that have weak derivatives up to order $d/2$ \cite{narcowich2007approximation}. However, the structure of the RKHSs is not identical, since every kernel is associated with a unique RKHS. Consequently, while the eigenvalues decay at the same rate, they are not identical across kernels, or even across different depths for the same kernel, producing different RKHS norms \eqref{eq:hnorm}. This, in turn, implies that when applied to the same regression problem, the kernels may produce somewhat different outcomes. For example, with deep architectures the bias-free ResNTK will be biased to interpolate functions with even frequencies, while with bias it will be agnostic to parity. Also, \cite{tirer2020kernelbased} showed that under a suitable measure, with low values of $\alpha$ ResNTK tends to produce smoother interpolations. A close examination of their experiments however reveals that also with small values of $\alpha$ their interpolations are only piecewise smooth, consistent with the structure of the respective RKHS derived here. 

Our analysis also allows to determine how sharp ResNTK is. In particular, the expansion of the Laplace kernel \eqref{eq:laplace} near 1, derived by \cite{bietti2020deep}, is given by 
\begin{equation*}
    \lap(1-t) = 1 - c\sqrt{2t} + O(t).
\end{equation*}
Therefore, the coefficient of $t^{1/2}$ indicates how steep a kernel is near 1. 
With ResNTK, its steepness depends on the choice of hyper-parameter $\alpha$, which balances between the residual and skip connections. Using Lemma~\ref{lem:ResNTK around 1} we obtain that with $c = \frac{(1+\alpha^2L)}{2\pi(1+\alpha^2)}$
\begin{equation*}
    \resntkl{L}(1-t)-\lap(1-t)=o(t^{1/2}).
\end{equation*}
Therefore, if $\alpha$ is set according to $\alpha=L^{-\gamma}$ with $0.5 \le \gamma \le 1$ then ResNTK is stable and its steepness is bounded, i.e.,
\begin{align*}
    c^{\mathrm{RES}}(L) = \frac{(1+\alpha^2L)}{2\pi(1+\alpha^2)} 
    \xrightarrow[]{L \xrightarrow{} \infty}     \begin{cases}
      \frac{1}{\pi}, & \gamma=0.5 \\
      \frac{1}{2\pi}, & 0.5 < \gamma \le 1.
    \end{cases}
\end{align*}
If however $\alpha$ is independent of depth ResNTK becomes steeper with depth. This is similar to FC-NTK, as is implied by the following lemma.
\begin{lemma}  \label{lem:c of NTK}
    With small $t>0$\footnote{Note that here we fix a slight miscalculation in \cite{bietti2020deep}(Corollary 3) which implied that the coefficient of $t^{1/2}$ is constant with depth.}
    \begin{equation} \label{eq:NTK around 1}
        \ntkl{L}(1-t) = 1 - \frac{L}{\pi\sqrt{2}}t^{1/2}+o(t^{1/2}).
    \end{equation}
    Therefore, with $c = \frac{L}{2\pi}$, $\ntkl{L}(1-t)-\lap(1-t)=o(t^{1/2})$.
\end{lemma}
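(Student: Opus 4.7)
My plan is to compute the asymptotic expansion of $\tntkl{L}(1-t)$ by tracking the $t^{1/2}$ coefficient through the recursions defining FC-NTK, and then divide by $L+1$ to obtain $\ntkl{L}(1-t)$. The first step is to establish local expansions of the arc-cosine kernels near $u=1$. Using $\arccos(1-t)=\sqrt{2t}+O(t^{3/2})$, \eqref{eq:Kappa0} yields $\kappa_0(1-t)=1-\frac{\sqrt{2}}{\pi}t^{1/2}+O(t^{3/2})$, while \eqref{eq:Kappa1} yields, after simplification, $\kappa_1(1-t)=1-t+O(t^{3/2})$. The crucial structural feature, which I would verify carefully, is that inside $\kappa_1$ the $\sqrt{t}$ contribution coming from $u(\pi-\arccos u)$ exactly cancels the one coming from $\sqrt{1-u^2}$, so $\kappa_1$ has no $t^{1/2}$ term at all.

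Next I would propagate these expansions through the recursions. Since $\Sigma^{(0)}(1-t)=1-t$ and $\kappa_1$ preserves the form $1-t+O(t^{3/2})$, a short induction gives $\Sigma^{(\ell)}(1-t)=1-t+O(t^{3/2})$ for every $\ell\ge 0$, and therefore $\kappa_0(\Sigma^{(\ell-1)}(1-t))=1-\frac{\sqrt{2}}{\pi}t^{1/2}+O(t)$ independently of $\ell$. Writing the ansatz $\tntkl{\ell}(1-t)=(\ell+1)+a_\ell\, t^{1/2}+O(t)$ and substituting into
\[ \tntkl{\ell}(1-t)=\tntkl{\ell-1}(1-t)\,\kappa_0(\Sigma^{(\ell-1)}(1-t))+\Sigma^{(\ell)}(1-t), \]
matching the $t^{1/2}$ coefficient produces the linear recurrence $a_\ell=a_{\ell-1}-\frac{\sqrt{2}}{\pi}\ell$ with $a_0=0$, whose solution is $a_L=-\frac{\sqrt{2}}{\pi}\cdot\frac{L(L+1)}{2}$. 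Dividing $\tntkl{L}$ by $L+1$ then yields $\ntkl{L}(1-t)=1-\frac{L}{\pi\sqrt{2}}\,t^{1/2}+o(t^{1/2})$, which is \eqref{eq:NTK around 1}.

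For the second claim I would expand $\lap(1-t)=e^{-c\sqrt{2t}}=1-c\sqrt{2}\,t^{1/2}+O(t)$ and equate the $t^{1/2}$ coefficients with $\ntkl{L}$: the equation $c\sqrt{2}=\frac{L}{\pi\sqrt{2}}$ forces $c=\frac{L}{2\pi}$, and with this choice $\ntkl{L}(1-t)-\lap(1-t)=o(t^{1/2})$. The main obstacle is the bookkeeping in the first step: the cancellation of the two $\sqrt{t}$ contributions inside $\kappa_1$ is what allows the induction for $\Sigma^{(\ell)}$ to carry no $t^{1/2}$ term, which in turn is what makes $\kappa_0(\Sigma^{(\ell-1)})$ uniform in $\ell$ at this order, so that the depth dependence of the final coefficient accumulates entirely through the arithmetic sum $\sum_{\ell=1}^{L}\ell$ in the recurrence for $a_\ell$.
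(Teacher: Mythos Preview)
Your proposal is correct and follows essentially the same approach as the paper: both proofs derive the expansions of $\kappa_0,\kappa_1$ near $1$, show $\Sigma^{(\ell)}(1-t)=1-t+o(t)$ so that $\kappa_0(\Sigma^{(\ell-1)}(1-t))=1-\tfrac{\sqrt{2}}{\pi}t^{1/2}+o(t^{1/2})$ uniformly in $\ell$, and then run an induction on $\tntkl{\ell}$ before dividing by $L+1$. The only cosmetic difference is that the paper carries the closed form $\tntkl{\ell}(1-t)=(\ell+1)-\tfrac{\ell(\ell+1)}{\pi\sqrt{2}}t^{1/2}+o(t^{1/2})$ through the induction directly, whereas you phrase the same computation as the recurrence $a_\ell=a_{\ell-1}-\tfrac{\sqrt{2}}{\pi}\ell$ and sum it; the Laplace comparison is identical in both.
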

Clearly therefore with deep networks FC-NTK becomes steeper near 1. This is consistent with \cite{Huang2020WhyDD} who proved that, except near $u= \x^T\z = 1$, as the depth $L$ tends to infinity FC-NTK approaches the constant 0.25. Therefore with deep architectures FC-NTK forms a spike.

Figure~\ref{fig:NTKvsL} shows the shape of both FC-NTK and ResNTK for three choices of network depths. Our experiments (Section~\ref{sec:expreiments}) indeed show that for FC-NTK and ResNTK with constant value of $\alpha$ learning accuracy degrades with depth, while with a decaying $\alpha$ learning accuracy is stable across depth.

\begin{figure}[tb]
\vskip 0.2in
\centerline{
\includegraphics[width=\columnwidth/3]{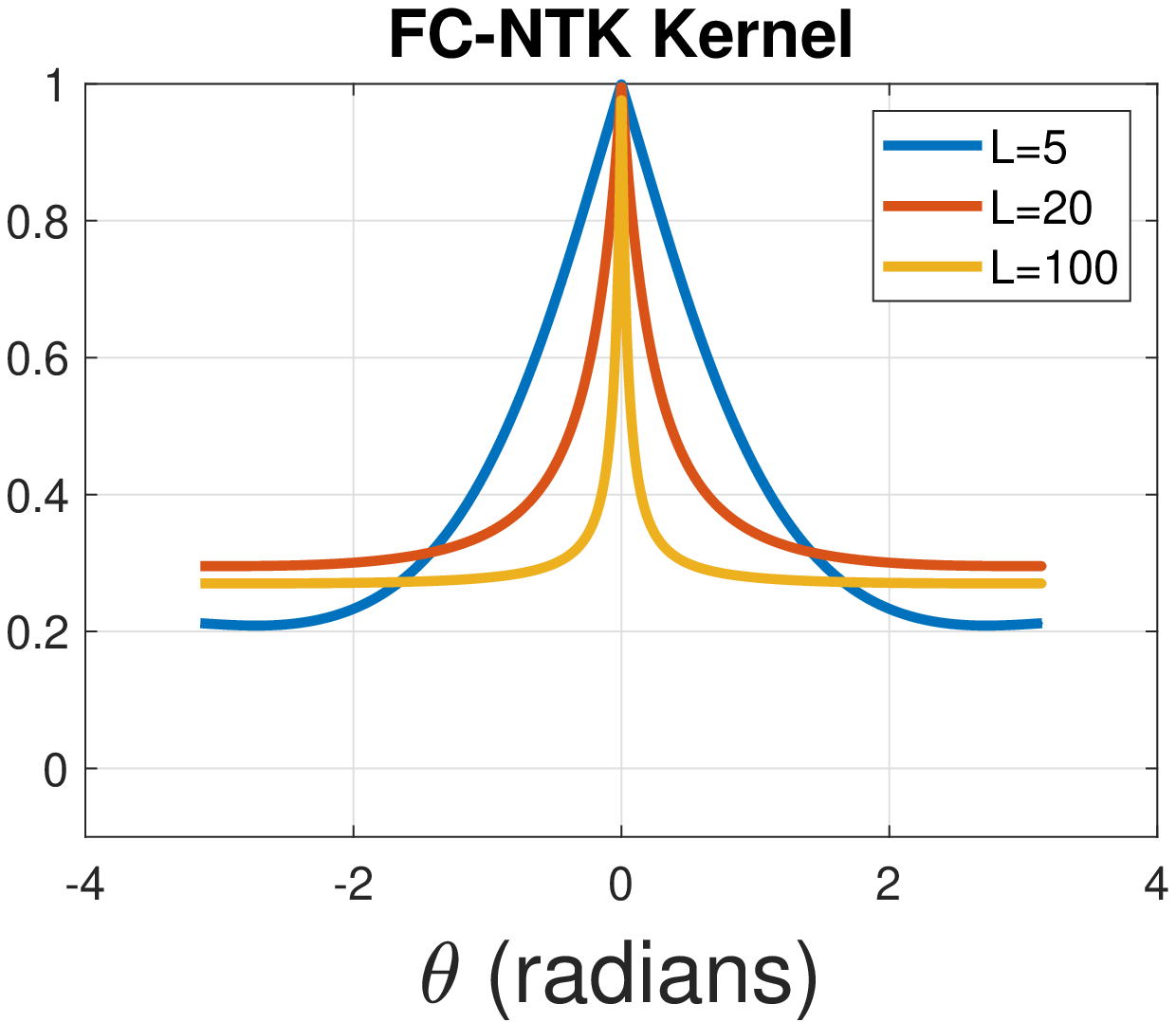}
\includegraphics[width=\columnwidth/3]{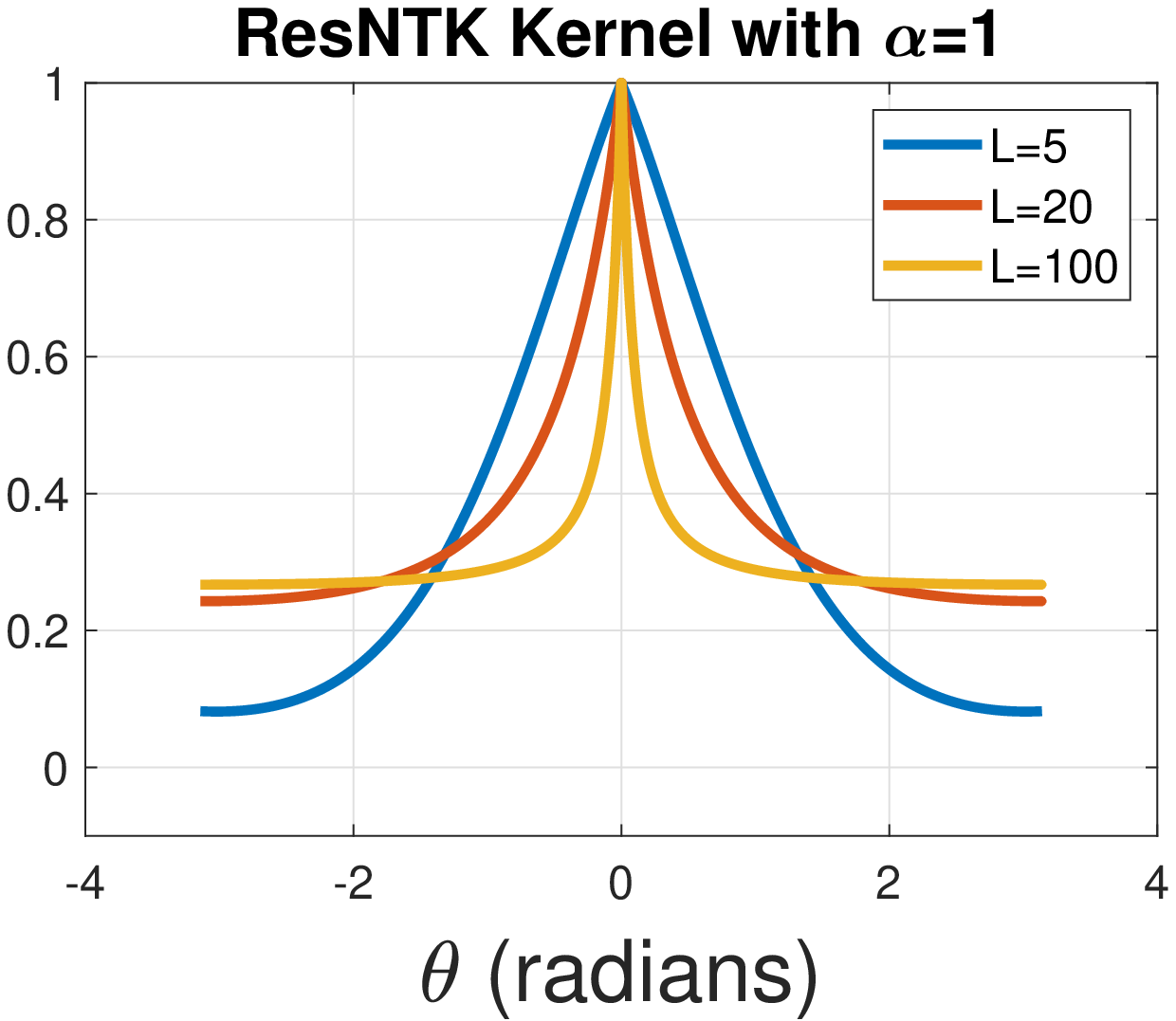}
\includegraphics[width=\columnwidth/3]{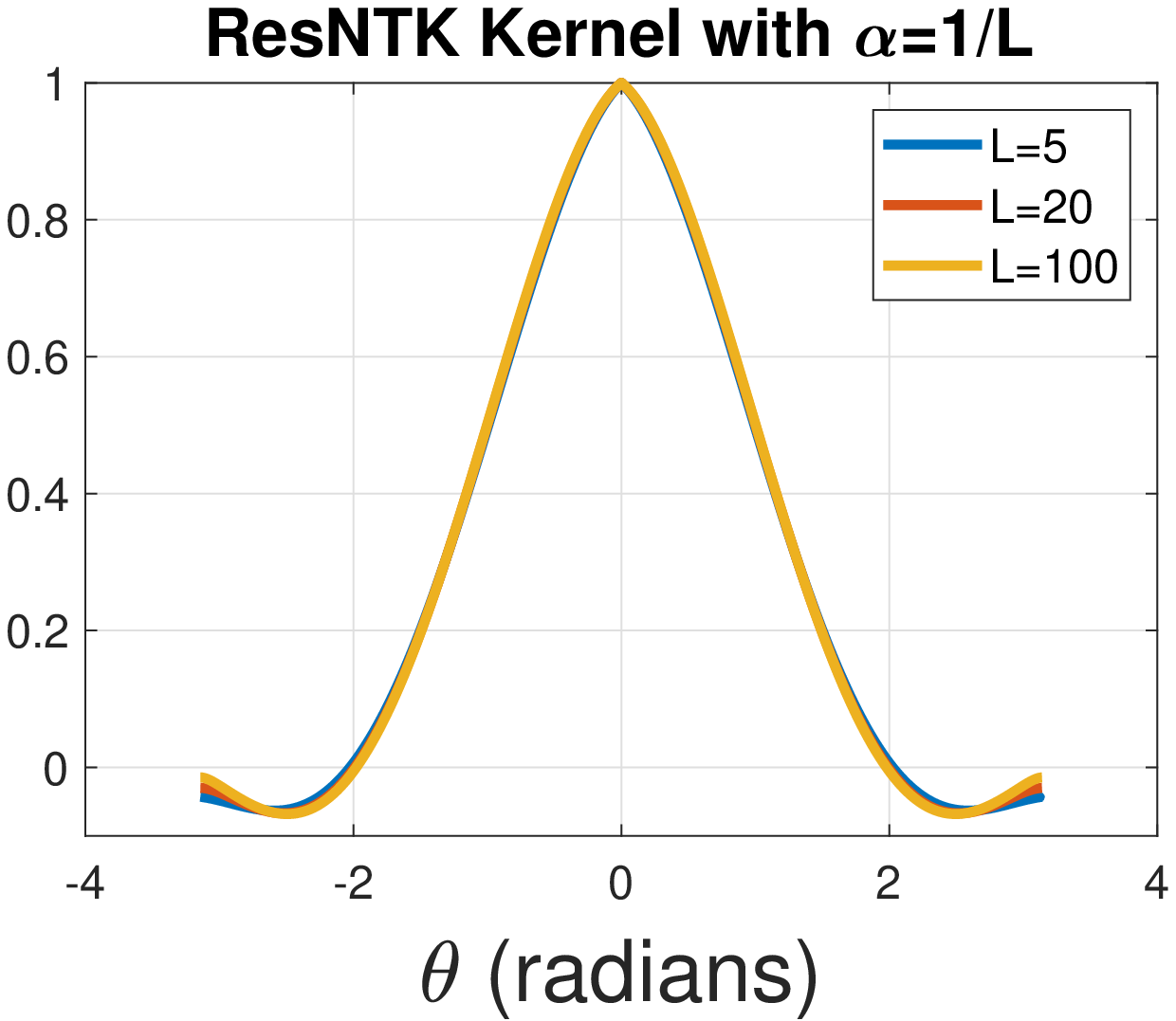}
}
\caption{FC-NTK (left) and ResNTK (center $\alpha=1$, right $\alpha=1/L$) for networks of different depths, $L=5,20,100$. For FC-NTK and ResNTK with $\alpha=1$, the kernel becomes spiky with depth. With $\alpha=1/L$ ResNTK remains stable for all depths.}
\label{fig:NTKvsL}
\vskip -0.2in
\end{figure}

\begin{table*}[t]
\caption{Classification accuracies on the UCI dataset obtained by applying FC-NTK and ResNTK with $\alpha\in \{1/L,1/\sqrt{L}, 1\}$.}
\label{tab:KernelsonUCI}
\vskip 0.15in
\begin{center}
\begin{tiny}
\begin{sc}
\begin{tabular}[t]{ccccr}
\toprule
Number of Layers & FC-NTK & ResNTK, $\alpha = \frac{1}{L}$ & ResNTK, $\alpha=\frac{1}{\sqrt{L}}$ & ResNTK, $\alpha = 1$ \\
\midrule
5   & 85.54 $\pm$ 10.70 & 85.59 $\pm$ 10.61  & 85.52 $\pm$ 10.95 & 86.02 $\pm$ 9.660\\
25  & 84.28 $\pm$ 11.18 & 85.51 $\pm$ 10.82  & 85.46 $\pm$ 10.69 & 85.21 $\pm$ 10.10\\
50  & 82.97 $\pm$ 11.44 & 85.45 $\pm$ 10.80  & 85.25 $\pm$ 10.86 & 79.94 $\pm$ 16.55\\
100 & 80.87 $\pm$ 12.08 & 85.38 $\pm$ 10.75  & 84.86 $\pm$ 10.93 & 79.91 $\pm$ 16.10\\
\bottomrule
\end{tabular}
\end{sc}
\end{tiny}
\end{center}
\vskip -0.1in
\end{table*}

\begin{table}[h]
\caption{Classification accuracies on the CIFAR-10 dataset obtained by applying FC-NTK and ResNTK with $\alpha\in \{1/L,1/\sqrt{L}\}$.}
\label{tab:KernelsonCIFAR}
\vskip 0.15in
\begin{center}
\begin{tiny}
\begin{sc}
\begin{tabular}{ccccr}
\toprule
Number of Layers & FC-NTK & ResNTK, $\alpha = \frac{1}{L}$ & ResNTK, $\alpha=\frac{1}{\sqrt{L}}$ \\
\midrule
5   & 58.29 & 58.23 &58.32   \\
25  &54.33  & 57.72 &  58.33 \\
50  & 51.42 & 57.58 & 58.34   \\
100 & 48.27 & 57.53 &58.34   \\
\bottomrule
\end{tabular}
\end{sc}
\end{tiny}
\end{center}
\vskip -0.1in
\end{table}

\begin{table}[tb]
\caption{Classification accuracies on the SVHN dataset obtained by applying FC-NTK and ResNTK with $\alpha\in \{1/L,1/\sqrt{L}\}$.
}
\label{tab:KernelsonSVHN}
\vskip 0.15in
\begin{center}
\begin{tiny}
\begin{sc}
\begin{tabular}{cccr}
\toprule
Number of Layers & FC-NTK &  ResNTK, $\alpha = \frac{1}{L}$ & ResNTK, $\alpha=\frac{1}{\sqrt{L}}$ \\
\midrule
5   & 74.44  & 73.62 & 78.36  \\
25  & 48.75  &  74.73 & 78.17 \\
50  & 33.69  & 74.89 & 78.14   \\
100 & 21.12 & 74.91 & 78.13   \\
\bottomrule
\end{tabular}
\end{sc}
\end{tiny}
\end{center}
\vskip -0.1in
\end{table}

\section{Experiments} \label{sec:expreiments}

We performed a number of experiments to show the effect of depth on ResNTK and to compare it to FC-NTK.

\textbf{UCI Dataset} We applied ResNTK and FC-NTK to 90 datasets of the UCI collection ($<5000$ items) using the protocol of \cite{Arora2020Harnessing}. We applied ridge regression with smoothness constant $\lambda=1e^{-3}$ and normalized each data item to unit norm. To solve a classification problem, for each test item we regress each kernel to a one-hot vector and select the class that maximizes the regression result. For ResNTK we used a decaying balancing parameter ($\alpha=1/L,1/\sqrt{L}$) as well as constant $\alpha=1$. (Due to condition number problems, in the case of constant $\alpha$ we only report results for 63 datasets.) We report average classification accuracy. Table~\ref{tab:KernelsonUCI} shows average accuracy for different depth values. It can be seen that while FC-NTK and ResNTK with $\alpha=1$ degrade with depth, from roughly 86\% with 5 hidden layers to 80-81\% with 100 layers, ResNTK with $\alpha=1/L$ and $\alpha=1/\sqrt{L}$ remain stable around 85-85.5\%. Interestingly, in the latter cases also the standard deviations remain stable across different depths. We note that these results, peaked for FC-NTK at 85.54\%, are comparable to those shown in \cite{Arora2020Harnessing}, who reported an average accuracy of 81.95\% on 90 datasets with hyper-parameter search, including depth and testing also with a Gaussian Process kernel.

\textbf{CIFAR-10} We next applied both kernels to the CIFAR-10 dataset. Note that the kernels we applied correspond to classical and residual fully connected architectures and are not convolutional. We normalized the pixels in each image to zero mean and unit variance and used kernel regression with $\lambda=0$. Table~\ref{tab:KernelsonCIFAR} shows classification accuracies with FC-NTK and ResNTK with $\alpha \in \{1/L,1/\sqrt{L}\}$. As with the UCI experiments, test accuracies for FC-NTK degrade from 58.28\% for 5 layers to 48.27\% for 100 layers. In contrast, ResNTK with $\alpha \in \{1/L,1/\sqrt{L}\}$ maintains an accuracy of 57.5\%-58.3\% across depth.

\textbf{SVHN} We repeated the same experiments on the SVHN dataset, see Table~\ref{tab:KernelsonSVHN}. Here too we normalized the pixels in each image to zero mean and unit variance but used regression with $\lambda=1e^{-5}$. The differences between FC-NTK and ResNTK are even more extreme in this experiment. FC-NTK degrades from an accuracy of 74.44\% with 5 layers to 21.12\% with 100 layers, while ResNTK with $\alpha=1/L$ and $\alpha=1/\sqrt{L}$ maintains respectively a 74-75\% and 78\% accuracy for all tested depths.

\comment{

\textbf{CIFAR-10: Real networks} To further examine the relevance of these results to real networks we applied a network with fully connected (FCN) and residual, fully connected network with hidden layers of width $m=2000$. \rb{Yuval?} We optimized the networks with SGD with no momentum. We present the results of these experiments obtained with two choices of learning rates which were kept constant, a large rate of $1e^{-2}$ and a smaller rate of $1e^{-4}$. Presumably the smaller rate better approximates the NTK regime of networks. The results can be seen in Tables~\ref{tab:NNonCIFAR_large_lr}-\ref{tab:NNonCIFAR_small_lr}. With the large learning rate all networks achieved a similar accuracy of 0.53-0.54, except for the fully connected network, which failed to converge with deep network architectures (100 and 200 layers). With the smaller learning rate the pattern of results is similar to those that were achieved with the respective kernels. Specifically, FCN degrades from 0.51 with 5 layers to 0.39 at deeper layers, whereas ResNet with $\gamma\in\{1,0.5\}$ maintains an accuracy of 0.52-0.53.

\begin{table}[t]
\caption{Classification accuracies on CIFAR-10 with large learning rate $\eta=1e^{-2}$.}
\label{tab:NNonCIFAR_large_lr}
\vskip 0.15in
\begin{center}
\begin{tiny}
\begin{sc}
\begin{tabular}{ccccr}
\toprule
Number of Layers & FCN & ResNet $\gamma=1$ & ResNet $\gamma=0.5$ \\
\midrule
5   & 0.53 & 0.53 & 0.54  \\
25  & 0.53 & 0.53 & 0.54  \\
50  & 0.53 & 0.54 & 0.54  \\
100 & 0.2  & 0.54 & 0.54  \\
200 & 0.1  & 0.54 & 0.53  \\
\bottomrule
\end{tabular}
\end{sc}
\end{tiny}
\end{center}
\vskip -0.1in
%
\caption{Classification accuracies on CIFAR-10 with small learning rate $\eta=1e^{-4}$.}
\label{tab:NNonCIFAR_small_lr}
\vskip 0.15in
\begin{center}
\begin{tiny}
\begin{sc}
\begin{tabular}{ccccr}
\toprule
Number of Layers & FCN & ResNet $\gamma=1$ & ResNet $\gamma=0.5$ \\
\midrule
5   & 0.51 & 0.53 & 0.52  \\
25  & 0.42 & 0.53 & 0.52  \\
50  & 0.39 & 0.53 & 0.52  \\
100 & 0.39 & 0.52 & 0.52  \\
200 & 0.39 & 0.52 & 0.52  \\
\bottomrule
\end{tabular}
\end{sc}
\end{tiny}
\end{center}
\vskip -0.1in
\end{table}

}

\section{Conclusion}

We have provided derivations to determine the RKHS structure of NTK for residual networks. Our analysis indicates that, similar to NTK for classical, fully connected networks, the eigenfunctions of ResNTK are the (scaled) spherical harmonics and its eigenvalues decay polynomially with frequency $k$ at the rate of $k^{-d}$. These in turn imply that the set of functions in its RKHS are identical to those of both FC-NTK and the Laplace kernel restricted to the hypersphere $\Spdm$. Our results imply that all three kernels produce functions of similar smoothness properties. We however showed that depending on the choice of $\alpha$, which balances between the residual and skip connections, ResNTK can be controlled to become spiky with depth, as is the case with FC-NTK, or maintain a stable shape. In addition, we showed that deep bias-free ResNTK is significantly biased toward the even frequencies.

Our results suggest that NTK provides only a partial explanation to the success of residual networks. Indeed it appears that classification with FC-NTK degrades with depth, while classification with ResNTK can be made stable with a proper choice of a balancing hyper-parameter. However, our experiments suggest that with an optimal choice of depth classification results with FC-NTK and ResNTK are similar, most likely due to their similar RKHS structures. This is somewhat in contrast to actual implementations in which residual networks seem to significantly outperform classical feed-forward networks. This difference may be attributed to optimization issues, or to the possible invalidity of the assumptions of NTK to real networks of finite width. It is also possible that differences between residual and classical kernels are more significant in convolutional architectures.

\bibliography{main_arxiv}

\begin{thebibliography}{38}
\providecommand{\natexlab}[1]{#1}
\providecommand{\url}[1]{\texttt{#1}}
\expandafter\ifx\csname urlstyle\endcsname\relax
  \providecommand{\doi}[1]{doi: #1}\else
  \providecommand{\doi}{doi: \begingroup \urlstyle{rm}\Url}\fi

\bibitem[Allen-Zhu et~al.(2019)Allen-Zhu, Li, and Song]{allen2019convergence}
Allen-Zhu, Z., Li, Y., and Song, Z.
\newblock A convergence theory for deep learning via over-parameterization.
\newblock In \emph{International Conference on Machine Learning}, pp.\
  242--252. PMLR, 2019.

\bibitem[Arora et~al.(2019)Arora, Du, Hu, Li, and Wang]{arora2019finegrained}
Arora, S., Du, S., Hu, W., Li, Z., and Wang, R.
\newblock Fine-grained analysis of optimization and generalization for
  overparameterized two-layer neural networks.
\newblock In \emph{International Conference on Machine Learning}, pp.\
  322--332. PMLR, 2019.

\bibitem[Arora et~al.(2020)Arora, Du, Li, Salakhutdinov, Wang, and
  Yu]{Arora2020Harnessing}
Arora, S., Du, S.~S., Li, Z., Salakhutdinov, R., Wang, R., and Yu, D.
\newblock Harnessing the power of infinitely wide deep nets on small-data
  tasks.
\newblock In \emph{International Conference on Learning Representations}, 2020.

\bibitem[Balduzzi et~al.(2017)Balduzzi, Frean, Leary, Lewis, Ma, and
  McWilliams]{balduzzi2017shattered}
Balduzzi, D., Frean, M., Leary, L., Lewis, J., Ma, K. W.-D., and McWilliams, B.
\newblock The shattered gradients problem: If resnets are the answer, then what
  is the question?
\newblock \emph{arXiv preprint arXiv:1702.08591}, 2017.

\bibitem[Basri et~al.(2019)Basri, Jacobs, Kasten, and
  Kritchman]{Basri2019TheCR}
Basri, R., Jacobs, D.~W., Kasten, Y., and Kritchman, S.
\newblock The convergence rate of neural networks for learned functions of
  different frequencies.
\newblock In Wallach, H.~M., Larochelle, H., Beygelzimer, A., d'Alché Buc, F.,
  Fox, E.~A., and Garnett, R. (eds.), \emph{Advances in Neural Information
  Pro-cessing Systems}, pp.\  4763--4772, 2019.

\bibitem[Basri et~al.(2020)Basri, Galun, Geifman, Jacobs, Kasten, and
  Kritchman]{basri2020frequency}
Basri, R., Galun, M., Geifman, A., Jacobs, D., Kasten, Y., and Kritchman, S.
\newblock Frequency bias in neural networks for input of non-uniform density.
\newblock In \emph{International Conference on Machine Learning}, pp.\
  685--694. PMLR, 2020.

\bibitem[Bietti \& Bach(2020)Bietti and Bach]{bietti2020deep}
Bietti, A. and Bach, F.
\newblock Deep equals shallow for relu networks in kernel regimes.
\newblock \emph{arXiv preprint arXiv:2009.14397}, 2020.

\bibitem[Bietti \& Mairal(2019)Bietti and Mairal]{bietti2019inductive}
Bietti, A. and Mairal, J.
\newblock On the inductive bias of neural tangent kernels.
\newblock In \emph{Advances in Neural Information Processing Systems}, pp.\
  12893--12904, 2019.

\bibitem[Cao et~al.(2019)Cao, Fang, Wu, Zhou, and Gu]{cao2019}
Cao, Y., Fang, Z., Wu, Y., Zhou, D.-X., and Gu, Q.
\newblock Towards understanding the spectral bias of deep learning.
\newblock \emph{arXiv preprint arXiv:2009.01198}, 2019.

\bibitem[Chen \& Xu(2020)Chen and Xu]{chen2020laplace}
Chen, L. and Xu, S.
\newblock Deep neural tangent kernel and laplace kernel have the same rkhs.
\newblock \emph{arXiv preprint arXiv:2009.10683}, 2020.

\bibitem[Chizat et~al.(2019)Chizat, Oyallon, and Bach]{chizat2019lazy}
Chizat, L., Oyallon, E., and Bach, F.
\newblock On lazy training in differentiable programming.
\newblock In \emph{Advances in Neural Information Processing Systems}, pp.\
  2937--2947, 2019.

\bibitem[Cho \& Saul(2009)Cho and Saul]{Cho2009NIPS}
Cho, Y. and Saul, L.
\newblock Kernel methods for deep learning.
\newblock In Bengio, Y., Schuurmans, D., Lafferty, J., Williams, C., and
  Culotta, A. (eds.), \emph{Advances in Neural Information Processing Systems},
  volume~22, pp.\  342--350. Curran Associates, Inc., 2009.

\bibitem[Du et~al.(2019)Du, Lee, Li, Wang, and Zhai]{du2019gradient}
Du, S., Lee, J., Li, H., Wang, L., and Zhai, X.
\newblock Gradient descent finds global minima of deep neural networks.
\newblock In \emph{International Conference on Machine Learning}, pp.\
  1675--1685. PMLR, 2019.

\bibitem[Gallier(2009)]{gallier2009notes}
Gallier, J.
\newblock Notes on spherical harmonics and linear representations of lie
  groups.
\newblock 2009.

\bibitem[Geifman et~al.(2020)Geifman, Yadav, Kasten, Galun, Jacobs, and
  Basri]{geifman2020similarity}
Geifman, A., Yadav, A., Kasten, Y., Galun, M., Jacobs, D., and Basri, R.
\newblock On the similarity between the laplace and neural tangent kernels.
\newblock \emph{arXiv preprint arXiv:2007.01580}, 2020.

\bibitem[Golub \& Van~Loan(1996)Golub and Van~Loan]{GoluVanl96}
Golub, G.~H. and Van~Loan, C.~F.
\newblock \emph{Matrix Computations}.
\newblock The Johns Hopkins University Press, third edition, 1996.

\bibitem[Greenfeld et~al.(2019)Greenfeld, Galun, Basri, Yavneh, and
  Kimmel]{greenfeld2019amg}
Greenfeld, D., Galun, M., Basri, R., Yavneh, I., and Kimmel, R.
\newblock Learning to optimize multigrid {PDE} solvers.
\newblock In Chaudhuri, K. and Salakhutdinov, R. (eds.), \emph{Proceedings of
  the 36th International Conference on Machine Learning}, volume~97, pp.\
  2415--2423, 2019.

\bibitem[He et~al.(2016{\natexlab{a}})He, Zhang, Ren, and Sun]{he2016deep}
He, K., Zhang, X., Ren, S., and Sun, J.
\newblock Deep residual learning for image recognition.
\newblock In \emph{Proceedings of the IEEE conference on computer vision and
  pattern recognition}, pp.\  770--778, 2016{\natexlab{a}}.

\bibitem[He et~al.(2016{\natexlab{b}})He, Zhang, Ren, and Sun]{he2016identity}
He, K., Zhang, X., Ren, S., and Sun, J.
\newblock Identity mappings in deep residual networks.
\newblock In Leibe, B., Matas, J., Sebe, N., and Welling, M. (eds.),
  \emph{Computer Vision -- ECCV 2016}. Springer International Publishing,
  2016{\natexlab{b}}.

\bibitem[Howard et~al.(2019)Howard, Sandler, Chu, Chen, Chen, Tan, Wang, Zhu,
  Pang, Vasudevan, et~al.]{howard2019searching}
Howard, A., Sandler, M., Chu, G., Chen, L.-C., Chen, B., Tan, M., Wang, W.,
  Zhu, Y., Pang, R., Vasudevan, V., et~al.
\newblock Searching for mobilenetv3.
\newblock In \emph{Proceedings of the IEEE International Conference on Computer
  Vision}, pp.\  1314--1324, 2019.

\bibitem[Huang et~al.(2020)Huang, Wang, Tao, and Zhao]{Huang2020WhyDD}
Huang, K., Wang, Y., Tao, M., and Zhao, T.
\newblock Why do deep residual networks generalize better than deep feed
  forward networks? - a neural tangent kernel perspective.
\newblock \emph{ArXiv}, abs/2002.06262, 2020.

\bibitem[Jacot et~al.(2018)Jacot, Gabriel, and Hongler]{jacot2018}
Jacot, A., Gabriel, F., and Hongler, C.
\newblock Neural tangent kernel: Convergence and generalization in neural
  networks.
\newblock In Bengio, S., Wallach, H., Larochelle, H., Grauman, K.,
  Cesa-Bianchi, N., and Garnett, R. (eds.), \emph{Advances in Neural
  Information Processing Systems 31}, pp.\  8571--8580. 2018.

\bibitem[Lee et~al.(2020)Lee, Schoenholz, Pennington, Adlam, Xiao, Novak, and
  Sohl{-}Dickstein]{lee2020finite}
Lee, J., Schoenholz, S.~S., Pennington, J., Adlam, B., Xiao, L., Novak, R., and
  Sohl{-}Dickstein, J.
\newblock Finite versus infinite neural networks: an empirical study.
\newblock In Larochelle, H., Ranzato, M., Hadsell, R., Balcan, M., and Lin, H.
  (eds.), \emph{Advances in Neural Information Processing Systems 33}, 2020.

\bibitem[Li et~al.(2018)Li, Xu, Taylor, Studer, and
  Goldstein]{li2018visualizing}
Li, H., Xu, Z., Taylor, G., Studer, C., and Goldstein, T.
\newblock Visualizing the loss landscape of neural nets.
\newblock In \emph{Advances in neural information processing systems}, pp.\
  6389--6399, 2018.

\bibitem[Liang et~al.(2019)Liang, Rakhlin, and Zhai]{liang2019risk}
Liang, T., Rakhlin, A., and Zhai, X.
\newblock On the risk of minimum-norm interpolants and restricted lower
  isometry of kernels.
\newblock \emph{arXiv preprint arXiv:1908.10292}, 2019.

\bibitem[Liang et~al.(2020)Liang, Rakhlin, et~al.]{liang2020just}
Liang, T., Rakhlin, A., et~al.
\newblock Just interpolate: Kernel “ridgeless” regression can generalize.
\newblock \emph{Annals of Statistics}, 48\penalty0 (3):\penalty0 1329--1347,
  2020.

\bibitem[Liu et~al.(2019)Liu, Chen, Zhou, Du, Zhou, and Zhao]{liu2019towards}
Liu, T., Chen, M., Zhou, M., Du, S.~S., Zhou, E., and Zhao, T.
\newblock Towards understanding the importance of shortcut connections in
  residual networks.
\newblock In \emph{Advances in neural information processing systems}, pp.\
  7892--7902, 2019.

\bibitem[Narcowich et~al.(2007)Narcowich, Sun, and
  Ward]{narcowich2007approximation}
Narcowich, F.~J., Sun, X., and Ward, J.~D.
\newblock Approximation power of rbfs and their associated sbfs: a connection.
\newblock \emph{Advances in Computational Mathematics}, 27\penalty0
  (1):\penalty0 107--124, 2007.

\bibitem[Pagliana et~al.(2020)Pagliana, Rudi, De~Vito, and
  Rosasco]{pagliana2020interpolation}
Pagliana, N., Rudi, A., De~Vito, E., and Rosasco, L.
\newblock Interpolation and learning with scale dependent kernels.
\newblock \emph{arXiv preprint arXiv:2006.09984}, 2020.

\bibitem[Radosavovic et~al.(2020)Radosavovic, Kosaraju, Girshick, He, and
  Doll{\'a}r]{radosavovic2020designing}
Radosavovic, I., Kosaraju, R.~P., Girshick, R., He, K., and Doll{\'a}r, P.
\newblock Designing network design spaces.
\newblock In \emph{Proceedings of the IEEE/CVF Conference on Computer Vision
  and Pattern Recognition}, pp.\  10428--10436, 2020.

\bibitem[Rahaman et~al.(2019)Rahaman, Baratin, Arpit, Draxler, Lin, Hamprecht,
  Bengio, and Courville]{rahaman2019spectral}
Rahaman, N., Baratin, A., Arpit, D., Draxler, F., Lin, M., Hamprecht, F.,
  Bengio, Y., and Courville, A.
\newblock On the spectral bias of neural networks.
\newblock In Chaudhuri, K. and Salakhutdinov, R. (eds.), \emph{Proceedings of
  the 36th International Conference on Machine Learning}, volume~97 of
  \emph{Proceedings of Machine Learning Research}, pp.\  5301--5310. PMLR,
  2019.

\bibitem[{Siravenha} et~al.(2019){Siravenha}, {Reis}, {Cordeiro}, {Tourinho},
  {Gomes}, and {Carvalho}]{Siravenha2019resmlp}
{Siravenha}, A. C.~Q., {Reis}, M. N.~F., {Cordeiro}, I., {Tourinho}, R.~A.,
  {Gomes}, B.~D., and {Carvalho}, S.~R.
\newblock Residual mlp network for mental fatigue classification in mining
  workers from brain data.
\newblock In \emph{2019 8th Brazilian Conference on Intelligent Systems
  (BRACIS)}, pp.\  407--412, 2019.
\newblock \doi{10.1109/BRACIS.2019.00078}.

\bibitem[Tan et~al.(2019)Tan, Chen, Pang, Vasudevan, Sandler, Howard, and
  Le]{tan2019mnasnet}
Tan, M., Chen, B., Pang, R., Vasudevan, V., Sandler, M., Howard, A., and Le,
  Q.~V.
\newblock Mnasnet: Platform-aware neural architecture search for mobile.
\newblock In \emph{Proceedings of the IEEE Conference on Computer Vision and
  Pattern Recognition}, pp.\  2820--2828, 2019.

\bibitem[Tirer et~al.(2020)Tirer, Bruna, and Giryes]{tirer2020kernelbased}
Tirer, T., Bruna, J., and Giryes, R.
\newblock Kernel-based smoothness analysis of residual networks.
\newblock \emph{arXiv preprint arXiv:2009.10008}, 2020.

\bibitem[Veit et~al.(2016)Veit, Wilber, and Belongie]{veit2016residual}
Veit, A., Wilber, M.~J., and Belongie, S.
\newblock Residual networks behave like ensembles of relatively shallow
  networks.
\newblock \emph{Advances in neural information processing systems},
  29:\penalty0 550--558, 2016.

\bibitem[Xu et~al.(2019)Xu, Zhang, Luo, Xiao, and Ma]{Xu2019}
Xu, Z.~J., Zhang, Y., Luo, T., Xiao, Y., and Ma, Z.
\newblock Frequency principle: Fourier analysis sheds light on deep neural
  networks.
\newblock \emph{CoRR}, abs/1901.06523, 2019.

\bibitem[Zhang et~al.(2019{\natexlab{a}})Zhang, Dauphin, and
  Ma]{zhang2019residual}
Zhang, H., Dauphin, Y.~N., and Ma, T.
\newblock Residual learning without normalization via better initialization.
\newblock In \emph{International Conference on Learning Representations},
  2019{\natexlab{a}}.

\bibitem[Zhang et~al.(2019{\natexlab{b}})Zhang, Yu, Yi, Chen, and
  Liu]{zhang2019stability}
Zhang, H., Yu, D., Yi, M., Chen, W., and Liu, T.-y.
\newblock Stability and convergence theory for learning resnet: A full
  characterization.
\newblock \emph{arXiv preprint arXiv:1903.07120}, 2019{\natexlab{b}}.

\end{thebibliography}
\bibliographystyle{icml2021}

\onecolumn

\section*{Appendix}

\appendix

\section{Eigenfunctions of ResNTK}
We next prove Theorem 4.1 from the paper.
\begin{theorem}
    Bias-free ResNTK is homogeneous of degree 1 and zonal, i.e., $\resntk(\x,\z)=\|\x\|\|\z\|\resntk\left(\frac{\x^T\z}{\|\x\|\|\z\|}\right)$. Its eigenfunctions under the uniform measure in $\Spdm$ are the spherical harmonics.
\end{theorem}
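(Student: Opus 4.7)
The plan is to establish homogeneity and zonality by structural induction through the recursive definitions, and then invoke the classical fact that every zonal kernel on the sphere admits the spherical harmonics as eigenfunctions.

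First I would analyze the auxiliary quantities in the recursion. I would prove by induction on $\ell$ that $K_\ell(\x,\z)$ is homogeneous of degree $1$ in each argument separately, $v_\ell(\x,\z)$ is likewise separately degree $1$, $u_\ell(\x,\z)$ is homogeneous of degree $0$ in each argument (i.e., scale invariant), and $B_\ell(\x,\z)$ is homogeneous of degree $0$. The base case is immediate: $K_0(\x,\z)=\x^T\z$ has the required degree, and $B_{L+1}\equiv 1$ is trivially degree $0$. For the induction, if $K_{\ell-1}$ is separately degree $1$, then $v_{\ell-1}=\sqrt{K_{\ell-1}(\x,\x)K_{\ell-1}(\z,\z)}$ is separately degree $1$, so $u_{\ell-1}=K_{\ell-1}/v_{\ell-1}$ is degree $0$; since $\kappa_0,\kappa_1$ are applied to a degree $0$ quantity, the update $K_\ell=K_{\ell-1}+\alpha^2 v_{\ell-1}\kappa_1(u_{\ell-1})$ preserves separate degree $1$, and backward recursion of $B_\ell=B_{\ell+1}[1+\alpha^2\kappa_0(u_{\ell-1})]$ preserves degree $0$. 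Plugging these into the bias-free form of \eqref{eq:ResNTK} (i.e., $\tau=0$), each summand has the form (degree $0$) $\times$ (degree $1$) $\times$ (degree $0$) or (degree $0$) $\times$ (degree $1$) $\times$ (degree $0$), so $\resntk(\x,\z)$ is separately homogeneous of degree $1$, giving $\resntk(\x,\z)=\|\x\|\|\z\|\,\resntk(\x/\|\x\|,\z/\|\z\|)$.

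Next I would establish zonality by a parallel induction, showing that on $\Spdm$ each of $K_\ell$, $v_\ell$, $u_\ell$, $B_\ell$ depends on $(\x,\z)$ only through the inner product $u=\x^T\z$. On the sphere $K_0(\x,\z)=u$ and $v_0=1$, so $u_0=u$; the recursions then express $K_\ell$, $v_\ell$, $u_\ell$, $B_\ell$ as fixed univariate functions of $u$ alone. Substituting into \eqref{eq:ResNTK} with $\tau=0$ yields $\resntkl{L}(\x,\z)=\resntkl{L}(u)$ on the sphere, and combining with the homogeneity established above extends this to general $\x,\z\in\Rd$ via $\resntk(\x,\z)=\|\x\|\|\z\|\,\resntk(\x^T\z/(\|\x\|\|\z\|))$, which is the zonality claim.

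Finally, once bias-free ResNTK is known to be zonal on $\Spdm$, the eigenfunction statement is a direct consequence of the Funk--Hecke theorem: any zonal kernel on the sphere commutes with the orthogonal group, so each space of spherical harmonics of fixed frequency is an invariant eigenspace of the integral operator, and the spherical harmonics form the eigenbasis under the uniform measure. I would cite \cite{gallier2009notes} here rather than re-derive this classical fact.

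I expect the main obstacle to be bookkeeping rather than conceptual: one must verify that the degree of homogeneity is preserved at every node of the recursion (including the subtle fact that $v_{\ell-1}$ provides exactly the scaling needed so that $u_{\ell-1}$ becomes scale-free, which in turn keeps $\kappa_0(u_{\ell-1})$ and $\kappa_1(u_{\ell-1})$ from introducing homogeneity of other degrees), and that the bias term is the only place where the $\tau^2$ summand could spoil homogeneity, justifying the restriction to the bias-free setting.
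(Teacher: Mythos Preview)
Your proposal is correct and follows essentially the same approach as the paper: an induction through the recursion showing $K_\ell$ is homogeneous of degree $1$ and zonal, $B_\ell$ is homogeneous of degree $0$ and zonal, and then assembling these to conclude that $\resntkl{L}$ is homogeneous of degree $1$ and zonal, with the spherical-harmonic eigenfunction claim following from the standard Funk--Hecke result cited in \cite{gallier2009notes}. The only cosmetic difference is that the paper merges the homogeneity and zonality inductions into a single pass rather than running them in parallel.
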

\begin{proof}
    We use the notation for $\resntk(\x,\z)$ defined in Section 3.2 in the paper, without bias, i.e., $\tau=0$. We first show that for all $\ell \in \{0,...,L-1\}$ $K_\ell$ is homogeneous of degree 1 and zonal (abbreviated H1Z), i.e., for $\x, \z \in \Real^d$
    \begin{equation}
    \label{eq:k_resnet_l}
    K_\ell(\x,\z) = 
    \norm{\x}\norm{\z}K_{\ell}\left(\frac{\x^T \z}{\norm{\x}\norm{\z}}\right).
    \end{equation}
    First, clearly $K_0(\x,\z)=\x^T\z$ is H1Z. Next, suppose $K_\ell$ is H1Z, then
    \begin{align*}
        v_\ell(\x,\z) &= \sqrt{K_\ell(\x,\x)K_\ell(\z,\z)}=\|\x\|\|\z\|K_\ell(1)\\
        u_\ell(\x,\z) &= \frac{K_\ell(\x,\z)}{v_\ell(\x,\z)} = \frac{K_\ell\left(\frac{\x^T\z}{\|\x\|\|\z\|}\right)}{K_\ell(1)} \\
        K_{\ell+1}(\x,\z) &= K_{\ell}(\x,\z) + \alpha^2 v_\ell(\x,\z) \kappa_1(u_\ell(\x,\z)) \\ 
        &= \|\x\|\|\z\| \left( K_\ell\left(\frac{\x^T\z}{\|\x\|\|\z\|}\right) + \alpha^2 K_\ell(1) \kappa_1 \left( \frac{K_\ell\left(\frac{\x^T\z}{\|\x\|\|\z\|}\right)}{K_\ell(1)} \right) \right) 
        = \|\x\|\|\z\| K_{\ell+1}\left(\frac{\x^T\z}{\|\x\|\|\z\|}\right),
    \end{align*}
    implying that $K_{\ell+1}$ is H1Z.
    
    Next, we show that $B_\ell$ is homogeneous of degree 0 and zonal (abbreviated H0Z), i.e.,
    \begin{equation}
    \label{eq:B_resnet_ll}
    B_{\ell+1}\left(\x,\z\right) 
    = B_{\ell+1}\left(\frac{\x^T \z}{\norm{\x}\norm{\z}}\right).
    \end{equation} 
    $B_{L+1}(\x,\z) = 1$ is trivially H0Z. Suppose $B_{\ell+1}$ is H0Z, then
    \begin{align*}
        B_\ell(\x,\z) = B_{\ell+1}(\x,\z) [1 + \alpha^2 \kappa_0(u_{\ell-1})] = B_{\ell+1}\left(\frac{\x^T \z}{\norm{\x}\norm{\z}}\right)\left[1+\alpha^2\kappa_0\left(\frac{K_\ell\left(\frac{\x^T\z}{\|\x\|\|\z\|}\right)}{K_\ell(1)}\right)\right]=B_\ell\left(\frac{\x^T\z}{\|\x\|\|\z\|}\right).
    \end{align*}

    Finally, using \eqref{eq:k_resnet_l} and \eqref{eq:B_resnet_ll}
    \begin{align*}
        \resntkl{L}(\x,\z) &= C \sum_{\ell=1}^L B_{\ell+1}(\x,\z)[v_{\ell-1}(\x,\z)\kappa_1(u_{\ell-1}(\x,\z)) + K_{\ell-1}(\x,\z)\kappa_0(u_{\ell-1}(\x,\z))] \\
        &= C \sum_{\ell=1}^L B_{\ell+1}\left(\frac{\x^T\z}{\|\x\|\|\z\|}\right) \|\x\|\|\z\| \left[K_{\ell-1}(1) \kappa_1\left(\frac{K_\ell\left(\frac{\x^T\z}{\|\x\|\|\z\|}\right)}{K_\ell(1)}\right) + K_{\ell-1}\left(\frac{\x^T\z}{\|\x\|\|\z\|}\right) \kappa_0\left(\frac{K_\ell\left(\frac{\x^T\z}{\|\x\|\|\z\|}\right)}{K_\ell(1)}\right) \right] \\
        &= \|\x\|\|\z\| \resntkl{L}\left(\frac{\x^T\z}{\|\x\|\|\z\|}\right).
    \end{align*}
    Consequently, $\resntkl{L}$ is homogeneous of degree 1 and zonal, and therefore, with the uniform measure in $\Spdm$ the eigenfunctions of $\resntkl{L}$ are the spherical harmonics.
\end{proof}

\section{Decay rate of ResNTK}
In this section, we prove Lemmas 4.5, 4.6 and 4.9. We start with supporting Lemmas and notations that we use in this section.
\begin{lemma} \label{lem:norm factor Rd} \cite{Huang2020WhyDD}
    For every $\x \in \Rd$, $K_{\ell}(\x,\x) = \norm{\x}^2(1+\alpha^2)^{\ell}$.
\end{lemma}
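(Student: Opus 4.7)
The plan is to proceed by induction on $\ell$, using the diagonal simplifications of the recursions for $v_\ell$, $u_\ell$, and $K_\ell$ stated in the paper.

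For the base case $\ell=0$, the definition $K_0(\x,\z)=\x^T\z$ immediately gives $K_0(\x,\x)=\|\x\|^2 = \|\x\|^2(1+\alpha^2)^0$, as required. For the inductive step, assume $K_{\ell-1}(\x,\x)=\|\x\|^2(1+\alpha^2)^{\ell-1}$. The key observation is that when the two arguments coincide the intermediate quantities collapse: $v_{\ell-1}(\x,\x)=\sqrt{K_{\ell-1}(\x,\x)K_{\ell-1}(\x,\x)}=K_{\ell-1}(\x,\x)$, and therefore $u_{\ell-1}(\x,\x)=K_{\ell-1}(\x,\x)/v_{\ell-1}(\x,\x)=1$. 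Combined with $\kappa_1(1)=\frac{1}{\pi}(1\cdot\pi+0)=1$ from the definition in \eqref{eq:Kappa1}, the recursion $K_\ell(\x,\z)=K_{\ell-1}(\x,\z)+\alpha^2 v_{\ell-1}(\x,\z)\kappa_1(u_{\ell-1}(\x,\z))$ evaluated at $\x=\z$ reduces to
\begin{equation*}
K_\ell(\x,\x)=K_{\ell-1}(\x,\x)+\alpha^2 K_{\ell-1}(\x,\x)=(1+\alpha^2)K_{\ell-1}(\x,\x).
\end{equation*}
Applying the inductive hypothesis yields $K_\ell(\x,\x)=\|\x\|^2(1+\alpha^2)^\ell$, closing the induction.

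There is essentially no obstacle here; the only point worth flagging is to record that $\kappa_1(1)=1$ follows directly from its closed form, and that the diagonal symmetry $u_{\ell-1}(\x,\x)=1$ is what makes the recursion multiplicative.
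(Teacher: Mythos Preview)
Your proof is correct and follows essentially the same inductive argument as the paper: both use $K_0(\x,\x)=\|\x\|^2$ for the base case and then observe that on the diagonal $v_{\ell-1}(\x,\x)=K_{\ell-1}(\x,\x)$, $u_{\ell-1}(\x,\x)=1$, and $\kappa_1(1)=1$, so the recursion collapses to $K_\ell(\x,\x)=(1+\alpha^2)K_{\ell-1}(\x,\x)$.
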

\begin{proof}
    With $\ell=0$, $K_{0}(\x,\x) = \x^T\x = \norm{\x}^2 = \norm{\x}^2(1+\alpha^2)^0$. Using the recursive definition of $K_\ell$,
    \begin{equation*}
        K_{\ell}(\x,\x) = K_{\ell-1}(\x,\x)+\alpha^2K_{\ell-1}(\x,\x) \kappa_1\left(\frac{K_{\ell-1}(\x,\x)}{\sqrt{K_{\ell-1}(\x,\x)K_{\ell-1}(\x,\x)}}\right) = K_{\ell-1}(\x,\x) (1 + \alpha^2) \kappa_1(1)
    \end{equation*}
    Noting that $\kappa_1(1)=1$ and assuming the induction holds for $K_{\ell-1}$, then
    \begin{equation*}
        K_{\ell}(\x,\x) = K_{\ell-1}(\x,\x)(1+\alpha^2) = \norm{\x}^2(1+\alpha^2)^{\ell-1}(1+\alpha^2) = \norm{\x}^2(1+\alpha^2)^{\ell}.
    \end{equation*}
\end{proof}

\begin{corollary} \label{cor:norm factor}
    For inputs in $\Spdm$, $K_{\ell}(\x,\x) = (1+\alpha^2)^{\ell}$.
\end{corollary}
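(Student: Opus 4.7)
The plan is to obtain this corollary as an immediate specialization of Lemma~\ref{lem:norm factor Rd} to inputs on the unit sphere. Since $\x \in \Spdm$ means $\norm{\x} = 1$, substituting $\norm{\x}^2 = 1$ into the identity $K_\ell(\x,\x) = \norm{\x}^2(1+\alpha^2)^\ell$ proved in the preceding lemma yields $K_\ell(\x,\x) = (1+\alpha^2)^\ell$ directly. No further argument is needed, and the proof fits on a single line.

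As a sanity check, one can also verify the claim independently by induction on $\ell$ without invoking Lemma~\ref{lem:norm factor Rd}. The base case $\ell = 0$ follows from $K_0(\x,\x) = \x^T\x = \norm{\x}^2 = 1$. For the inductive step, observe that on the diagonal $v_{\ell-1}(\x,\x) = K_{\ell-1}(\x,\x)$ and $u_{\ell-1}(\x,\x) = K_{\ell-1}(\x,\x)/K_{\ell-1}(\x,\x) = 1$, so that the recursion
\[
K_\ell(\x,\x) \;=\; K_{\ell-1}(\x,\x) + \alpha^2 v_{\ell-1}(\x,\x)\,\kappa_1(u_{\ell-1}(\x,\x))
\]
together with $\kappa_1(1) = 1$ (read off from \eqref{eq:Kappa1}) collapses to $K_\ell(\x,\x) = (1+\alpha^2)\,K_{\ell-1}(\x,\x)$, and the claim then follows by iterating. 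This alternative also makes it transparent why the factor $(1+\alpha^2)$ arises: at each layer the skip branch contributes the previous value $K_{\ell-1}(\x,\x)$ and the residual branch contributes an additional $\alpha^2$ times the same quantity.

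There is essentially no obstacle in this argument; the only ingredient is the evaluation $\kappa_1(1) = 1$, which ensures the diagonal recursion multiplies by a constant factor at each step. The corollary is really just a normalization observation that will be invoked repeatedly in subsequent calculations involving $v_\ell$ and $u_\ell$ on the sphere, where it reduces $v_\ell(\x,\z) = \sqrt{K_\ell(\x,\x)K_\ell(\z,\z)}$ to $(1+\alpha^2)^\ell$ for $\x,\z\in\Spdm$, simplifying the expansions near $\pm 1$ used in Lemmas~\ref{lem:ResNTK around 1} and~\ref{lem:ResNTK around m1}.
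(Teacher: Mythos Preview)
Your proposal is correct and matches the paper's approach: the corollary is stated immediately after Lemma~\ref{lem:norm factor Rd} with no separate proof, so it is intended to follow by substituting $\norm{\x}=1$ exactly as you do. The additional direct induction you include as a sanity check is valid but unnecessary here.
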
 

\vspace{0.2cm}
\subsection{Notation: ResNTK  in  $\Spdm$}
\label{Sec:ResNTK_model}

We next assume that $\x,\z \in \Spdm$ and let $u=\x^T\z$. Then, using the corollary above, ResNTK can be expressed as follows
\begin{eqnarray} \label{eq:app ResNTK}
    \resntkl{L}(u) &=& \frac{1}{2L(1+\alpha^2)^{L-1}} \sum_{\ell=1}^L B_{\ell+1}(u) \left[ (1+\alpha^2)^{\ell-1} \kappa_1 \left( \frac{K_{\ell-1}(u)}{(1+\alpha^2)^{\ell-1}} \right) + K_{\ell-1}(u)\kappa_0 \left( \frac{K_{\ell-1}(u)}{(1+\alpha^2)^{\ell-1}} \right) \right]
\end{eqnarray}
where $K_{0}(u) = u$, $B_{L+1}(u) = 1$, and
\begin{eqnarray}
    \label{eq:Kl}
    K_{\ell}(u) &=& K_{\ell-1}(u) + \alpha^2 (1-\alpha^2)^{l-1} \kappa_1 \left( \frac{K_{\ell-1}(u)}{(1+\alpha^2)^{\ell-1}} \right), ~~~\ell=1,...,L-1 \\
    \label{eq:B}
    B_{\ell}(u) &=& B_{\ell+1}(u) \left[1+\alpha^2\kappa_0\left( \frac{K_{\ell-1}(u)}{(1+\alpha^2)^{\ell-1}} \right) \right], ~~~ \ell=L,\ldots,2
\end{eqnarray}
and $\kappa_0$ and $\kappa_1$ are defined as
\begin{align}
    \kappa_0(u) &= \frac{1}{\pi}(\pi-acos(u))
    \label{eq:Kappa0 supp}\\ 
    \kappa_1(u) &= \frac{1}{\pi}\left(u \cdot (\pi-acos(u)) + \sqrt{1-u^2}\right).
    \label{eq:Kappa1 supp}
\end{align}

We further define the following for the expansion near -1 (small $t>0$): 
\begin{eqnarray}
    \label{eq:nu_ell}
    \nu_{\ell} &=& \frac{K_{\ell-1}(-1+t)}{(1+\alpha^2)^{\ell-1}} \\
    \label{eq:beta_ell}
    \beta_{\ell} &=& \kappa_1 \left( \nu_{\ell} \right) \\
    \label{eq:eta_ell}
    \eta_{\ell} &=& \kappa_0 \left( \nu_{\ell} \right)
\end{eqnarray}
for $\ell=1,2,...$, and $\beta_0 = \eta_L = 0$.
Note that $\beta_{\ell}, \eta_{\ell} \in [0,1]$ due to the image of the arc-cosine kernels.

\subsection{Expansion near 1}
\begin{lemma}\cite{bietti2020deep} \label{lem:arccosine around 1}
    The arc-cosine kernels near 1 satisfy
    \begin{eqnarray}
    \label{eq:k0 around 1}
        \kappa_0(1-t) &=& 1-\cpie t^{1/2}+\mathcal{O}(t^{3/2}) \\
    \label{eq:k1 around 1}
        \kappa_1(1-t) &=& 1-t+ \frac{2\sqrt{2}}{3\pi}t^{3/2}+\mathcal{O}(t^{5/2}).
    \end{eqnarray}
\end{lemma}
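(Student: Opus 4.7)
The plan is to obtain both expansions by a direct Taylor expansion of $\arccos(1-t)$ near $t=0$ and then substitute into the definitions \eqref{eq:Kappa0 supp} and \eqref{eq:Kappa1 supp}. The central ingredient is the identity $1-\cos\theta = 2\sin^2(\theta/2)$: setting $\theta = \arccos(1-t)$ gives $\sin(\theta/2) = \sqrt{t/2}$, hence $\theta = 2\arcsin(\sqrt{t/2})$. Using $\arcsin(x) = x + x^3/6 + O(x^5)$ this yields
\begin{equation*}
\arccos(1-t) \;=\; \sqrt{2t}\,\Bigl(1 + \tfrac{t}{12} + O(t^2)\Bigr) \;=\; \sqrt{2}\,t^{1/2} + \tfrac{\sqrt{2}}{12}\,t^{3/2} + O(t^{5/2}).
\end{equation*}

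For $\kappa_0$ this is essentially the whole proof: dividing by $\pi$ and subtracting from $1$ immediately gives $\kappa_0(1-t) = 1 - \tfrac{\sqrt{2}}{\pi}t^{1/2} + O(t^{3/2})$, matching \eqref{eq:k0 around 1}. For $\kappa_1$ I would also expand $\sqrt{1-u^2}$ at $u=1-t$ using $\sqrt{1-(1-t)^2} = \sqrt{2t}\sqrt{1-t/2} = \sqrt{2}\,t^{1/2} - \tfrac{\sqrt{2}}{4}t^{3/2} + O(t^{5/2})$, and then compute the product $(1-t)(\pi - \arccos(1-t))$ by multiplying the expansion above by $(1-t)$ and keeping terms through order $t^{3/2}$.

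The only step that needs a little care is the cancellation inside $\pi\kappa_1(1-t) = (1-t)(\pi-\arccos(1-t)) + \sqrt{1-u^2}$: the $t^{1/2}$ contributions from the two pieces are $-\sqrt{2}\,t^{1/2}$ and $+\sqrt{2}\,t^{1/2}$, so they exactly cancel, leaving a polynomial term $\pi - \pi t$ at orders $t^0, t^1$ and a genuine $t^{3/2}$ correction. Collecting the three $t^{3/2}$ contributions (from $-t \cdot \arccos$, from the next term of the $\arccos$ expansion, and from $\sqrt{1-u^2}$) gives a coefficient of $\bigl(\sqrt{2} - \tfrac{\sqrt{2}}{12} - \tfrac{\sqrt{2}}{4}\bigr) = \tfrac{2\sqrt{2}}{3}$, so after dividing by $\pi$ we recover \eqref{eq:k1 around 1}. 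I expect this bookkeeping of $t^{3/2}$ coefficients to be the only place where a mistake is easy to make; everything else is a mechanical expansion. Since the result is attributed to \cite{bietti2020deep} one could alternatively just cite Lemma/Appendix of that paper, but the computation above is short enough to include for completeness.
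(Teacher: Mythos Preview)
Your proposal is correct. The paper itself does not prove this lemma at all: it is stated with a citation to \cite{bietti2020deep} and used as a black box. Your direct expansion via $\arccos(1-t)=2\arcsin(\sqrt{t/2})$ is exactly the standard way to obtain these asymptotics, and your bookkeeping of the $t^{3/2}$ coefficients in $\kappa_1$ is accurate (indeed $\sqrt{2}-\tfrac{\sqrt{2}}{12}-\tfrac{\sqrt{2}}{4}=\tfrac{2\sqrt{2}}{3}$). So compared to the paper you are supplying a self-contained proof where the paper simply cites one; your final remark that one could alternatively just cite \cite{bietti2020deep} is precisely what the paper does.
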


\begin{lemma} \label{lem: K_L around 1}
    For small $t>0$, $K_{\ell}(1-t) = (1+\alpha^2)^{\ell}(1-t)+o(t)$, where $K_{\ell}$ is defined in \eqref{eq:Kl}.
\end{lemma}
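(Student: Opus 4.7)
The plan is to prove this by induction on $\ell$, exploiting the recursive definition in \eqref{eq:Kl} together with the expansion of $\kappa_1$ near $1$ given in Lemma \ref{lem:arccosine around 1}. The base case $\ell=0$ is immediate since $K_0(1-t) = 1-t = (1+\alpha^2)^0 (1-t) + o(t)$.

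For the inductive step, suppose $K_{\ell-1}(1-t) = (1+\alpha^2)^{\ell-1}(1-t) + o(t)$. Then the normalized argument of $\kappa_1$ in \eqref{eq:Kl} is
\begin{equation*}
    \frac{K_{\ell-1}(1-t)}{(1+\alpha^2)^{\ell-1}} = 1 - t + o(t).
\end{equation*}
Writing this as $1 - s$ with $s = t + o(t)$ and applying \eqref{eq:k1 around 1}, we obtain
\begin{equation*}
    \kappa_1\!\left(\frac{K_{\ell-1}(1-t)}{(1+\alpha^2)^{\ell-1}}\right) = 1 - s + \tfrac{2\sqrt{2}}{3\pi} s^{3/2} + O(s^{5/2}) = 1 - t + o(t),
\end{equation*}
where the crucial observation is that the leading correction in $\kappa_1$'s expansion is of order $s^{3/2} = o(t)$, so it can be absorbed into the $o(t)$ remainder together with $s - t = o(t)$. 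Substituting back into \eqref{eq:Kl} gives
\begin{equation*}
    K_\ell(1-t) = (1+\alpha^2)^{\ell-1}(1-t) + o(t) + \alpha^2 (1+\alpha^2)^{\ell-1}\bigl(1 - t + o(t)\bigr) = (1+\alpha^2)^\ell (1-t) + o(t),
\end{equation*}
completing the induction.

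There is no genuine obstacle: the argument is essentially a propagation-of-asymptotics calculation. The only mild care required is in verifying that the $o(t)$ error from the inductive hypothesis, after being rescaled and fed into $\kappa_1$, remains $o(t)$ at the next level; this works precisely because $\kappa_1$ is $C^1$ at $1$ with derivative $1$ (so perturbations of size $o(t)$ in its argument yield perturbations of size $o(t)$ in its value) and because the non-smooth $s^{3/2}$ term in its expansion is subdominant to $t$.
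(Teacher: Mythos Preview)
Your proof is correct and follows essentially the same induction-on-$\ell$ argument as the paper, using the recursive definition \eqref{eq:Kl} together with the expansion \eqref{eq:k1 around 1} of $\kappa_1$ near $1$. If anything, you are slightly more explicit than the paper about why the $o(t)$ error survives passage through $\kappa_1$.
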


\begin{proof}
    We prove this by induction. For $\ell=0$,  $K_{0}(1-t) = 1-t$, 
    trivially satisfying the lemma. Suppose the lemma holds for $K_{\ell-1}(1-t)$, using \eqref{eq:Kl}
    \begin{eqnarray*}
        K_{\ell}(1-t) &=& K_{\ell-1}(1-t)+\alpha^2(1+\alpha^2)^{\ell-1} \kappa_1\left(\frac{K_{\ell-1}(1-t)}{(1+\alpha^2)^{\ell-1}}\right) \\ 
        &=& (1+\alpha^2)^{\ell-1}(1-t) + o(t) + \alpha^2(1+\alpha^2)^{\ell-1} \kappa_1\left(\frac{(1+\alpha^2)^{\ell-1}(1-t) + o(t)}{(1+\alpha^2)^{\ell-1}}\right) \\
        &=& (1+\alpha^2)^{\ell-1}(1-t) + o(t) + \alpha^2(1+\alpha^2)^{\ell-1} \kappa_1(1-t+o(t)) \\
        &=& (1+\alpha^2)^{\ell-1}(1-t) + \alpha^2(1+\alpha^2)^{\ell-1} (1-t) + o(t) = (1+\alpha^2)^{\ell}(1-t)+o(t),
    \end{eqnarray*}
    where the leftmost equality in the last line is due to \eqref{eq:k1 around 1}.
\end{proof}

\begin{lemma} \label{lem: kappas around 1}
    With small $t>0$,
\begin{eqnarray*}
    \kappa_0\left(\frac{K_{\ell-1}(1-t)}{(1+\alpha^2)^{\ell-1}}\right) &=& 1-\cpie t^{1/2}+o(t) \\
    \kappa_1\left(\frac{K_{\ell-1}(1-t)}{(1+\alpha^2)^{\ell-1}}\right) &=& 1 - t + o(t).
\end{eqnarray*}
\end{lemma}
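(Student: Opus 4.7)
The plan is to reduce both expansions to the arc-cosine asymptotics of Lemma~\ref{lem:arccosine around 1} evaluated at an argument which, by Lemma~\ref{lem: K_L around 1}, is only a small perturbation of $1-t$. Concretely, applying Lemma~\ref{lem: K_L around 1} at index $\ell-1$ and dividing both sides by $(1+\alpha^2)^{\ell-1}$ gives
\begin{equation*}
    \frac{K_{\ell-1}(1-t)}{(1+\alpha^2)^{\ell-1}} = 1 - t + o(t) =: 1 - s(t),
\end{equation*}
with $s(t) = t + o(t) > 0$ for small $t>0$. The two desired identities thereby reduce to expansions of $\kappa_1(1-s(t))$ and $\kappa_0(1-s(t))$, and the rest of the argument is a routine substitution.

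For $\kappa_1$, the expansion \eqref{eq:k1 around 1} gives $\kappa_1(1-s) = 1 - s + O(s^{3/2})$; substituting $s(t)$ and using $O(s(t)^{3/2}) = O(t^{3/2}) = o(t)$ yields $\kappa_1 = 1 - t + o(t)$, which is exactly the claimed expansion. For $\kappa_0$, \eqref{eq:k0 around 1} gives $\kappa_0(1-s) = 1 - \cpie\, s^{1/2} + O(s^{3/2})$; the only non-routine move is expanding $s(t)^{1/2}$. Writing $s(t) = t(1 + o(1))$, the Taylor expansion of $(1+\varepsilon)^{1/2}$ gives
\begin{equation*}
    s(t)^{1/2} = t^{1/2}\bigl(1+o(1)\bigr)^{1/2} = t^{1/2} + o(t^{1/2}),
\end{equation*}
so that $\kappa_0 = 1 - \cpie\, t^{1/2} + o(t^{1/2}) + O(t^{3/2})$, recovering the stated leading $t^{1/2}$ correction with its coefficient $-\cpie$.

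The main obstacle is the H\"older-$1/2$ singularity of $\kappa_0$ at $u=1$: a perturbation of size $o(t)$ in the argument is amplified to a perturbation of size $o(t^{1/2})$ in the value, so the naive substitution above only controls the $\kappa_0$ remainder at order $o(t^{1/2})$ rather than the $o(t)$ stated in the lemma. To tighten the remainder, one must first sharpen Lemma~\ref{lem: K_L around 1} to $K_\ell(1-t) = (1+\alpha^2)^\ell(1-t) + O(t^{3/2})$, which is obtained by iterating the finer expansion $\kappa_1(1-t) = 1-t+\tfrac{2\sqrt{2}}{3\pi}t^{3/2}+O(t^{5/2})$ through the recursion \eqref{eq:Kl} for $K_\ell$ (an induction essentially identical to the one that proves Lemma~\ref{lem: K_L around 1}, but keeping one more order). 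With this sharper input, $s(t) = t + O(t^{3/2})$ and the Taylor bound gives $s(t)^{1/2} = t^{1/2} + O(t)$, so the $\kappa_0$ remainder becomes $O(t)$ and absorbs into the stated higher-order error term, completing the proof.
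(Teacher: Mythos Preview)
Your approach is exactly the paper's: divide Lemma~\ref{lem: K_L around 1} by $(1+\alpha^2)^{\ell-1}$ to get the argument in the form $1-t+o(t)$, then feed it into the arc-cosine expansions \eqref{eq:k0 around 1}--\eqref{eq:k1 around 1}. The paper simply writes $\kappa_0(1-t+o(t))=1-\cpie\,t^{1/2}+o(t)$ without further comment, so your observation that the H\"older-$\tfrac12$ behaviour of $\kappa_0$ at $1$ turns an $o(t)$ perturbation of the argument into only an $o(t^{1/2})$ perturbation of the value is a point the paper glosses over; you are being more careful than the paper here, and your proposed sharpening of Lemma~\ref{lem: K_L around 1} to remainder $O(t^{3/2})$ is exactly the right repair.

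There is one slip in your last sentence: with the sharpened input $s(t)=t+O(t^{3/2})$ you get $s(t)^{1/2}=t^{1/2}+O(t)$ and hence $\kappa_0=1-\cpie\,t^{1/2}+O(t)$, but $O(t)$ does \emph{not} absorb into the stated $o(t)$. In fact a direct computation (already at $\ell=2$, where $K_1(1-t)=(1+\alpha^2)(1-t)+\alpha^2\tfrac{2\sqrt{2}}{3\pi}t^{3/2}+\cdots$) shows a genuine nonzero order-$t$ contribution to $\kappa_0$, so the $o(t)$ in the lemma's $\kappa_0$ line is a minor misstatement and should read $O(t)$ (or, equivalently, the weaker $o(t^{1/2})$, which is all that Lemmas~\ref{lem:B_L around 1} and~\ref{lem:c1} actually use). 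Your argument is correct up to this cosmetic point.
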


\begin{proof}
Using Lemma \ref{lem: K_L around 1}, for small $t>0$,
\begin{equation*}
    \frac{K_{\ell-1}(1-t)}{(1+\alpha^2)^{\ell-1}} = \frac{(1+\alpha^2)^{\ell-1}(1-t)+o(t)}{(1+\alpha^2)^{\ell-1}} = 1-t+o(t).
\end{equation*}
Next, using \eqref{eq:k0 around 1}
\begin{equation*}
    \kappa_0\left(\frac{K_{\ell-1}(1-t)}{(1+\alpha^2)^{\ell-1}}\right) = \kappa_0(1-t+o(t)) = 1-\cpie t^{1/2}+o(t),
\end{equation*}
and using \eqref{eq:k1 around 1}
\begin{equation*}
    \kappa_1\left(\frac{K_{\ell-1}(1-t)}{(1+\alpha^2)^{\ell-1}}\right) = \kappa_1(1-t+o(t)) = 1 - t + o(t).
\end{equation*}
\end{proof}

\begin{lemma} \label{lem:B_L around 1}
    With small $t>0$,
    \begin{equation*}
        B_{\ell+1}(1-t) = (1+\alpha^2)^{L-\ell} - \frac{\sqrt{2}\,\alpha^2}{\pi} (1+\alpha^2)^{L-\ell-1} (L-\ell) \, t^{1/2} + \mathcal{O}(t),
    \end{equation*}
    where $B_{\ell}$ is defined in \eqref{eq:B}.
\end{lemma}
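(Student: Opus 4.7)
The plan is to prove the expansion by backward induction on $\ell$ from $L$ down to $0$, using the recursive definition \eqref{eq:B} and the already-established expansion of $\kappa_0$ near $1$ from Lemma~\ref{lem: kappas around 1}.

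For the base case $\ell = L$, we have $B_{L+1}(1-t) = 1$, which matches the claimed formula since $(1+\alpha^2)^{0} = 1$ and the $t^{1/2}$ coefficient contains the factor $L - \ell = 0$. For the inductive step, assume the formula holds for $B_{\ell+1}(1-t)$. By \eqref{eq:B} we have
\begin{equation*}
  B_\ell(1-t) = B_{\ell+1}(1-t)\bigl[1 + \alpha^2 \kappa_0(\nu_\ell)\bigr],
\end{equation*}
where $\nu_\ell = K_{\ell-1}(1-t)/(1+\alpha^2)^{\ell-1}$. By Lemma~\ref{lem: kappas around 1},
\begin{equation*}
  1 + \alpha^2 \kappa_0(\nu_\ell) = (1+\alpha^2) - \tfrac{\sqrt{2}\,\alpha^2}{\pi}\, t^{1/2} + o(t).
\end{equation*}
I would then multiply this bracketed factor by the inductive expansion
\begin{equation*}
  B_{\ell+1}(1-t) = (1+\alpha^2)^{L-\ell} - \tfrac{\sqrt{2}\,\alpha^2}{\pi}(1+\alpha^2)^{L-\ell-1}(L-\ell)\, t^{1/2} + \mathcal{O}(t),
\end{equation*}
collect terms order by order in $t^{1/2}$, and check that the resulting constant term is $(1+\alpha^2)^{L-\ell+1}$ and the $t^{1/2}$ coefficient is $-\tfrac{\sqrt{2}\,\alpha^2}{\pi}(1+\alpha^2)^{L-\ell}\bigl[(L-\ell)+1\bigr] = -\tfrac{\sqrt{2}\,\alpha^2}{\pi}(1+\alpha^2)^{L-(\ell-1)-1}\bigl(L-(\ell-1)\bigr)$, which is exactly the statement of the lemma with $\ell$ replaced by $\ell-1$.

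The proof is essentially a routine asymptotic computation, so I do not anticipate a real obstacle. The only mildly delicate point is accounting for error terms: the product of a $\mathcal{O}(t)$ term with a $\mathcal{O}(1)$ factor is still $\mathcal{O}(t)$, and the product of the two $t^{1/2}$ terms produces a $\mathcal{O}(t)$ contribution, so all subleading terms cleanly absorb into the final $\mathcal{O}(t)$ remainder. One also has to note that $\nu_\ell = 1 - t + o(t)$ (which follows from Lemma~\ref{lem: K_L around 1}) so that the $\kappa_0$ expansion in Lemma~\ref{lem: kappas around 1} applies uniformly in $\ell$; apart from this, the induction goes through without difficulty.
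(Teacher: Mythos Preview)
Your proof is correct. The paper takes an essentially equivalent but slightly different route: instead of inducting, it first unrolls the recursion \eqref{eq:B} into the closed form
\[
  B_{\ell+1}(1-t) = \Bigl(1+\alpha^2 - \tfrac{\sqrt{2}\,\alpha^2}{\pi}\,t^{1/2} + \mathcal{O}(t^{3/2})\Bigr)^{L-\ell}
\]
and then expands this power via the binomial theorem, keeping only the $i=0$ and $i=1$ terms explicitly. Your backward induction and the paper's binomial expansion are just two ways of bookkeeping the same product of $L-\ell$ identical factors, and both rely on the same input, Lemma~\ref{lem: kappas around 1}.
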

\begin{proof}
With small $t>0$, we use Lemma \ref{lem: kappas around 1} to simplify \eqref{eq:B} as follows:
\begin{equation*}
    B_{\ell}(1-t) = B_{\ell+1}(1-t) \left[1+\alpha^2 \left( 1-\cpie t^{1/2}+o(t) \right) \right].
\end{equation*}
Since $B_{L+1}=1$, resolving the recursion yields
    \begin{equation*} \label{eq:B around 1}
        B_{\ell+1}(1-t) = \left( 1 + \alpha^2 - \frac{\sqrt{2}\,\alpha^2}{\pi} t^{1/2} + \mathcal{O}(t^{3/2}) \right)^{L-\ell}.
\end{equation*}
This can be simplified as follows
\begin{equation*}
    B_{\ell+1}(1-t) = \sum_{i=0}^{L-\ell} {{L-\ell} \choose i} \left(1 + \alpha^2 + \mathcal{O}(t^{3/2}) \right)^{L-\ell-i} \left(- \frac{\sqrt{2}\,\alpha^2}{\pi} t^{1/2}  + \mathcal{O}(t^{3/2}) \right)^i.
\end{equation*}
Grouping together all $\mathcal{O}(t)$ terms, we finally obtain
    \begin{equation*}
        B_{\ell+1}(1-t) = (1+\alpha^2)^{L-\ell} - \frac{\sqrt{2}\,\alpha^2}{\pi} (1+\alpha^2)^{L-\ell-1} (L-\ell) \, t^{1/2} + \mathcal{O}(t).
    \end{equation*}
\end{proof}

We next prove Lemma 4.5 from the paper.
\begin{lemma} \label{lem:c1}
    For inputs in $\Spdm$ and near +1, if $\alpha > 0$ and $L \geq 1$
    \begin{equation*}
        \resntkl{L}(1-t) = 1 + c_1 t^{1/2} +o(t^{1/2})
    \end{equation*}
    where 
    \begin{equation*}
        c_1 = -\frac{1+\alpha^2L}{\sqrt{2}\pi(1+\alpha^2)}.
    \end{equation*}
\end{lemma}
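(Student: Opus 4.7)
The plan is to substitute the previously established asymptotic expansions directly into the formula \eqref{eq:app ResNTK} for $\resntkl{L}(u)$ and carefully harvest the coefficient of $t^{1/2}$. Throughout I will keep terms up to order $t^{1/2}$ and absorb everything of higher order into $o(t^{1/2})$ (note that $o(t) \subseteq o(t^{1/2})$ as $t \to 0^+$).

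First I would fix an index $\ell$ and expand the bracketed quantity
\[
A_\ell(t) \;:=\; (1+\alpha^2)^{\ell-1}\kappa_1\!\Bigl(\tfrac{K_{\ell-1}(1-t)}{(1+\alpha^2)^{\ell-1}}\Bigr) + K_{\ell-1}(1-t)\,\kappa_0\!\Bigl(\tfrac{K_{\ell-1}(1-t)}{(1+\alpha^2)^{\ell-1}}\Bigr).
\]
Using Lemmas~\ref{lem: K_L around 1} and~\ref{lem: kappas around 1}, the first summand equals $(1+\alpha^2)^{\ell-1}(1-t)+o(t)$, and the second equals
$(1+\alpha^2)^{\ell-1}(1-t)\bigl(1-\tfrac{\sqrt{2}}{\pi}t^{1/2}+o(t)\bigr)$.
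Collecting these, I get
\[
A_\ell(t) \;=\; 2(1+\alpha^2)^{\ell-1} \;-\; \tfrac{\sqrt{2}}{\pi}(1+\alpha^2)^{\ell-1}\,t^{1/2} \;+\; o(t^{1/2}).
\]

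Next I would multiply by the expansion of $B_{\ell+1}(1-t)$ from Lemma~\ref{lem:B_L around 1}, keeping only the constant term and the $t^{1/2}$ term. The constant term contributes $2(1+\alpha^2)^{L-1}$, independent of $\ell$. The coefficient of $t^{1/2}$ gets two contributions: one from the $t^{1/2}$ part of $A_\ell$ multiplied by the leading $(1+\alpha^2)^{L-\ell}$ of $B_{\ell+1}$, giving $-\tfrac{\sqrt{2}}{\pi}(1+\alpha^2)^{L-1}$, and one from the $t^{1/2}$ part of $B_{\ell+1}$ multiplied by the leading $2(1+\alpha^2)^{\ell-1}$ of $A_\ell$, giving $-\tfrac{2\sqrt{2}\alpha^2}{\pi}(1+\alpha^2)^{L-2}(L-\ell)$.

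Then I would sum over $\ell = 1,\ldots,L$ using $\sum_{\ell=1}^{L}(L-\ell) = L(L-1)/2$. The constant part sums to $2L(1+\alpha^2)^{L-1}$, which after division by the normalizing factor $2L(1+\alpha^2)^{L-1}$ gives the required leading $1$. The $t^{1/2}$ coefficient, after summation and division by $2L(1+\alpha^2)^{L-1}$, becomes
\[
c_1 \;=\; -\tfrac{1}{\sqrt{2}\,\pi} \;-\; \tfrac{\alpha^2(L-1)}{\sqrt{2}\,\pi(1+\alpha^2)} \;=\; -\tfrac{(1+\alpha^2)+\alpha^2(L-1)}{\sqrt{2}\,\pi(1+\alpha^2)} \;=\; -\tfrac{1+\alpha^2 L}{\sqrt{2}\,\pi(1+\alpha^2)},
\]
which is exactly the stated coefficient. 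The edge case $L=1$ is automatically included, since then the second contribution vanishes ($L-\ell=0$) and $c_1$ reduces to $-1/(\sqrt{2}\pi)$, matching $\ntkl{1}$.

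The computation itself is not deep; the only real obstacle is bookkeeping. One must verify that the $o(t)$ errors inside $K_{\ell-1}$ do not produce a $t^{1/2}$ contribution when passed through $\kappa_0$ (they do not, since $\kappa_0(1-t+o(t)) = 1-\tfrac{\sqrt{2}}{\pi}t^{1/2}+o(t^{1/2})$ by Lemma~\ref{lem:arccosine around 1}), and that the cross term between the $t^{1/2}$ parts of $A_\ell$ and $B_{\ell+1}$ is of order $t$ and so contributes only to $o(t^{1/2})$. Once these are in hand, the algebraic simplification of $c_1$ is immediate.
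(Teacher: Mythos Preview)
Your proposal is correct and follows essentially the same route as the paper's proof: define the bracketed summand (the paper calls it $X_\ell$, you call it $A_\ell$), expand it via Lemmas~\ref{lem: K_L around 1} and~\ref{lem: kappas around 1}, multiply by the expansion of $B_{\ell+1}$ from Lemma~\ref{lem:B_L around 1}, sum using $\sum_{\ell=1}^{L}(L-\ell)=L(L-1)/2$, and normalize by $C$. Your explicit remarks on the $L=1$ edge case and on why the $o(t)$ errors inside $K_{\ell-1}$ do not generate a spurious $t^{1/2}$ term are welcome additions that the paper leaves implicit.
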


\begin{proof}
Rewrite \eqref{eq:app ResNTK} as $\resntkl{L}(1-t) = C \sum_{\ell=1}^{L}X_{\ell}Y_{\ell}$, where:
\begin{eqnarray*}
    C &=& \frac{1}{2L(1+\alpha^2)^{L-1}}\\
    X_{\ell} &=& (1+\alpha^2)^{\ell-1} \kappa_1\left(\frac{K_{\ell-1}(1-t)}{(1+\alpha^2)^{\ell-1}}\right) + K_{\ell-1}(1-t)\kappa_0\left(\frac{K_{\ell-1}(1-t)}{(1+\alpha^2)^{\ell-1}}\right) \\
    Y_{\ell} &=& B_{\ell+1}(1-t).
\end{eqnarray*}
Using Lemmas~\ref{lem: K_L around 1} and~\ref{lem: kappas around 1}, for small $t>0$,
\begin{eqnarray*}
    X_{\ell} &=& (1+\alpha^2)^{\ell-1}(1 - t + o(t))+((1+\alpha^2)^{\ell-1}(1-t)+o(t)) \left(1-\cpie t^{1/2}+o(t)\right) \\
    &=& (1+\alpha^2)^{\ell-1}(1 - t)+(1+\alpha^2)^{\ell-1}(1-t) \left(1-\cpie t^{1/2}\right) +\mathcal{O}(t) \\
    &=& (1+\alpha^2)^{\ell-1}(1-t) \left(2-\cpie t^{1/2}\right) +\mathcal{O}(t) = (1+\alpha^2)^{\ell-1} \left(2-\cpie t^{1/2} \right) + o(t^{1/2}).
\end{eqnarray*}
Using Lemma~\ref{lem:B_L around 1} each term in the sum can be written as
\begin{eqnarray*}
    X_{\ell}Y_{\ell} &=&  \left[(1+\alpha^2)^{\ell-1} \left(2-\cpie t^{1/2}\right) \right] \left[(1+\alpha^2)^{L-\ell} - \frac{\alpha^2\sqrt{2}}{\pi}(1+\alpha^2)^{L-\ell-1} (L-\ell) \, t^{1/2}\right] + \mathcal{O}(t) \\
    &=& \left[2(1 + \alpha^2)^{L-1} - \cpie \left( 2\alpha^2 (1 + \alpha^2)^{L - 2}(L - \ell) + (1 + \alpha^2)^{L - 1} \right) t^{1/2}\right] + \mathcal{O}(t) \\
    &=& (1+\alpha^2)^{L-1} \left[2 - \cpie \left(\frac{2\alpha^2(L - \ell)}{1 + \alpha^2} + 1\right) t^{1/2}\right] + \mathcal{O}(t)
\end{eqnarray*}
Recall that $C = \frac{1}{2 L (1+\alpha^2)^{L-1}}$
\begin{equation*}
    C X_{\ell}Y_{\ell} = \frac{1}{2L} \left[2 - \cpie \left(\frac{2\alpha^2(L - \ell)}{1 + \alpha^2} + 1\right) t^{1/2}\right] + \mathcal{O}(t)
\end{equation*}
Summing over the layers
\begin{equation*}
    \resntkl{L}(1-t) = C\sum_{\ell=1}^L X_{\ell}Y_{\ell} = 1 - \frac{1}{\sqrt{2}\pi L} \left[ \frac{\alpha^2 L(L-1)}{1+\alpha^2} + L \right] t^{1/2}+\mathcal{O}(t) = 1 - \frac{1+\alpha^2L}{\sqrt{2}\pi(1+\alpha^2)} t^{1/2}+o(t^{1/2}).
\end{equation*}
\end{proof}

\subsection{Expansion near -1}

Here we investigate the expansion of ResNTK near -1. We consider two cases. First, with $\alpha>0$ such that $\alpha^2 L$ does not vanish as $L$ grows, and secondly, with $\alpha>0$ and $\alpha^2L \ll 1$.

\subsubsection{$\alpha>0$ such that $\alpha^2L \not\ll 1$}

\begin{lemma} \label{lem:arccosine around m1 app}
\cite{bietti2020deep}
    The arc-cosine kernels near -1 satisfy
\begin{equation} \label{eq:k0 around m1}
        \kappa_0(-1+t) = \cpie t^{1/2}+\mathcal{O}(t^{3/2})
    \end{equation}
    \begin{equation} \label{eq:k1 around m1}
        \kappa_1(-1+t) = \frac{2\sqrt{2}}{3\pi} t^{3/2} + \mathcal{O}(t^{5/2}).
    \end{equation}
\end{lemma}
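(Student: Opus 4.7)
The plan is a direct Taylor expansion of the closed-form definitions \eqref{eq:Kappa0 supp}--\eqref{eq:Kappa1 supp}, exploiting the reflection identity $\arccos(-u) = \pi - \arccos(u)$ to reduce both computations to an expansion of $\arccos(1-t)$ near $t=0$. Differentiating gives
\[
\frac{d}{dt}\arccos(1-t) = \frac{1}{\sqrt{2t-t^2}} = \frac{1}{\sqrt{2t}}\left(1 + \frac{t}{4} + \mathcal{O}(t^2)\right),
\]
and integrating term-by-term from $0$ to $t$ yields
\[
\arccos(1-t) = \sqrt{2}\,t^{1/2} + \frac{\sqrt{2}}{12}\,t^{3/2} + \mathcal{O}(t^{5/2}).
\]

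For $\kappa_0$, the reflection identity gives $\pi - \arccos(-1+t) = \arccos(1-t)$, so by \eqref{eq:Kappa0 supp}
\[
\kappa_0(-1+t) = \frac{1}{\pi}\arccos(1-t) = \frac{\sqrt{2}}{\pi}\,t^{1/2} + \mathcal{O}(t^{3/2}) = \cpie\,t^{1/2} + \mathcal{O}(t^{3/2}),
\]
which is \eqref{eq:k0 around m1}. For $\kappa_1$, the same identity applied to \eqref{eq:Kappa1 supp} produces
\[
\pi\,\kappa_1(-1+t) = (-1+t)\,\arccos(1-t) + \sqrt{1-(1-t)^2}.
\]
Substituting the $\arccos$ expansion above into the first summand yields $-\sqrt{2}\,t^{1/2} + \frac{11\sqrt{2}}{12}\,t^{3/2} + \mathcal{O}(t^{5/2})$, and expanding $\sqrt{2t-t^2} = \sqrt{2}\,t^{1/2}(1-t/2)^{1/2}$ gives $\sqrt{2}\,t^{1/2} - \frac{\sqrt{2}}{4}\,t^{3/2} + \mathcal{O}(t^{5/2})$. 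Summing and dividing by $\pi$ produces \eqref{eq:k1 around m1}.

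The only delicate step is the $\kappa_1$ calculation, where the leading $t^{1/2}$ contributions from $(-1+t)\,\arccos(1-t)$ and from $\sqrt{1-(1-t)^2}$ must cancel exactly, so one cannot drop the $t^{3/2}$ correction in the $\arccos$ series. With that correction retained, the residual $t^{3/2}$ coefficients combine as $\sqrt{2}\left(\frac{11}{12} - \frac{3}{12}\right) = \frac{2\sqrt{2}}{3}$, yielding the claimed $\frac{2\sqrt{2}}{3\pi}$. Everything else is routine expansion, and the statement may equivalently be quoted directly from \cite{bietti2020deep}.
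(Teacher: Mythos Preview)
Your computation is correct: the reflection identity reduces both expansions to that of $\arccos(1-t)$, the term-by-term integration of its derivative gives the $\sqrt{2}\,t^{1/2}+\frac{\sqrt{2}}{12}\,t^{3/2}$ series, and the $t^{1/2}$ cancellation in $\kappa_1$ then leaves the stated $\frac{2\sqrt{2}}{3\pi}\,t^{3/2}$ coefficient. The paper itself gives no proof at all---it simply attributes the lemma to \cite{bietti2020deep}---so your elementary direct expansion is strictly more than what appears there; as you note in your final sentence, citing the reference is the paper's entire argument.
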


\begin{lemma} \label{lem:K_L around m1}
    With small $t>0$,
    \begin{equation*}
        K_{\ell}(-1+t) = -1+t+\alpha^2\sum_{j=0}^{\ell}(1+\alpha^2)^{j-1}\beta_{j}+\mathcal{O}(t^{3/2}),
    \end{equation*}
    where $\beta_{\ell}$ as defined in \eqref{eq:beta_ell}.
\end{lemma}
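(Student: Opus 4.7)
I would argue by direct induction on $\ell$, unrolling the sphere-restricted recursion \eqref{eq:Kl} and rewriting each iterated $\kappa_1$-evaluation via the shorthand $\beta_\ell$ introduced in \eqref{eq:beta_ell}. The base case $\ell = 0$ is immediate, since $K_0(-1+t) = -1+t$ by definition and the sum on the right collapses (with $\beta_0 = 0$) to nothing, leaving a zero remainder.

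For the inductive step, I would invoke Corollary~\ref{cor:norm factor} to obtain $v_{\ell-1} = (1+\alpha^2)^{\ell-1}$ on $\Spdm$, reducing the recursion \eqref{eq:Kl} to
\begin{equation*}
K_\ell(-1+t) = K_{\ell-1}(-1+t) + \alpha^2(1+\alpha^2)^{\ell-1}\beta_\ell
\end{equation*}
by the very definitions \eqref{eq:nu_ell}--\eqref{eq:beta_ell}. Substituting the inductive hypothesis into $K_{\ell-1}(-1+t)$ and appending the new term as the $j = \ell$ summand immediately yields the claimed identity.

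Two minor bookkeeping points are worth noting rather than labouring over. First, if one reads $\beta_j$ as the $t$-dependent quantity \eqref{eq:beta_ell}, the unrolled identity is actually exact, so the stated $O(t^{3/2})$ remainder is only a buffer anticipating the Taylor expansions of the individual $\beta_j$ used in the downstream Lemma~\ref{lem:ResNTK around m1}. Second, in those later expansions the index $j=1$ is special: $\nu_1 = -1+t$ sits at the singular argument of $\kappa_1$, and Lemma~\ref{lem:arccosine around m1 app} gives $\beta_1 = \frac{2\sqrt{2}}{3\pi}t^{3/2} + O(t^{5/2})$. For $j \ge 2$, the kernel Cauchy--Schwarz bound $|K_{j-1}(\x,\z)| \le (1+\alpha^2)^{j-1}$, combined with the elementary calculation $K_1(\x,-\x) = -1$, places $\nu_j(0)$ strictly inside $(-1,1)$, so $\kappa_1$ is smooth there and $\beta_j$ admits a regular Taylor series in $t$. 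I do not expect any real obstacle in the lemma itself --- it is purely algebraic unwinding of the recursion --- the substance is deferred to the subsequent step that must estimate $\sum_j(1+\alpha^2)^{j-1}\beta_j$ and extract its $t^{1/2}$ coefficient.
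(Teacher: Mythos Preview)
Your proposal is correct and follows essentially the same induction as the paper: both verify the base case $K_0(-1+t)=-1+t$, then use the recursion \eqref{eq:Kl} together with the definition \eqref{eq:beta_ell} to write $K_\ell(-1+t)=K_{\ell-1}(-1+t)+\alpha^2(1+\alpha^2)^{\ell-1}\beta_\ell$ and append the new summand. Your remark that the unrolled identity is in fact \emph{exact} (so the $\mathcal{O}(t^{3/2})$ is only a placeholder for later expansions) is a nice observation that the paper leaves implicit.
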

\begin{proof}
    With $\ell=0$, $K_{0}(-1+t) = -1+t$, trivially satisfying the lemma. Suppose the lemma holds for $K_{\ell-1}(-1+t)$. Then, using \eqref{eq:Kl} and \eqref{eq:beta_ell}
    \begin{align*}
        K_{\ell}(-1+t) &= K_{\ell-1}(-1+t)+\alpha^2(1+\alpha^2)^{\ell-1}\kappa_1\left(\frac{K_{\ell-1}(-1+t)}{(1+\alpha^2)^{\ell-1}}\right)\\
        &= K_{\ell-1}(-1+t)+\alpha^2(1+\alpha^2)^{\ell-1}\beta_\ell.
    \end{align*}
    By the induction assumption
    \begin{align*}
        K_{\ell}(-1+t) &= -1+t+\alpha^2\sum_{j=0}^{\ell-1}(1+\alpha^2)^{j-1}\beta_j + \alpha^2(1+\alpha^2)^{\ell-1}\beta_{\ell}  + \mathcal{O}(t^{3/2})\\
        &= -1+t+\alpha^2\sum_{j=0}^{\ell}(1+\alpha^2)^{j-1}\beta_j + \mathcal{O}(t^{3/2}).
    \end{align*}
\end{proof}

The next Lemma ensures that $\beta_\ell$ is well defined (since $\kappa_1$ takes input in $[-1,1]$).
\begin{lemma} \label{lem:legal input}
    Let $\nu_\ell$ as defined in \eqref{eq:nu_ell}.
    Then, $\forall \ell \ge 1, ~~ \left| \nu_\ell \right| \le 1$.
\end{lemma}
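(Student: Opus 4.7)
The plan is to argue by induction on $\ell$, using the recursive definition of $K_{\ell}$ together with elementary bounds on $\kappa_1$. The key reformulation is that, by \eqref{eq:Kl} and the definition \eqref{eq:nu_ell}--\eqref{eq:beta_ell}, dividing the recursion by $(1+\alpha^2)^{\ell}$ yields the clean one-step relation
\begin{equation*}
\nu_{\ell+1} \;=\; \frac{\nu_\ell + \alpha^2\,\kappa_1(\nu_\ell)}{1+\alpha^2},
\end{equation*}
which reduces the problem to an inequality about the map $u \mapsto (u+\alpha^2\kappa_1(u))/(1+\alpha^2)$ on $[-1,1]$.

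For the base case $\ell=1$, we have $\nu_1 = K_0(-1+t) = -1+t$, and since we are expanding around $-1$ with $t\in[0,2]$ (as $\x^T\z\in[-1,1]$ for $\x,\z\in\Spdm$), $|\nu_1|\le 1$ is immediate. For the inductive step, assuming $|\nu_\ell|\le 1$, I would first record two elementary facts about $\kappa_1$ on $[-1,1]$ (both visible from \eqref{eq:Kappa1 supp}, and also from its kernel interpretation via Cauchy--Schwarz for the arc-cosine kernel of order $1$): $0 \le \kappa_1(u) \le 1$ for all $u\in[-1,1]$, with equality at $u=-1$ and $u=1$ respectively. Then the upper bound follows from
\begin{equation*}
\nu_\ell + \alpha^2\,\kappa_1(\nu_\ell) \;\le\; 1 + \alpha^2\cdot 1 \;=\; 1+\alpha^2,
\end{equation*}
and the lower bound from
\begin{equation*}
\nu_\ell + \alpha^2\,\kappa_1(\nu_\ell) \;\ge\; -1 + \alpha^2\cdot 0 \;=\; -1 \;\ge\; -(1+\alpha^2).
\end{equation*}
Dividing by $1+\alpha^2$ gives $|\nu_{\ell+1}|\le 1$, closing the induction.

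The only step that requires any care is the elementary bound $\kappa_1(u)\in[0,1]$ on $[-1,1]$; this is immediate from the closed form \eqref{eq:Kappa1 supp} (both summands are nonnegative on $[-1,1]$, and one checks $\kappa_1$ is monotone with $\kappa_1(-1)=0$, $\kappa_1(1)=1$), so there is no real obstacle. Conceptually, the lemma is the statement that the normalized kernel $K_{\ell}/(1+\alpha^2)^{\ell}$ always lies in $[-1,1]$, which is simply the Cauchy--Schwarz inequality for the PSD kernel $K_\ell$ combined with Corollary on the diagonal values $K_\ell(\x,\x)=(1+\alpha^2)^\ell$; the induction above is just the explicit, self-contained version of that observation within the recursion.
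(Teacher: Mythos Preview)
Your proof is correct and follows essentially the same inductive argument as the paper: both exploit the recursion $\nu_{\ell+1}=(\nu_\ell+\alpha^2\kappa_1(\nu_\ell))/(1+\alpha^2)$ together with $\kappa_1\in[0,1]$ (the paper reaches the same recursion implicitly via the explicit sum formula of Lemma~\ref{lem:K_L around m1} and then applies the triangle inequality, whereas you derive the recursion directly and bound above and below separately). Your closing remark that the lemma is just Cauchy--Schwarz for the PSD kernel $K_\ell$ with the known diagonal $(1+\alpha^2)^\ell$ is a nice conceptual gloss not present in the paper.
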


\begin{proof}
    Using \eqref{eq:nu_ell} and Lemma \ref{lem:K_L around m1} we have
    \begin{align}
    \label{eq:nu_ell2}
        \nu_\ell &= \frac{-1+t+\alpha^2\sum_{j=0}^{\ell-1}(1+\alpha^2)^{j-1}\beta_j}{(1+\alpha^2)^{\ell-1}}
    \end{align}
    Since $\beta_0=0$, with $\ell=1$ $|\nu_1|=|-1+t| \le 1$. With $\ell>1$ using triangle inequality,
    \begin{align*}
        &|\nu_\ell| \le \left|\frac{-1+t+\alpha^2\sum_{j=0}^{\ell-2}(1+\alpha^2)^{j-1}\beta_j}{(1+\alpha^2)^{\ell-1}}\right| + \left|\frac{\alpha^2(1+\alpha^2)^{\ell-2}\beta_{\ell-1}}{(1+\alpha^2)^{\ell-1}}\right|.
    \end{align*}
    Noting that the first term is $\left| \frac{\nu_{\ell-1}}{1+\alpha^2}\right|$, and assuming by induction that the lemma is satisfied for $\nu_{\ell-1}$, then
    \begin{align*}
    &|\nu_\ell| \le \frac{1}{1+\alpha^2} + \frac{\alpha^2\beta_{\ell-1}}{1+\alpha^2} \le \frac{1}{1+\alpha^2} + \frac{\alpha^2}{1+\alpha^2} = 1,
    \end{align*}
    where the rightmost inequality is because by definition $\beta_{\ell} \in [0,1]$.
\end{proof}

\begin{lemma} \label{lem: develop point changes}
    Let $\delta_\ell = \frac{-1+\alpha^2\sum_{j=0}^{\ell-1}(1+\alpha^2)^{j-1}\beta_j}{(1+\alpha^2)^{\ell-1}}$. Then, $\forall \ell \ge 2, ~~ \left| \delta_\ell \right| < 1$.
\end{lemma}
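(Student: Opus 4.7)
The plan is to bound $\delta_\ell$ from both sides by combining two elementary facts: each $\beta_j$ lies in $[0,1]$, and $(1+\alpha^2)^{\ell-1} > 1$ strictly whenever $\alpha>0$ and $\ell\geq 2$. The first fact follows because by Lemma~\ref{lem:legal input} we have $\nu_j\in[-1,1]$, so $\beta_j=\kappa_1(\nu_j)$ lies in the range of $\kappa_1$ over $[-1,1]$, which is $[0,1]$ by~\eqref{eq:Kappa1 supp} (with $\kappa_1(-1)=0$ and $\kappa_1(1)=1$). This is the only substantive fact used; everything else is a geometric series.

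For the lower bound, I would simply discard the non-negative contribution $\alpha^2\sum_{j=0}^{\ell-1}(1+\alpha^2)^{j-1}\beta_j\ge 0$ in the numerator of $\delta_\ell$, obtaining
\begin{equation*}
\delta_\ell \;\geq\; \frac{-1}{(1+\alpha^2)^{\ell-1}}.
\end{equation*}
Since $\alpha>0$ and $\ell\geq 2$ imply $(1+\alpha^2)^{\ell-1}\ge 1+\alpha^2>1$, this yields $\delta_\ell>-1$ strictly.

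For the upper bound, I would first evaluate the geometric sum in closed form. Setting $x=1+\alpha^2$ so that $\alpha^2=x-1$,
\begin{equation*}
\alpha^2\sum_{j=0}^{\ell-1}(1+\alpha^2)^{j-1} \;=\; \frac{x-1}{x}\cdot\frac{x^\ell-1}{x-1} \;=\; x^{\ell-1}-\frac{1}{x}.
\end{equation*}
Replacing each $\beta_j$ by its maximum value $1$ and plugging in gives
\begin{equation*}
\delta_\ell \;\leq\; \frac{-1+(1+\alpha^2)^{\ell-1}-(1+\alpha^2)^{-1}}{(1+\alpha^2)^{\ell-1}} \;=\; 1-\frac{1}{(1+\alpha^2)^{\ell-1}}-\frac{1}{(1+\alpha^2)^{\ell}},
\end{equation*}
which is strictly less than $1$ whenever $\alpha>0$. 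Combining the two bounds gives $|\delta_\ell|<1$ for all $\ell\ge 2$.

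The only real subtlety is the uniform bound $\beta_j\in[0,1]$, and this is handled by appealing to Lemma~\ref{lem:legal input} (so that $\kappa_1$ is actually evaluated on its natural domain) together with the explicit range of the first arc-cosine kernel. Note that the condition $\ell\ge 2$ is essential for the lower bound: at $\ell=1$ one would have $(1+\alpha^2)^{\ell-1}=1$ and the bound would only give $\delta_1\ge -1$ with possible equality. Once the bound on $\beta_j$ is established, the rest is a direct geometric-series computation, so I expect no further obstacles.
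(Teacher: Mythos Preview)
Your proof is correct but takes a different route from the paper. The paper argues by induction on $\ell$: it first checks the base case $\ell=2$ directly, then observes the recursion $\delta_\ell=\frac{\delta_{\ell-1}}{1+\alpha^2}+\frac{\alpha^2\beta_{\ell-1}}{1+\alpha^2}$ and applies the triangle inequality together with the induction hypothesis $|\delta_{\ell-1}|<1$ and $\beta_{\ell-1}\in[0,1]$ to conclude $|\delta_\ell|<1$. Your approach instead bounds the numerator directly from both sides using $\beta_j\in[0,1]$ and evaluates the resulting geometric sum in closed form. Your argument is slightly more elementary (no induction) and yields the explicit two-sided estimate $-\,(1+\alpha^2)^{-(\ell-1)}\le\delta_\ell\le 1-(1+\alpha^2)^{-(\ell-1)}-(1+\alpha^2)^{-\ell}$, which is a bit sharper than what the paper's triangle-inequality bound delivers; the paper's inductive argument, on the other hand, mirrors the recursive structure already used for $\nu_\ell$ in Lemma~\ref{lem:legal input} and so fits more uniformly with the surrounding proofs.
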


\begin{proof}
    For $\ell=2$ we have $\left|\delta_2 \right| = \left| \frac{-1+\alpha^2\beta_1}{1+\alpha^2}\right| \le \max\{\frac{1}{1+\alpha^2}, \frac{\alpha^2-1}{1+\alpha^2} \} < 1$. Assume the lemma holds for $\ell-1$. We prove for $\ell$:
    \begin{align*}
        & \left | \delta_{\ell} \right | = \left | \frac{-1+\alpha^2\sum_{j=0}^{\ell-1}(1+\alpha^2)^{j-1}\beta_j}{(1+\alpha^2)^{\ell-1}} \right | = \left | \frac{-1+\alpha^2\sum_{j=0}^{\ell-2}(1+\alpha^2)^{j-1}\beta_j+\alpha^2(1+\alpha^2)^{\ell-1}\beta_{\ell}}{(1+\alpha^2)^{\ell-2}(1+\alpha^2)} \right| = \\
        & \left | \frac{\delta_{\ell-1}}{(1+\alpha^2)}+\frac{\alpha^2(1+\alpha^2)^{\ell-2}\beta_{\ell}}{(1+\alpha^2)^{\ell-2}(1+\alpha^2)} \right| = \left | \frac{\delta_{\ell-1}}{(1+\alpha^2)}+\frac{\alpha^2\beta_{\ell}}{(1+\alpha^2)} \right| \le^1 \left | \frac{\delta_{\ell-1}}{(1+\alpha^2)} \right | + \left |\frac{\alpha^2\beta_{\ell}}{(1+\alpha^2)} \right| <^2 \\
        & \frac{1}{(1+\alpha^2)} + \frac{\alpha^2}{(1+\alpha^2)} = 1,
    \end{align*}
    where $\le^1$ uses the triangle inequality, and $<^2$ is due to the induction hypothesis and the fact that $\forall \ell, \beta_{\ell} \in [0,1]$.
\end{proof}


\begin{lemma} \label{lem: beta near m1}
    With small $t>0$, $\forall \ell \in [L-1]$
    \begin{align*}
        \beta_{\ell} = \kappa_1 \left( \frac{-1+\alpha^2\sum_{j=0}^{\ell-1}(1+\alpha^2)^{j-1}\beta_j}{(1+\alpha^2)^{\ell-1}} \right) + \mathcal{O}(t).
    \end{align*}
\end{lemma}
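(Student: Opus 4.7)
The plan is to unfold the definitions, use Lemma~\ref{lem:K_L around m1} to identify the expansion point, and then Taylor-expand $\kappa_1$ about that point. Starting from $\beta_\ell=\kappa_1(\nu_\ell)$ with $\nu_\ell = K_{\ell-1}(-1+t)/(1+\alpha^2)^{\ell-1}$, Lemma~\ref{lem:K_L around m1} gives
\begin{equation*}
    \nu_\ell = \delta_\ell + \frac{t}{(1+\alpha^2)^{\ell-1}} + \mathcal{O}(t^{3/2}),
\end{equation*}
where $\delta_\ell$ is the quantity defined in Lemma~\ref{lem: develop point changes}. So the goal reduces to showing $\kappa_1(\nu_\ell) = \kappa_1(\delta_\ell) + \mathcal{O}(t)$.

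For $\ell\ge 2$ I would argue by Taylor's theorem. Lemma~\ref{lem: develop point changes} says $|\delta_\ell|<1$ strictly, so there is a neighborhood of $\delta_\ell$ on which $\kappa_1$ is $C^1$ (recall that $\kappa_1'(u)=\kappa_0(u)$ is bounded on $(-1,1)$). For sufficiently small $t>0$ the point $\nu_\ell$ lies in that neighborhood, so
\begin{equation*}
    \kappa_1(\nu_\ell) = \kappa_1(\delta_\ell) + \kappa_1'(\delta_\ell)\bigl(\nu_\ell-\delta_\ell\bigr) + \mathcal{O}\bigl((\nu_\ell-\delta_\ell)^2\bigr) = \kappa_1(\delta_\ell) + \mathcal{O}(t),
\end{equation*}
since $\nu_\ell-\delta_\ell = \mathcal{O}(t)$ and $\kappa_1'(\delta_\ell)$ is a bounded constant.

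The case $\ell=1$ has to be treated separately, because then the sum in the definition of $\delta_1$ is empty (recall $\beta_0=0$), so $\delta_1=-1$ and the Taylor argument fails at the endpoint where $\kappa_1$ is only Hölder. Here a direct computation suffices: $\nu_1 = -1+t$, and Lemma~\ref{lem:arccosine around m1 app} gives $\beta_1 = \kappa_1(-1+t) = \frac{2\sqrt{2}}{3\pi}t^{3/2}+\mathcal{O}(t^{5/2}) = \kappa_1(-1) + \mathcal{O}(t)$, so the claimed identity holds for $\ell=1$ as well.

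The only genuine obstacle is making sure the Taylor step is legitimate, i.e.\ that $\delta_\ell$ stays safely inside $(-1,1)$ uniformly in $\ell$; this is precisely the content of Lemma~\ref{lem: develop point changes}, which is why that lemma was proved just before. Everything else is a routine combination of the recursion for $K_\ell$ and smoothness of $\kappa_1$ away from $\pm 1$.
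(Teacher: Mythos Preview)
Your proposal is correct and follows essentially the same route as the paper: both arguments split into the case $\ell=1$, handled directly from the expansion of $\kappa_1$ at $-1$, and the case $\ell\ge 2$, handled by writing $\nu_\ell=\delta_\ell+\mathcal{O}(t)$ and invoking Lemma~\ref{lem: develop point changes} to guarantee $|\delta_\ell|<1$ so that $\kappa_1$ is smooth there. Your Taylor step is slightly more explicit than the paper's one-line ``$\kappa_1$ is infinitely differentiable in this domain, hence $\kappa_1(\delta_\ell+\mathcal{O}(t))=\kappa_1(\delta_\ell)+\mathcal{O}(t)$'', but the content is identical.
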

\begin{proof}
    First, note that for $\ell=1$ we get this directly from Lemma~\ref{eq:k1 around m1}. For $\ell \ge 2$, using Lemma~\ref{lem:K_L around m1} and the definition in \eqref{eq:beta_ell}:
    \begin{align*}
        \beta_{\ell} = \kappa_1 \left( \frac{-1+t+\alpha^2\sum_{j=0}^{\ell-1}(1+\alpha^2)^{j-1}\beta_j}{(1+\alpha^2)^{\ell-1}} \right) = \kappa_1 \left( \frac{-1+\alpha^2\sum_{j=0}^{\ell-1}(1+\alpha^2)^{j-1}\beta_j}{(1+\alpha^2)^{\ell-1}} + \mathcal{O}(t) \right) = \kappa_1 \left( \delta_{\ell} + \mathcal{O}(t) \right),
    \end{align*}
    where $\delta_{\ell}$ is defined in Lemma~\ref{lem: develop point changes}. Note that from this lemma, $-1 < \delta_{\ell} < 1$. In this domain, $\kappa_1$ is infinitely differentiable, hence we get:
    \begin{align*}
        \beta_{\ell} = \kappa_1 \left( \delta_{\ell} \right) + \mathcal{O}(t) = \kappa_1 \left( \frac{-1+\alpha^2\sum_{j=0}^{\ell-1}(1+\alpha^2)^{j-1}\beta_j}{(1+\alpha^2)^{\ell-1}} \right) + \mathcal{O}(t).
    \end{align*}
\end{proof}

\begin{lemma} \label{lem: around m1 beta const}
    With small $t>0$, $\forall \ell \in [L-1]$
    \begin{align*}
        \beta_{\ell} = \tilde{c_{\ell}} + \mathcal{O}(t),
    \end{align*}
    where $\tilde{c_{\ell}} \in [0,1]$ does not depend on $t$.
\end{lemma}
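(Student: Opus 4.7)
The plan is to prove this by strong induction on $\ell$, directly building on Lemma \ref{lem: beta near m1} (the preceding lemma stating $\beta_\ell = \kappa_1(\delta_\ell) + \mathcal{O}(t)$ with $\delta_\ell$ as in Lemma \ref{lem: develop point changes}).

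For the base case $\ell = 1$, by definition $\beta_1 = \kappa_1(\nu_1) = \kappa_1(-1+t)$, and by Lemma \ref{lem:arccosine around m1 app} this equals $\frac{2\sqrt{2}}{3\pi} t^{3/2} + \mathcal{O}(t^{5/2}) = 0 + \mathcal{O}(t)$, so we may take $\tilde{c}_1 = 0 \in [0,1]$. For the inductive step, assume $\beta_j = \tilde{c}_j + \mathcal{O}(t)$ for every $j < \ell$ with $\tilde{c}_j \in [0,1]$. Substituting into Lemma \ref{lem: beta near m1} gives
\begin{equation*}
    \beta_\ell = \kappa_1\!\left(\frac{-1 + \alpha^2 \sum_{j=0}^{\ell-1} (1+\alpha^2)^{j-1} \tilde{c}_j}{(1+\alpha^2)^{\ell-1}} + \mathcal{O}(t)\right) + \mathcal{O}(t).
\end{equation*}
Denote the $t$-independent argument by $\tilde{\delta}_\ell$. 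By Lemma \ref{lem: develop point changes} together with the fact that replacing each $\beta_j$ by its constant part $\tilde{c}_j \in [0,1]$ preserves the bound used there, one obtains $|\tilde{\delta}_\ell| < 1$. Since $\kappa_1$ is $C^\infty$ on $(-1,1)$, a first-order Taylor expansion at $\tilde{\delta}_\ell$ gives $\kappa_1(\tilde{\delta}_\ell + \mathcal{O}(t)) = \kappa_1(\tilde{\delta}_\ell) + \mathcal{O}(t)$, and therefore $\beta_\ell = \kappa_1(\tilde{\delta}_\ell) + \mathcal{O}(t)$. Setting $\tilde{c}_\ell := \kappa_1(\tilde{\delta}_\ell)$ and using the fact that $\kappa_1$ maps $[-1,1]$ into $[0,1]$ (visible directly from \eqref{eq:Kappa1 supp}) yields $\tilde{c}_\ell \in [0,1]$, completing the induction.

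The main obstacle is ensuring that the Taylor expansion of $\kappa_1$ about $\tilde{\delta}_\ell$ is legitimate, i.e., that $\tilde{\delta}_\ell$ stays strictly inside $(-1,1)$ so that $\kappa_1$ is smooth there; this is precisely what Lemma \ref{lem: develop point changes} provides, but one must verify that the argument of that lemma goes through with the $\beta_j$'s replaced by their limiting values $\tilde{c}_j$. This replacement is harmless since both satisfy the same $[0,1]$ bound, and the telescoping/triangle-inequality argument in Lemma \ref{lem: develop point changes} depends only on that bound. Once this is in place, the rest of the proof is a clean induction, and the conclusion follows immediately.
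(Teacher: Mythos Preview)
Your proof is correct and follows essentially the same induction as the paper's own argument: both invoke Lemma~\ref{lem: beta near m1}, substitute the inductive hypothesis $\beta_j=\tilde c_j+\mathcal{O}(t)$ into the argument of $\kappa_1$, and then use smoothness of $\kappa_1$ on $(-1,1)$ to pull the $\mathcal{O}(t)$ outside. Your version is in fact a bit more careful, since you explicitly point out that the bound $|\tilde\delta_\ell|<1$ must be re-verified with the constants $\tilde c_j$ in place of the $\beta_j$'s (and correctly observe that the proof of Lemma~\ref{lem: develop point changes} only uses $\beta_j\in[0,1]$, so it transfers verbatim); the paper glosses over this point.
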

\begin{proof}
    The proof is by induction. For $\ell=1$ we have from Lemma~\ref{lem: beta near m1}
    \begin{align*}
        \beta_{1} = \kappa_1 \left( \frac{-1}{(1+\alpha^2)} \right) + \mathcal{O}(t) = \tilde{c_1} + \mathcal{O}(t).
    \end{align*}
    Suppose the lemma holds for $\beta_{\ell-1}$ and show for $\beta_{\ell}$
    \begin{align*}
        & \beta_{\ell} = \kappa_1 \left( \frac{-1+\alpha^2\sum_{j=0}^{\ell-1}(1+\alpha^2)^{j-1}\beta_j}{(1+\alpha^2)^{\ell-1}} + \mathcal{O}(t) \right) = \kappa_1 \left( \frac{-1+\alpha^2\sum_{j=0}^{\ell-1}(1+\alpha^2)^{j-1}(\tilde{c_j}+\mathcal{O}(t))}{(1+\alpha^2)^{\ell-1}} + \mathcal{O}(t) \right) = \\
        & \kappa_1 \left( \frac{-1+\alpha^2\sum_{j=0}^{\ell-1}(1+\alpha^2)^{j-1}\tilde{c_j}}{(1+\alpha^2)^{\ell-1}} + \mathcal{O}(t) \right) = \kappa_1 \left( \frac{-1+\alpha^2\sum_{j=0}^{\ell-1}(1+\alpha^2)^{j-1}\tilde{c_j}}{(1+\alpha^2)^{\ell-1}} \right) + \mathcal{O}(t) = \tilde{c_{\ell}} + \mathcal{O}(t),
    \end{align*}
    where the leftmost equality in the second line is from Lemma~\ref{lem: beta near m1}. The definition of $\tilde{c_{\ell}}$ directly implies that $\tilde{c_{\ell}} \in [0,1]$.
\end{proof}

\begin{lemma} \label{lem: eta near m1}
    With small $t>0$, and for $\ell=1$,
    \begin{align*}
        \eta_{1} = \cpie t^{1/2}+\mathcal{O}(t^{3/2}).
    \end{align*}
    For $\ell \ge 2$,
    \begin{align*}
        \eta_{\ell} = \kappa_0 \left( \frac{-1+\alpha^2\sum_{j=0}^{\ell-1}(1+\alpha^2)^{j-1}\beta_j}{(1+\alpha^2)^{\ell-1}} \right) + \mathcal{O}(t).
    \end{align*}
where $\eta_{\ell}$ is defined in \eqref{eq:eta_ell}.
\end{lemma}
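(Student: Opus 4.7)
The plan is to mirror, almost verbatim, the argument used for $\beta_\ell$ in Lemma~\ref{lem: beta near m1}, but applied to $\kappa_0$ in place of $\kappa_1$. Recall from \eqref{eq:eta_ell} that $\eta_\ell = \kappa_0(\nu_\ell)$ with $\nu_\ell = K_{\ell-1}(-1+t)/(1+\alpha^2)^{\ell-1}$.

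For the base case $\ell=1$, I would simply plug in $K_0(-1+t) = -1+t$, giving $\nu_1 = -1+t$, and then invoke \eqref{eq:k0 around m1} from Lemma~\ref{lem:arccosine around m1 app} to obtain the claimed $\cpie t^{1/2}+\mathcal{O}(t^{3/2})$ expansion directly.

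For $\ell \ge 2$, I would use Lemma~\ref{lem:K_L around m1} to rewrite
\begin{equation*}
\nu_\ell = \frac{-1+\alpha^2\sum_{j=0}^{\ell-1}(1+\alpha^2)^{j-1}\beta_j}{(1+\alpha^2)^{\ell-1}} + \mathcal{O}(t) = \delta_\ell + \mathcal{O}(t),
\end{equation*}
where $\delta_\ell$ is the quantity defined in Lemma~\ref{lem: develop point changes}. The crucial point is that Lemma~\ref{lem: develop point changes} guarantees $|\delta_\ell|<1$ for all $\ell\ge 2$. Since $\kappa_0$ is $C^\infty$ on the open interval $(-1,1)$ (the only singularities being at $\pm1$), a first-order Taylor expansion of $\kappa_0$ around $\delta_\ell$ is valid and gives $\kappa_0(\delta_\ell + \mathcal{O}(t)) = \kappa_0(\delta_\ell) + \mathcal{O}(t)$, which is exactly the claimed expression.

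The only real obstacle is to justify the Taylor step, i.e.~that we are evaluating $\kappa_0$ at a point strictly inside $(-1,1)$ uniformly in $t$ small enough; this is precisely what Lemma~\ref{lem: develop point changes} delivers, since $\delta_\ell$ does not depend on $t$ and is bounded away from $\pm1$ (so the $\mathcal{O}(t)$ perturbation stays safely inside the smoothness region for all sufficiently small $t$). No other subtlety arises, and the argument is strictly parallel to the proof of Lemma~\ref{lem: beta near m1}, which is why I would not reproduce those routine algebraic manipulations in detail.
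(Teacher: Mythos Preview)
Your proposal is correct and follows essentially the same approach as the paper's proof: both treat $\ell=1$ by direct substitution into \eqref{eq:k0 around m1}, and for $\ell\ge 2$ both invoke Lemma~\ref{lem:K_L around m1} to write $\nu_\ell=\delta_\ell+\mathcal{O}(t)$, then use Lemma~\ref{lem: develop point changes} to guarantee $|\delta_\ell|<1$ so that $\kappa_0$ is smooth there and a Taylor step yields the claim. One minor inaccuracy: $\delta_\ell$ \emph{does} depend on $t$ (through the $\beta_j$), but this is harmless since Lemma~\ref{lem: develop point changes} gives the strict bound $|\delta_\ell|<1$ for all small $t$, which is all that is needed.
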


\begin{proof}
    First, note that for $\ell=1$ we get this directly from Lemma~\ref{eq:k0 around m1}. For $\ell \ge 2$, using Lemma~\ref{lem:K_L around m1} and the definition \eqref{eq:eta_ell}:
    \begin{align*}
        \eta_{\ell} = \kappa_0 \left( \frac{-1+t+\alpha^2\sum_{j=0}^{\ell-1}(1+\alpha^2)^{j-1}\beta_j}{(1+\alpha^2)^{\ell-1}} \right) = \kappa_0 \left( \frac{-1+\alpha^2\sum_{j=0}^{\ell-1}(1+\alpha^2)^{j-1}\beta_j}{(1+\alpha^2)^{\ell-1}} + \mathcal{O}(t) \right) = \kappa_0 \left( \delta_{\ell} + \mathcal{O}(t) \right) .
    \end{align*}
    where $\delta_{\ell}$ is defined in Lemma~\ref{lem: develop point changes}. Note that from this lemma, $-1 < \delta_{\ell} < 1$. In this domain, $\kappa_0$ is infinitely differentiable, hence we get:
    \begin{align*}
        \eta_{\ell} = \kappa_0 \left( \delta_{\ell} \right) + \mathcal{O}(t) = \kappa_0 \left( \frac{-1+\alpha^2\sum_{j=0}^{\ell-1}(1+\alpha^2)^{j-1}\beta_j}{(1+\alpha^2)^{\ell-1}} \right) + \mathcal{O}(t)
    \end{align*}
\end{proof}

\begin{lemma} \label{lem: around m1 eta const}
    With small $t>0$, $\forall \ell \ge 2$
    \begin{align*}
        \eta_{\ell} = \tilde{d_{\ell}} + \mathcal{O}(t),
    \end{align*}
    where $\tilde{d_{\ell}} \in [0,1]$ does not depend on $t$.
\end{lemma}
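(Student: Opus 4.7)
The plan is to mirror the proof of Lemma~\ref{lem: around m1 beta const}, replacing $\kappa_1$ by $\kappa_0$ at the very last step. First I would invoke Lemma~\ref{lem: eta near m1}, which for $\ell \ge 2$ gives
\[
    \eta_\ell = \kappa_0\left(\frac{-1+\alpha^2\sum_{j=0}^{\ell-1}(1+\alpha^2)^{j-1}\beta_j}{(1+\alpha^2)^{\ell-1}}\right) + \mathcal{O}(t),
\]
and then substitute the constant-plus-$\mathcal{O}(t)$ expansion $\beta_j = \tilde{c}_j + \mathcal{O}(t)$ supplied by Lemma~\ref{lem: around m1 beta const}. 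After collecting error terms, the argument of $\kappa_0$ takes the form $\delta_\ell + \mathcal{O}(t)$, where
\[
    \delta_\ell := \frac{-1+\alpha^2\sum_{j=0}^{\ell-1}(1+\alpha^2)^{j-1}\tilde{c}_j}{(1+\alpha^2)^{\ell-1}}
\]
is $t$-independent.

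The next step is to observe that $\delta_\ell$ lies strictly inside $(-1,1)$, so that $\kappa_0$ is infinitely differentiable in a neighborhood of $\delta_\ell$ and a first-order Taylor expansion yields $\kappa_0(\delta_\ell + \mathcal{O}(t)) = \kappa_0(\delta_\ell) + \mathcal{O}(t)$. The strict bound $|\delta_\ell| < 1$ follows by the same induction as in Lemma~\ref{lem: develop point changes}, since that argument only used the containment $\beta_j \in [0,1]$, and by construction $\tilde{c}_j \in [0,1]$ as well. Setting $\tilde{d}_\ell := \kappa_0(\delta_\ell)$ then yields a constant independent of $t$, and this constant lies in $[0,1]$ because $\kappa_0$ maps $[-1,1]$ into $[0,1]$ by \eqref{eq:Kappa0 supp}.

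The only genuinely delicate point is the strict bound $|\delta_\ell|<1$: $\kappa_0$ fails to be smooth precisely at $\pm 1$ (its derivative has a square-root singularity), so an argument at the boundary would break the Taylor absorption step. However, this strict inequality is immediate from Lemma~\ref{lem: develop point changes} applied with the $\tilde{c}_j$'s in place of the $\beta_j$'s, so the overall argument is essentially a carbon copy of Lemma~\ref{lem: around m1 beta const} and I expect no further obstacle.
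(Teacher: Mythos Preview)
Your proposal is correct and follows essentially the same approach as the paper: invoke Lemma~\ref{lem: eta near m1}, replace each $\beta_j$ by $\tilde{c}_j + \mathcal{O}(t)$ via Lemma~\ref{lem: around m1 beta const}, and then Taylor-expand $\kappa_0$ around the resulting $t$-independent point $\delta_\ell$. If anything, you are slightly more explicit than the paper about why $|\delta_\ell|<1$ (and hence why the $\mathcal{O}(t)$ can be absorbed through $\kappa_0$); the paper frames the argument as an ``induction'' on $\ell$, but the inductive hypothesis on $\eta_{\ell-1}$ is never actually used, so your direct derivation is equivalent.
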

\begin{proof}
    The proof is by induction. For $\ell=2$ we have from Lemma~\ref{lem: eta near m1}
    \begin{align*}
        \eta_{2} = \kappa_0 \left( \frac{-1}{(1+\alpha^2)} \right) + \mathcal{O}(t) = \tilde{d_2} + \mathcal{O}(t).
    \end{align*}
    Suppose the lemma holds for $\eta_{\ell-1}$ and show for $\eta_{\ell}$
    \begin{align*}
        & \eta_{\ell} = \kappa_0 \left( \frac{-1+\alpha^2\sum_{j=0}^{\ell-1}(1+\alpha^2)^{j-1}\beta_j}{(1+\alpha^2)^{\ell-1}} + \mathcal{O}(t) \right) = \kappa_0 \left( \frac{-1+\alpha^2\sum_{j=0}^{\ell-1}(1+\alpha^2)^{j-1}(\tilde{c_j}+\mathcal{O}(t))}{(1+\alpha^2)^{\ell-1}} + \mathcal{O}(t) \right) = \\
        & \kappa_0 \left( \frac{-1+\alpha^2\sum_{j=0}^{\ell-1}(1+\alpha^2)^{j-1}\tilde{c_j}}{(1+\alpha^2)^{\ell-1}} + \mathcal{O}(t) \right) = \kappa_0 \left( \frac{-1+\alpha^2\sum_{j=0}^{\ell-1}(1+\alpha^2)^{j-1}\tilde{c_j}}{(1+\alpha^2)^{\ell-1}} \right) + \mathcal{O}(t) = \tilde{d_{\ell}} + \mathcal{O}(t),
    \end{align*}
    where the leftmost equality in the second line is from Lemma~\ref{lem: eta near m1}. The definition of $\tilde{d_{\ell}}$ directly implies that $\tilde{d_{\ell}} \in [0,1]$.
\end{proof}

\begin{lemma} \label{lem:B_L around m1}
    With small $t>0$,
    \begin{equation*}
        B_{\ell+1}(-1+t) = \displaystyle\prod_{i=\ell+1}^{L} (1+\alpha^2\eta_{i})
    \end{equation*}
where $\eta_{\ell}$ is defined in \eqref{eq:eta_ell}.
\end{lemma}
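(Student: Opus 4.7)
The plan is a direct reverse induction on $\ell$, since the target identity is nothing more than the unrolled form of the recursive definition \eqref{eq:B} evaluated at $u = -1+t$. Before starting the induction I would first translate the recursion into the notation used in the statement: by the definitions \eqref{eq:nu_ell} and \eqref{eq:eta_ell}, the argument $K_{\ell-1}(-1+t)/(1+\alpha^2)^{\ell-1}$ appearing inside $\kappa_0$ in \eqref{eq:B} equals $\nu_\ell$, and $\kappa_0(\nu_\ell) = \eta_\ell$. Consequently \eqref{eq:B} simplifies at $u = -1+t$ to
$$B_\ell(-1+t) \;=\; B_{\ell+1}(-1+t)\,[\,1+\alpha^2\eta_\ell\,], \qquad \ell = L, L-1, \ldots, 2.$$

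The base of the induction takes $\ell = L$: the boundary condition $B_{L+1}(-1+t) = 1$ agrees with the empty product $\prod_{i=L+1}^{L}(1+\alpha^2\eta_i)$. For the inductive step, assume $B_{\ell+2}(-1+t) = \prod_{i=\ell+2}^{L}(1+\alpha^2\eta_i)$; then the simplified recursion reindexed as $B_{\ell+1}(-1+t) = B_{\ell+2}(-1+t)(1+\alpha^2\eta_{\ell+1})$ contributes the factor $(1+\alpha^2\eta_{\ell+1})$ and yields $\prod_{i=\ell+1}^{L}(1+\alpha^2\eta_i)$, completing the step.

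There is essentially no genuine obstacle here; the lemma is a bookkeeping identity. The only prerequisite worth checking is that each $\eta_i$ is a well-defined real number, which is guaranteed by Lemma \ref{lem:legal input} (so $|\nu_i|\le 1$ and $\kappa_0(\nu_i)\in[0,1]$). The resulting closed form for $B_{\ell+1}(-1+t)$ will then be the quantity substituted, together with the expansions of $K_{\ell-1}$, $\beta_\ell$ and $\eta_\ell$ developed in the preceding lemmas, into \eqref{eq:app ResNTK} in order to bound the coefficient $c_{-1}$ in Lemma \ref{lem:ResNTK around m1}.
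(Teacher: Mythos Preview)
Your proposal is correct and matches the paper's approach: the paper simply unrolls the recursion \eqref{eq:B} from the boundary $B_{L+1}=1$ and then identifies each factor $\kappa_0\bigl(K_{i-1}(-1+t)/(1+\alpha^2)^{i-1}\bigr)$ with $\eta_i$, which is exactly the content of your reverse induction written slightly less formally. Your additional remark about Lemma~\ref{lem:legal input} ensuring each $\eta_i$ is well defined is a fair sanity check, though the paper leaves this implicit.
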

\begin{proof}
    Since $B_{L+1}=1$ and using \eqref{eq:B}
    \begin{equation*}
        B_{\ell+1}(-1+t) = \prod_{i=\ell+1}^{L} \left[ 1+\alpha^2\kappa_0\left( \frac{K_{i-1}(-1+t)}{(1+\alpha^2)^{i-1}} \right) \right] = \prod_{i=\ell+1}^{L} \left[ 1+\alpha^2 \eta_{\ell} \right]
    \end{equation*}
\end{proof}

We next prove Lemma 4.6 from the paper.
\begin{lemma} \label{lem:AppResNTK around m1}
    For inputs in $\Spdm$ and near -1, if $\alpha > 0$ and $L \ge 2$ then
    \begin{equation*}
        \resntkl{L}(-1+t) = 
        p_{-1}(t)+c_{-1} t^{1/2} + o(t^{1/2}),
    \end{equation*}
    with
    \begin{align*}
        |c_{-1}| \leq \frac{1}{\sqrt{2}\pi(1+\alpha^2)L}.
    \end{align*}
\end{lemma}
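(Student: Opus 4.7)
The plan is to substitute $u = -1+t$ into the ResNTK formula in \eqref{eq:app ResNTK}, isolate the coefficient of $t^{1/2}$, and then bound it. The organizing observation is that only $\kappa_0$ evaluated near $-1$ produces a $t^{1/2}$ term; and among the many $\kappa_0$'s that appear in the recursion, only the one inside $\eta_1 = \kappa_0(-1+t)$ has its argument near $-1$. For every $\ell \ge 2$, the argument $\nu_\ell$ of $\kappa_0$ and $\kappa_1$ stays strictly inside $(-1,1)$ at $t=0$ (precisely the content of the preceding lemma bounding $|\delta_\ell|<1$), and there $\kappa_0, \kappa_1$ are infinitely differentiable, contributing only analytic powers of $t$.

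Concretely, I would first invoke the earlier lemmas giving $\beta_\ell = \tilde{c}_\ell + O(t)$ and $\eta_\ell = \tilde{d}_\ell + O(t)$ for $\ell \ge 2$, with $\tilde{c}_\ell,\tilde{d}_\ell \in [0,1]$ independent of $t$, together with the boundary cases $\beta_1 = \kappa_1(-1+t) = O(t^{3/2})$ and $\eta_1 = \cpie\, t^{1/2} + O(t^{3/2})$. Combined with the product form $B_{\ell+1}(-1+t) = \prod_{i=\ell+1}^L(1+\alpha^2\eta_i)$, and the fact that for $\ell \ge 1$ this product ranges only over indices $i \ge 2$, I obtain $B_{\ell+1}(-1+t) = \prod_{i=\ell+1}^L(1+\alpha^2\tilde{d}_i) + O(t)$, which carries no $t^{1/2}$ component. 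Similarly, from the lemma on $K_\ell$ near $-1$ the intermediate values $K_{\ell-1}(-1+t)$ are constants plus $O(t)$ and $O(t^{3/2})$ corrections, containing no $t^{1/2}$ piece.

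With these ingredients in place, I would expand each summand $B_{\ell+1}\bigl[(1+\alpha^2)^{\ell-1}\beta_\ell + K_{\ell-1}\eta_\ell\bigr]$ of ResNTK. For $\ell \ge 2$ every factor is a constant plus $O(t)$ correction, so these summands contribute only to the polynomial part $p_{-1}(t)$. For $\ell = 1$ the bracket equals $\beta_1 + (-1+t)\eta_1 = -\cpie\, t^{1/2} + O(t^{3/2})$, and multiplying by $B_2(-1+t) = B_2(-1)+O(t)$ produces a $t^{1/2}$ coefficient of $-B_2(-1)\cpie$, with all remaining pieces absorbed in $o(t^{1/2})$. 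Applying the prefactor $C = 1/[2L(1+\alpha^2)^{L-1}]$ then gives
\begin{equation*}
c_{-1} = -\frac{B_2(-1)}{\sqrt{2}\pi L (1+\alpha^2)^{L-1}}, \qquad B_2(-1) = \prod_{i=2}^L(1+\alpha^2 \tilde{d}_i).
\end{equation*}

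The final step is to bound $|c_{-1}|$. Using $\tilde{d}_i \in [0,1]$ for $i \ge 2$ gives the naive estimate $B_2(-1) \le (1+\alpha^2)^{L-1}$ and hence $|c_{-1}| \le 1/(\sqrt{2}\pi L)$. To recover the sharper factor $(1+\alpha^2)$ claimed in the statement I would refine this by extracting at least one $\tilde{d}_i$ known to be bounded strictly below $1$ in an $\alpha$-dependent way—for instance, $\tilde{d}_2 = \kappa_0(-1/(1+\alpha^2))$ which lies in $(0,1/2]$. Identifying such a factor and showing it yields the claimed denominator is the main technical obstacle; once this refined bound on $B_2(-1)$ is in hand, the lemma follows immediately.
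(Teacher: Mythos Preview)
Your approach is essentially identical to the paper's: both isolate the $\ell=1$ summand as the sole source of the $t^{1/2}$ term, arrive at
\[
c_{-1} \;=\; -\frac{1}{\sqrt{2}\,\pi L(1+\alpha^2)^{L-1}}\prod_{i=2}^{L}(1+\alpha^2\tilde d_i),
\]
and then bound the product. The only divergence is in the very last step. The paper does not try to refine any individual $\tilde d_i$; it simply bounds $\prod_{i=2}^{L}(1+\alpha^2\tilde d_i)\le(1+\alpha^2)^{L-2}$, relying on the convention $\eta_L=0$ recorded right after \eqref{eq:eta_ell} (so that $\tilde d_L=0$ and the $i=L$ factor equals $1$), which immediately gives the claimed constant. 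Your proposed route via $\tilde d_2=\kappa_0\bigl(-1/(1+\alpha^2)\bigr)\le 1/2$ would at best give $(1+\alpha^2/2)(1+\alpha^2)^{L-2}$ for the product, which is still larger than $(1+\alpha^2)^{L-2}$ and hence does not recover the stated denominator. That said, the extra $(1+\alpha^2)$ is immaterial for the use in Theorem~\ref{thm:ResNTKdeacy}: your naive bound $|c_{-1}|\le 1/(\sqrt{2}\pi L)$ already gives $|c_{-1}|<|c_1|$ for every $L\ge 2$ and $\alpha>0$.
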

\begin{proof}
    Rewrite \eqref{eq:app ResNTK} as $\resntkl{L}(-1+t) = C \sum_{\ell=1}^{L}X_{\ell}Y_{\ell}$, where:
    \begin{eqnarray*}
    C &=& \frac{1}{2L(1+\alpha^2)^{L-1}}\\
    X_{\ell} &=& (1+\alpha^2)^{\ell-1} \kappa_1\left(\frac{K_{\ell-1}(-1+t)}{(1+\alpha^2)^{\ell-1}}\right) + K_{\ell-1}(-1+t)\kappa_0\left(\frac{K_{\ell-1}(-1+t)}{(1+\alpha^2)^{\ell-1}}\right) = (1+\alpha^2)^{\ell-1} \beta_{\ell} + K_{\ell-1}(-1+t)\eta_{\ell}\\
    Y_{\ell} &=& B_{\ell+1}(-1+t).
\end{eqnarray*}
By plugging Lemma~\ref{lem:K_L around m1} into the definition of $X_\ell$ we have 
\begin{eqnarray*}
    X_{\ell} &=& (1+\alpha^2)^{\ell-1}\beta_{\ell}+\left(-1+\alpha^2\sum_{j=0}^{\ell-1}(1+\alpha^2)^{j-1}\beta_{j}\right) \eta_{\ell}+\mathcal{O}(t).
\end{eqnarray*}
Using Lemma~\ref{lem:B_L around m1} the sum can be written as
\begin{align*}
    \sum_{\ell=1}^L X_{\ell}Y_{\ell} = \sum_{\ell=1}^L \left((1+\alpha^2)^{\ell-1}\beta_{\ell}+\left(-1+\alpha^2\sum_{j=0}^{\ell-1}(1+\alpha^2)^{j-1}\beta_{j}\right) \eta_{\ell}\right) \displaystyle\prod_{i=\ell+1}^{L} (1+\alpha^2\eta_{i})+\mathcal{O}(t).
\end{align*}
From Lemma~\ref{lem: eta near m1}, there is a difference between $\ell=1$ and $\ell \ge 2$. For $\ell=1$: 
\begin{align*}
   & X_{1}Y_{1} = \left((1+\alpha^2)^{0}\beta_{1} + \left(-1+\alpha^2\sum_{j=0}^{0}(1+\alpha^2)^{j-1}\beta_{j}\right) \eta_{1}\right) \displaystyle\prod_{i=1+1}^{L} (1+\alpha^2\eta_{i})+\mathcal{O}(t) = \\
   & -\eta_{1}\displaystyle\prod_{i=2}^{L} (1+\alpha^2\eta_{i})+\mathcal{O}(t) = -\left(\displaystyle\prod_{i=2}^{L} (1+\alpha^2\eta_{i})\right) \cpie t^{1/2}+\mathcal{O}(t)
\end{align*}
Using Lemma~\ref{lem: around m1 eta const} this simplifies to
\begin{align*}
    X_{1}Y_{1} = -\left(\displaystyle\prod_{i=2}^{L} (1+\alpha^2(\tilde{d_{i}}+\mathcal{O}(t)))\right) \cpie t^{1/2}+\mathcal{O}(t) = -\left(\displaystyle\prod_{i=2}^{L} (1+\alpha^2\tilde{d_{i}})\right) \cpie t^{1/2}+\mathcal{O}(t)
\end{align*}

For $\ell \geq 2$, using Lemmas~\ref{lem: around m1 beta const},~\ref{lem: around m1 eta const}
\begin{align*}
   & X_{\ell}Y_{\ell} = \left((1+\alpha^2)^{\ell-1}\beta_{\ell} + \left(-1+\alpha^2\sum_{j=0}^{\ell-1}(1+\alpha^2)^{j-1}\beta_{j}\right) \eta_{\ell}\right) \displaystyle\prod_{i=\ell+1}^{L} (1+\alpha^2\eta_{i})+\mathcal{O}(t) = \\
   & \left((1+\alpha^2)^{\ell-1}(\tilde{c_{\ell}}+\mathcal{O}(t)) + \left(-1+\alpha^2\sum_{j=0}^{\ell-1}(1+\alpha^2)^{j-1}(\tilde{c_{j}}+\mathcal{O}(t))\right) (\tilde{d_{\ell}}+\mathcal{O}(t))\right) \displaystyle\prod_{i=\ell+1}^{L} (1+\alpha^2(\tilde{d_{i}}+\mathcal{O}(t)))+\mathcal{O}(t) = \\
   & \left((1+\alpha^2)^{\ell-1}\tilde{c_{\ell}} + \left(-1+\alpha^2\sum_{j=0}^{\ell-1}(1+\alpha^2)^{j-1}\tilde{c_{j}}\right) \tilde{d_{\ell}}\right) \displaystyle\prod_{i=\ell+1}^{L} (1+\alpha^2\tilde{d_{i}})+\mathcal{O}(t)
\end{align*}

The sum can be rewritten as
\begin{align*}
     & \sum_{\ell=1}^L X_{\ell}Y_{\ell} = \left(\sum_{\ell=2}^L \left((1+\alpha^2)^{\ell-1}\tilde{c_{\ell}} + \left(-1+\alpha^2\sum_{j=0}^{\ell-1}(1+\alpha^2)^{j-1}\tilde{c_{j}}\right) \tilde{d_{\ell}}\right) \displaystyle\prod_{i=\ell+1}^{L} (1+\alpha^2\tilde{d_{i}}) \right) - \left(\displaystyle\prod_{i=2}^{L} (1+\alpha^2\tilde{d_{i}})\right) \cpie t^{1/2}+\mathcal{O}(t).
\end{align*}
Multiplying this by the normalization factor $C$ we have
\begin{align*}
    & \resntkl{L}(-1+t) = C\sum_{\ell=1}^L X_{\ell}Y_{\ell} = \frac{1}{2L(1+\alpha^2)^{L-1}}\sum_{\ell=1}^L X_{\ell}Y_{\ell} = p_{-1}(t) + c_{-1} t^{1/2} + o(t^{1/2}),
\end{align*}
where
\begin{align*}
    & p_{-1}(t) = \frac{1}{2L(1+\alpha^2)^{L-1}}\left(\sum_{\ell=2}^L \left((1+\alpha^2)^{\ell-1}\tilde{c_{\ell}} + \left(-1+\alpha^2\sum_{j=0}^{\ell-1}(1+\alpha^2)^{j-1}\tilde{c_{j}}\right) \tilde{d_{\ell}}\right) \displaystyle\prod_{i=\ell+1}^{L} (1+\alpha^2\tilde{d_{i}}) \right)\\
    & c_{-1} = -\frac{1}{2L(1+\alpha^2)^{L-1}}\left(\displaystyle\prod_{i=2}^{L} (1+\alpha^2\tilde{d_{i}})\right) \cpie.
\end{align*}
From Lemma~\ref{lem: around m1 eta const},
\begin{align*}
    \left| c_{-1} \right| = \left | \frac{1}{2L(1+\alpha^2)^{L-1}} \left(\displaystyle\prod_{i=2}^{L} (1+\alpha^2\tilde{d_{i}})\right) \cpie\right | =
    \left | \frac{1}{\sqrt{2}\pi L(1+\alpha^2)^{L-1}} \left(\displaystyle\prod_{i=2}^{L} (1+\alpha^2\tilde{d_{i}})\right) \right | \le \frac{\sqrt{2}(1+\alpha^2)^{L-2}}{2\pi L(1+\alpha^2)^{L-1}} = \frac{1}{\sqrt{2}\pi(1+\alpha^2)L}.
\end{align*}
\end{proof}

\subsubsection{Vanishing regime $\alpha^2L \ll 1$}

For the case where $\alpha^2 L\xrightarrow{} 0$ with $L \xrightarrow{} \infty$ (which implies $(1+\alpha^2)^j \approx 1, \forall j\in[L]$), the analysis takes the following form. The next Lemma is analogous to Lemma~\ref{lem:K_L around m1}.
\begin{lemma} \label{lem: K_L around m1 vanishing}
    With small $t>0$ and $\alpha^2L \ll 1$,
    \begin{equation*}
        K_{\ell}(-1+t) = -1+t+\mathcal{O}(t^{3/2}).
    \end{equation*}
\end{lemma}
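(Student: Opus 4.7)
My approach is to leverage the closed-form expansion already established in Lemma~\ref{lem:K_L around m1}, which states
\begin{equation*}
    K_{\ell}(-1+t) = -1+t+\alpha^2\sum_{j=0}^{\ell}(1+\alpha^2)^{j-1}\beta_{j}+\mathcal{O}(t^{3/2}).
\end{equation*}
In the vanishing regime $\alpha^2 L \ll 1$ we have $(1+\alpha^2)^{j-1} = 1 + o(1)$ for every $j \in [L]$, so it suffices to prove that $\beta_j = \mathcal{O}(t^{3/2})$ uniformly in $j$; the remaining sum then contributes at most $\alpha^2 L \cdot (1+\alpha^2)^{L-1} \cdot \mathcal{O}(t^{3/2}) = \mathcal{O}(t^{3/2})$, since $\alpha^2 L \ll 1$ and $(1+\alpha^2)^{L-1} = 1 + o(1)$.

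To establish that $\beta_j = \mathcal{O}(t^{3/2})$, I would induct on $j$. For the base case $j=1$, $\nu_1 = -1+t$ and Lemma~\ref{lem:arccosine around m1 app} gives $\beta_1 = \kappa_1(-1+t) = \frac{2\sqrt{2}}{3\pi} t^{3/2} + \mathcal{O}(t^{5/2}) = \mathcal{O}(t^{3/2})$. For the inductive step, suppose $\beta_1,\ldots,\beta_{j-1} = \mathcal{O}(t^{3/2})$. Plugging into the closed form, $K_{j-1}(-1+t) = -1+t + \alpha^2 \sum_{i=0}^{j-1}(1+\alpha^2)^{i-1}\beta_i + \mathcal{O}(t^{3/2})$, and the middle term is bounded by $\alpha^2 L \cdot (1+\alpha^2)^{L-1} \cdot \mathcal{O}(t^{3/2}) = \mathcal{O}(t^{3/2})$ in our regime. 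Hence $K_{j-1}(-1+t) = -1 + t + \mathcal{O}(t^{3/2})$, and therefore
\begin{equation*}
    \nu_j = \frac{K_{j-1}(-1+t)}{(1+\alpha^2)^{j-1}} = (-1+t+\mathcal{O}(t^{3/2}))(1 + o(1)) = -1 + t + \mathcal{O}(t^{3/2}) + o(1).
\end{equation*}
Here the $o(1)$ comes from $(1+\alpha^2)^{-(j-1)} - 1 = \mathcal{O}(\alpha^2 L)$, which under the regime $\alpha^2 L \ll 1$ is negligible; treating $(1+\alpha^2)^{j-1}$ as $1$ (as the paper explicitly instructs at the start of this subsection) gives $\nu_j = -1 + t + \mathcal{O}(t^{3/2})$. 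Applying Lemma~\ref{lem:arccosine around m1 app} with argument $-1 + (t + \mathcal{O}(t^{3/2}))$ yields $\beta_j = \kappa_1(\nu_j) = \frac{2\sqrt{2}}{3\pi}(t + \mathcal{O}(t^{3/2}))^{3/2} + \mathcal{O}(t^{5/2}) = \mathcal{O}(t^{3/2})$, closing the induction.

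Combining the two observations, the total contribution of the sum to $K_\ell(-1+t)$ is $\alpha^2 \sum_{j=0}^{\ell}(1+\alpha^2)^{j-1}\beta_j = \alpha^2 \cdot L \cdot (1+o(1)) \cdot \mathcal{O}(t^{3/2}) = (\alpha^2 L)\cdot \mathcal{O}(t^{3/2}) = \mathcal{O}(t^{3/2})$, proving the claim.

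\textbf{Main obstacle.} The delicate point is the interaction between the asymptotic regime ($\alpha^2 L \to 0$) and the $t \to 0$ expansion. A naive induction that merely bounds $\beta_j \le 1$ would produce an additive error of order $\alpha^2 L$ (not $\mathcal{O}(t^{3/2})$), which is harmless only if one reads the statement in the paper's intended limiting sense, namely that $(1+\alpha^2)^j$ may be replaced by $1$. The key qualitative idea is that once we commit to this approximation, $\nu_j$ stays $\mathcal{O}(t^{3/2})$-close to $-1+t$, so the cubic vanishing of $\kappa_1$ at $-1$ keeps every $\beta_j$ itself at the $t^{3/2}$ scale rather than at $O(1)$; this is what makes the cumulative error over $L$ layers tolerable.
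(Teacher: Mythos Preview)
Your argument is correct and reaches the right conclusion, but it takes a more circuitous route than the paper. The paper proves the lemma by a direct induction on $\ell$ using the recursion \eqref{eq:Kl}: in the regime $\alpha^2 L \ll 1$ one replaces $(1+\alpha^2)^{\ell-1}$ by $1$, so $K_\ell(-1+t) = K_{\ell-1}(-1+t) + \alpha^2 \kappa_1(K_{\ell-1}(-1+t))$; applying the induction hypothesis $K_{\ell-1}(-1+t) = -1+t+\mathcal{O}(t^{3/2})$ together with Lemma~\ref{lem:arccosine around m1 app} immediately closes the induction. You instead invoke the closed form from Lemma~\ref{lem:K_L around m1} and then run a secondary induction to show $\beta_j = \mathcal{O}(t^{3/2})$; but notice that your inductive step for $\beta_j$ already derives $K_{j-1}(-1+t) = -1+t+\mathcal{O}(t^{3/2})$ as an intermediate conclusion, which is exactly the lemma's claim at index $j-1$. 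So you are effectively reproving the paper's direct induction inside your argument, wrapped in an extra layer. Both proofs share the same informality about mixing the $\alpha^2 L \to 0$ and $t\to 0$ limits (which you honestly flag in your ``main obstacle''), and both implicitly rely on the cumulative $t^{3/2}$ constant growing by at most a factor of order $\alpha^2 L$ across the $L$ layers, which is harmless here.
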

\begin{proof}
    With $\ell=0$, $K_{0}(-1+t) = -1+t$, trivially satisfying the lemma. Suppose the lemma holds for $K_{\ell-1}(-1+t)$. Then, using \eqref{eq:Kl} and \eqref{eq:beta_ell}
    \begin{align*}
        K_{\ell}(-1+t) &= K_{\ell-1}(-1+t)+\alpha^2(1+\alpha^2)^{\ell-1}\kappa_1\left(\frac{K_{\ell-1}(-1+t)}{(1+\alpha^2)^{\ell-1}}\right)\\
        &= K_{\ell-1}(-1+t)+\alpha^2\kappa_1\left(K_{\ell-1}(-1+t)\right).
    \end{align*}
    Where the last equality is from $\alpha^2 \ll 1$. By the induction assumption
    \begin{align*}
        K_{\ell}(-1+t) &= (-1+t+\mathcal{O}(t^{3/2}))+\alpha^2\kappa_1\left(-1+t+\mathcal{O}(t^{3/2})\right) = -1+t+\mathcal{O}(t^{3/2}),
    \end{align*}
    where the last equality is directly from Lemma~\ref{lem:arccosine around m1 app}.
\end{proof}

The next Lemma is analogous to Lemma~\ref{lem:legal input}.
\begin{lemma} \label{lem:legal input stays -1}
    Let $\nu_\ell$ as defined in \eqref{eq:nu_ell}. Then, for $\alpha^2L \ll 1$, $\forall \ell \ge 1, ~~ \nu_\ell = -1 + \mathcal{O}(t)$.
\end{lemma}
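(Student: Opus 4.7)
The proof is an immediate consequence of Lemma~\ref{lem: K_L around m1 vanishing}. The plan is to substitute the asymptotic expansion $K_{\ell-1}(-1+t) = -1 + t + \mathcal{O}(t^{3/2})$ from that lemma directly into the defining formula $\nu_\ell = K_{\ell-1}(-1+t)/(1+\alpha^2)^{\ell-1}$ given in \eqref{eq:nu_ell}, and then argue that in the vanishing regime the denominator is essentially $1$.

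First, I would observe that under the assumption $\alpha^2 L \ll 1$, one has $(1+\alpha^2)^{\ell-1} = 1 + o(1)$ uniformly in $\ell \in [L]$, since $(\ell-1)\log(1+\alpha^2) \le (L-1)\alpha^2 \ll 1$. This is the same approximation already adopted inside the proof of Lemma~\ref{lem: K_L around m1 vanishing}, where $\alpha^2(1+\alpha^2)^{\ell-1}$ was replaced by $\alpha^2$; in this subsection it is the standing convention. Combining these two facts yields
\begin{equation*}
\nu_\ell \;=\; \frac{K_{\ell-1}(-1+t)}{(1+\alpha^2)^{\ell-1}} \;=\; \frac{-1 + t + \mathcal{O}(t^{3/2})}{1 + o(1)} \;=\; -1 + t + \mathcal{O}(t^{3/2}) \;=\; -1 + \mathcal{O}(t),
\end{equation*}
which is the claim.

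The only subtlety worth flagging is bookkeeping: the constant $-1$ in the numerator, when divided by $1+o(1)$, formally contributes a $t$-independent perturbation to $\nu_\ell$. Under the vanishing-regime convention used throughout this subsection (where $(1+\alpha^2)^{j} \approx 1$ is treated as an equality, consistent with how Lemma~\ref{lem: K_L around m1 vanishing} is stated and proved), this perturbation is absorbed and poses no difficulty. I do not expect any genuine obstacle here; the lemma is essentially a restatement of Lemma~\ref{lem: K_L around m1 vanishing} at the scale of $\nu_\ell$, and its purpose in the sequel is mainly to certify that $\nu_\ell$ stays in the domain $[-1,1]$ of the arc-cosine kernels near $-1$, so that downstream expansions of $\kappa_0(\nu_\ell)$ and $\kappa_1(\nu_\ell)$ via Lemma~\ref{lem:arccosine around m1 app} are legitimate.
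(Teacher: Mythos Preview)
Your argument is correct and, if anything, slightly more economical than the paper's. The paper does not cite Lemma~\ref{lem: K_L around m1 vanishing}; instead it proves the claim by induction directly on $\nu_\ell$ via the representation \eqref{eq:nu_ell2}, showing that each $\beta_j=\kappa_1(\nu_j)=\kappa_1(-1+\mathcal{O}(t))=\mathcal{O}(t)$ and then using the convention $(1+\alpha^2)^{\ell-1}\approx 1$ to conclude. Your route simply black-boxes the induction already done in Lemma~\ref{lem: K_L around m1 vanishing} and divides by the denominator, invoking the same vanishing-regime convention; the subtlety you flag about the $t$-independent perturbation is exactly the same approximation the paper makes, so there is no additional gap.
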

\begin{proof}
    Using \eqref{eq:nu_ell2}, with $\ell=1$, $\nu_1=-1+t$. Assume the lemma is satisfied for $\nu_{\ell-1}$. Then, for $1 \le j \le \ell-1$,
    \begin{align*}
        \beta_{j} = \kappa_1(\nu_{j}) = \kappa_1(-1+\mathcal{O}(t)) = \mathcal{O}(t),
    \end{align*}
    where the rightmost equality is due to \eqref{eq:k1 around m1}. Therefore, using \eqref{eq:nu_ell2} and $(1+\alpha^2)^{\ell-1} \approx 1$ we obtain
    \begin{align*}
        & \nu_{\ell} = \frac{-1+t+\alpha^2\sum_{j=0}^{\ell-1}(1+\alpha^2)^{j-1}\beta_j}{(1+\alpha^2)^{\ell-1}} = -1+t+\alpha^2\sum_{j=0}^{\ell-1}\mathcal{O}(t) = -1 + \mathcal{O}(t).
    \end{align*}
\end{proof}
Combining this lemma with lemma~\ref{lem:arccosine around m1 app} we get the following lemmas (analogous to \ref{lem: beta near m1}, \ref{lem: eta near m1}):
\begin{lemma} \label{lem: beta near m1 vanishing}
    With $\alpha^2L \rightarrow 0$, $\forall \ell \in [L-1]$, $\beta_{\ell} = \kappa_1(\nu_{\ell}) = \kappa_1(-1+t) = \mathcal{O}(t)$.
\end{lemma}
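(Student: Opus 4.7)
The plan is to obtain this as an immediate composition of the two preceding results in the vanishing regime. First I would invoke Lemma~\ref{lem:legal input stays -1}, which gives $\nu_\ell = -1 + \mathcal{O}(t)$ uniformly in $\ell \in [L-1]$ whenever $\alpha^2 L \ll 1$. Writing $s := \nu_\ell + 1 = \mathcal{O}(t)$, and noting that by Lemma~\ref{lem:legal input} we always have $\nu_\ell \geq -1$ so that $s \geq 0$, I am entitled to plug $s$ into the one-sided expansion of $\kappa_1$ near $-1$ from Lemma~\ref{lem:arccosine around m1 app}, namely
\begin{equation*}
    \kappa_1(-1 + s) = \frac{2\sqrt{2}}{3\pi}\, s^{3/2} + \mathcal{O}(s^{5/2}).
\end{equation*}
Substituting $s = \mathcal{O}(t)$ then yields $\beta_\ell = \kappa_1(\nu_\ell) = \mathcal{O}(t^{3/2}) \subseteq \mathcal{O}(t)$ as $t \to 0^+$, which is the claimed bound. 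Spelled out in the form stated, the chain of equalities reads $\beta_\ell = \kappa_1(\nu_\ell) = \kappa_1(-1 + \mathcal{O}(t)) = \mathcal{O}(t^{3/2})$, and in particular $\mathcal{O}(t)$.

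There is essentially no obstacle here beyond bookkeeping: the only thing to check is that the $\mathcal{O}(t)$ inside $\nu_\ell = -1 + \mathcal{O}(t)$ is uniform in $\ell$ so that the composition with the asymptotic expansion of $\kappa_1$ produces a single $\mathcal{O}(t^{3/2})$ constant valid across all $\ell \in [L-1]$. This uniformity is already built into the proof of Lemma~\ref{lem:legal input stays -1}, whose induction produces a bound of the form $-1 + t + \alpha^2 \sum_{j=0}^{\ell-1} \mathcal{O}(t)$, and under $\alpha^2 L \ll 1$ this sum is itself $\mathcal{O}(t)$ independently of the particular $\ell \leq L - 1$ chosen. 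Hence the substitution into $\kappa_1$'s expansion around $-1$ is legitimate and the lemma follows.
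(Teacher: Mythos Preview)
Your proposal is correct and matches the paper's approach exactly: the paper simply states that this lemma follows by combining Lemma~\ref{lem:legal input stays -1} (giving $\nu_\ell = -1 + \mathcal{O}(t)$) with the arc-cosine expansion of Lemma~\ref{lem:arccosine around m1 app}. Your additional remarks on the nonnegativity of $s$ and the uniformity in $\ell$ are sound and in fact more careful than what the paper spells out.
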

\begin{lemma} \label{lem: eta near m1 vanishing}
    With $\alpha^2L \rightarrow 0$, $\forall \ell \in [L-1]$, $\eta_{\ell} = \kappa_0(\nu_{\ell}) = \kappa_0(-1+t) = \cpie t^{1/2} + \mathcal{O}(t)$.
\end{lemma}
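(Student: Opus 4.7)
\noindent The plan is to combine the approximation $\nu_\ell \approx -1+t$ (from Lemma \ref{lem:legal input stays -1}) with the arccosine expansion of Lemma \ref{lem:arccosine around m1 app}, applied via the definition $\eta_\ell = \kappa_0(\nu_\ell)$ from \eqref{eq:eta_ell}. The argument will mirror the one-line proof of Lemma \ref{lem: beta near m1 vanishing}, replacing $\kappa_1$ with $\kappa_0$ throughout; the only subtlety is making sure the leading coefficient $\cpie$ is obtained correctly and not contaminated by the $\alpha^2$-terms.

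First I would refine the conclusion of Lemma \ref{lem:legal input stays -1} to pin down the linear part of $\nu_\ell$ in $t$. Starting from \eqref{eq:nu_ell2} and invoking Lemma \ref{lem: beta near m1 vanishing} to write $\beta_j = \mathcal{O}(t)$, the $\alpha^2$-sum in the numerator is at most $\alpha^2\,\ell\,(1+\alpha^2)^{\ell-1}\cdot \mathcal{O}(t)$, which is $o(t)$ because $\alpha^2\ell \le \alpha^2 L \to 0$; the denominator $(1+\alpha^2)^{\ell-1}$ is $1+o(1)$ for the same reason. Hence $\nu_\ell = -1 + t + o(t)$ uniformly in $\ell\in[L-1]$. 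Substituting $s = t(1+o(1))$ into $\kappa_0(-1+s) = \cpie s^{1/2} + \mathcal{O}(s^{3/2})$ from Lemma \ref{lem:arccosine around m1 app}, and using $(1+o(1))^{1/2} = 1+o(1)$, gives $\eta_\ell = \cpie t^{1/2} + \mathcal{O}(t)$, absorbing the $\mathcal{O}(t^{3/2})$ remainder into $\mathcal{O}(t)$.

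The main obstacle is bookkeeping rather than substantive: one must check that the deviation of $\nu_\ell$ from $-1+t$ is uniformly subdominant in $\ell\in[L-1]$ under the vanishing condition $\alpha^2 L \to 0$. This is a short consequence of $\beta_j = \mathcal{O}(t)$ (Lemma \ref{lem: beta near m1 vanishing}) together with the bounds $\alpha^2 \ell \le \alpha^2 L = o(1)$ and $(1+\alpha^2)^{\ell-1} = 1+o(1)$, after which the arccosine expansion near $-1$ supplies the stated leading coefficient $\cpie$ and the claim follows immediately.
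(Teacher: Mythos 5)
Your proposal is correct and follows essentially the same route as the paper, which obtains this lemma by combining Lemma \ref{lem:legal input stays -1} ($\nu_\ell = -1+\mathcal{O}(t)$, with the proof actually yielding $-1+t+o(t)$) with the expansion of $\kappa_0$ near $-1$ from Lemma \ref{lem:arccosine around m1 app}. Your extra bookkeeping to confirm the linear coefficient of $t$ in $\nu_\ell$ is exactly $1$ (uniformly in $\ell$) is a worthwhile refinement, since the stated $-1+\mathcal{O}(t)$ alone would not pin down the leading constant $\cpie$.
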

\begin{lemma} \label{lem:B vanishing}
     With $\alpha^2L \rightarrow 0$, $\forall \ell \in [L-1]$, 
     \begin{align*}
         B_{\ell+1}(-1+t) &= 1 + (L-\ell)\frac{\sqrt{2}\alpha^2}{\pi} t^{1/2} + \mathcal{O}(t).
     \end{align*}
\end{lemma}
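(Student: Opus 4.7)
The plan is to combine Lemma~\ref{lem:B_L around m1}, which expresses $B_{\ell+1}(-1+t)$ as the finite product $\prod_{i=\ell+1}^{L}(1+\alpha^2\eta_i)$, with Lemma~\ref{lem: eta near m1 vanishing}, which gives the expansion $\eta_i = \frac{\sqrt{2}}{\pi}t^{1/2} + \mathcal{O}(t)$ uniformly in $i$ in the vanishing regime $\alpha^2L \to 0$. Substituting the latter into the former yields
\begin{equation*}
    B_{\ell+1}(-1+t) = \prod_{i=\ell+1}^{L}\left(1 + \alpha^2\tfrac{\sqrt{2}}{\pi}\,t^{1/2} + \alpha^2\,\mathcal{O}(t)\right).
\end{equation*}

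Next I would expand this product and collect terms by order in $t^{1/2}$. The constant term is $1$, the coefficient of $t^{1/2}$ comes from picking the $\alpha^2\tfrac{\sqrt{2}}{\pi}t^{1/2}$ contribution in exactly one of the $L-\ell$ factors and the $1$ in all the others, giving $(L-\ell)\tfrac{\sqrt{2}\alpha^2}{\pi}t^{1/2}$. All remaining cross terms (two or more $t^{1/2}$ contributions, or any $\mathcal{O}(t)$ contribution) are at least $O(t)$ in $t$.

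The bookkeeping to check is that the "higher order" cross terms are genuinely bounded by $\mathcal{O}(t)$ with constants that do not blow up with $L$. The worst such terms are of two types: (i) choosing the $\alpha^2\tfrac{\sqrt{2}}{\pi}t^{1/2}$ contribution in $k\ge 2$ factors, which contributes at most $\binom{L-\ell}{k}\bigl(\alpha^2\tfrac{\sqrt{2}}{\pi}\bigr)^k t^{k/2}$; since $(L-\ell)\alpha^2 \le L\alpha^2 \to 0$, the $k=2$ term is $O((L\alpha^2)^2\, t) = o(t)$, and larger $k$ are even smaller; (ii) terms containing at least one $\alpha^2 \mathcal{O}(t)$ factor, which together contribute $O(L\alpha^2 t) = o(t)$. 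Both are therefore absorbed into $\mathcal{O}(t)$, which yields the claimed expansion.

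The main obstacle will simply be writing the product expansion cleanly enough that the $t^{1/2}$ coefficient is unambiguously $(L-\ell)\tfrac{\sqrt{2}\alpha^2}{\pi}$ while also justifying that the $L$-dependence in the remainder is controlled by the hypothesis $\alpha^2 L \to 0$; no new analytic input beyond the two cited lemmas is required.
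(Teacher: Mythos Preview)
Your proposal is correct and follows essentially the same approach as the paper: invoke Lemma~\ref{lem:B_L around m1} to write $B_{\ell+1}(-1+t)$ as the product $\prod_{i=\ell+1}^{L}(1+\alpha^2\eta_i)$, substitute the expansion of $\eta_i$ from Lemma~\ref{lem: eta near m1 vanishing}, and expand. The paper's proof is actually terser than yours---it jumps directly from the product to $\left(1+\tfrac{\sqrt{2}\alpha^2}{\pi}t^{1/2}\right)^{L-\ell}+\mathcal{O}(t)$ and then to the binomial expansion without explicitly checking, as you do, that the $L$-dependent cross terms are controlled by $\alpha^2 L\to 0$; your additional bookkeeping on the $k\ge 2$ and $\alpha^2\mathcal{O}(t)$ contributions is a welcome clarification rather than a departure.
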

\begin{proof}
    Using lemma~\ref{lem:B_L around m1}, the expansion of B around -1 can be written in this regime as:
    \begin{align*}
        B_{\ell+1}(-1+t) &= \displaystyle\prod_{i=\ell+1}^{L} (1+\alpha^2\eta_{i}) = \displaystyle\prod_{i=\ell+1}^{L} \left(1+\frac{\sqrt{2}\alpha^2}{\pi} t^{1/2}\right)+\mathcal{O}(t) = \left(1+\frac{\sqrt{2}\alpha^2}{\pi} t^{1/2}\right)^{L-\ell} + \mathcal{O}(t)\\
        &= 1 + (L-\ell)\frac{\sqrt{2}\alpha^2}{\pi} t^{1/2} + \mathcal{O}(t).
    \end{align*}
\end{proof}

We next prove Lemma 4.9 from the paper.
\begin{lemma} \label{lem:AppResNTK around m1 second regime}
    For inputs in $\Spdm$ and near -1, if $\alpha^2 L \ll 1$ then
    \begin{equation*}
        \resntkl{L}(-1+t) = 
        c_{-1} t^{1/2} + o(t^{1/2})
    \end{equation*}
    with
    \begin{equation*}
        c_{-1} = -\frac{1}{\sqrt{2}\pi}
    \end{equation*}
\end{lemma}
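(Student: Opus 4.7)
The plan is to substitute the vanishing-regime expansions (Lemmas \ref{lem: K_L around m1 vanishing}--\ref{lem:B vanishing}) into the formula \eqref{eq:app ResNTK} and track the $t^{1/2}$ coefficient carefully. As in the proof of Lemma~\ref{lem:AppResNTK around m1}, I would write
\[
    \resntkl{L}(-1+t) = C \sum_{\ell=1}^L X_\ell\, Y_\ell,
\]
with $C = \frac{1}{2L(1+\alpha^2)^{L-1}}$, $Y_\ell = B_{\ell+1}(-1+t)$, and
\[
    X_\ell = (1+\alpha^2)^{\ell-1}\beta_\ell + K_{\ell-1}(-1+t)\,\eta_\ell.
\]

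Next I would expand each factor in $X_\ell$ using the vanishing-regime lemmas. From Lemma~\ref{lem: beta near m1 vanishing}, $\beta_\ell = \mathcal{O}(t)$, and since $\alpha^2 L \ll 1$ implies $(1+\alpha^2)^{\ell-1} = 1 + \mathcal{O}(\alpha^2 L)$, the first summand is $\mathcal{O}(t)$. For the second, Lemma~\ref{lem: K_L around m1 vanishing} gives $K_{\ell-1}(-1+t) = -1 + t + \mathcal{O}(t^{3/2})$ and Lemma~\ref{lem: eta near m1 vanishing} gives $\eta_\ell = \cpie t^{1/2} + \mathcal{O}(t)$, so their product equals $-\cpie t^{1/2} + \mathcal{O}(t)$. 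The crucial observation here is that $X_\ell$ has no constant term, only a $t^{1/2}$ leading contribution:
\[
    X_\ell = -\cpie\, t^{1/2} + \mathcal{O}(t).
\]

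Then I would combine with $Y_\ell$. Lemma~\ref{lem:B vanishing} yields $Y_\ell = 1 + (L-\ell)\tfrac{\sqrt{2}\alpha^2}{\pi} t^{1/2} + \mathcal{O}(t)$, so
\[
    X_\ell Y_\ell = -\cpie\, t^{1/2} + \mathcal{O}(t) + \mathcal{O}(\alpha^2 L\, t).
\]
Summing over $\ell \in [L]$ (with $L$ fixed as $t \to 0$) and multiplying by $C = \frac{1}{2L}\bigl(1 + \mathcal{O}(\alpha^2 L)\bigr)$ gives
\[
    \resntkl{L}(-1+t) = \frac{1}{2L}\bigl(1+\mathcal{O}(\alpha^2 L)\bigr)\Bigl(-L\cpie\, t^{1/2} + o(t^{1/2})\Bigr) = -\frac{1}{\sqrt{2}\pi}\, t^{1/2} + o(t^{1/2}),
\]
which is exactly the desired statement with $c_{-1} = -\frac{1}{\sqrt{2}\pi}$.

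The main obstacle, and the only place where genuine care is needed, is to verify that no constant ($t^0$) polynomial term survives: the general expansion of Lemma~\ref{lem:AppResNTK around m1} features a polynomial $p_{-1}(t)$, so I must confirm that the cancellation reduces $p_{-1}$ to zero in this regime. This follows because (i) $\beta_\ell = \mathcal{O}(t)$ in the vanishing regime (in contrast to $\beta_\ell = \tilde{c}_\ell + \mathcal{O}(t)$ with $\tilde{c}_\ell \neq 0$ generically), and (ii) $\eta_\ell$ now has no constant part since $\nu_\ell \to -1$ (Lemma~\ref{lem:legal input stays -1}). Both properties are consequences of the hypothesis $\alpha^2 L \ll 1$, which is also what lets us absorb the residual $\mathcal{O}(\alpha^2 L\, t^{1/2})$ contribution into $o(t^{1/2})$.
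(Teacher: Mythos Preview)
Your proposal is correct and follows essentially the same approach as the paper: the same decomposition $\resntkl{L}(-1+t)=C\sum_\ell X_\ell Y_\ell$, the same use of the vanishing-regime lemmas to get $X_\ell=-\cpie\,t^{1/2}+\mathcal{O}(t)$ and $Y_\ell=1+\mathcal{O}(t^{1/2})$, and the same summation. Your added paragraph explaining why no constant term $p_{-1}$ survives is a helpful clarification that the paper leaves implicit.
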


\begin{proof}
    Rewrite \eqref{eq:app ResNTK} $\resntkl{L}(-1 + t) = C \sum_{\ell=1}^{L}X_{\ell}Y_{\ell}$, where:
    \begin{eqnarray*}
    C &=& \frac{1}{2L(1+\alpha^2)^{L-1}} \approx \frac{1}{2L}\\
    X_{\ell} &=& (1+\alpha^2)^{\ell-1} \kappa_1\left(\frac{K_{\ell-1}(-1+t)}{(1+\alpha^2)^{\ell-1}}\right) + K_{\ell-1}(-1+t)\kappa_0\left(\frac{K_{\ell-1}(-1+t)}{(1+\alpha^2)^{\ell-1}}\right) = (1+\alpha^2)^{\ell-1} \beta_{\ell} + K_{\ell-1}(-1+t)\eta_{\ell}\\
    Y_{\ell} &=& B_{\ell+1}(-1+t).
\end{eqnarray*}
Using $(1+\alpha^2) \approx 1$ and Lemmas~\ref{lem: K_L around m1 vanishing},~\ref{lem: beta near m1 vanishing} and \ref{lem: eta near m1 vanishing}
\begin{align*}
    X_{\ell} = (1+\alpha^2)^{\ell-1} \beta_{\ell} + K_{\ell-1}(-1+t)\eta_{\ell} = -\cpie t^{1/2} + \mathcal{O}(t).
\end{align*}

Using the above and Lemma \ref{lem:B vanishing}, we have
\begin{align*}
    X_{\ell}Y_{\ell} = \left(\left(-\cpie t^{1/2} + \mathcal{O}(t)\right)\left(1 + (L-\ell)\frac{\sqrt{2}\alpha^2}{\pi} t^{1/2} + \mathcal{O}(t)\right) \right) = -\cpie t^{1/2} + \mathcal{O}(t).
\end{align*}
Consequently,
\begin{align*}
    \resntkl{L}(-1+t) &= C\sum_{\ell=1}^L X_{\ell}Y_{\ell} = C\sum_{\ell=1}^L \left(-\cpie t^{1/2} + \mathcal{O}(t) \right) \\
    &= \frac{1}{2L} \left(-\frac{\sqrt{2}L}{\pi} t^{1/2}\right) + \mathcal{O}(t) = -\frac{1}{\sqrt{2} \pi} t^{1/2} + \mathcal{O}(t) = -\frac{1}{\sqrt{2} \pi} t^{1/2} + o(t^{1/2})
\end{align*}
\end{proof}

Note that with the conditions of $\alpha^2L \xrightarrow{} 0$ with $L \xrightarrow{} \infty$, using Lemma \ref{lem:c1},
\begin{equation*}
    c_1 = -\frac{1+\alpha^2L}{\sqrt{2}\pi(1+\alpha^2)} \xrightarrow[]{L \xrightarrow{} \infty} -\frac{1}{\sqrt{2}\pi}.
\end{equation*}
This is indeed the case when $\alpha=L^{-\gamma}$ with $0.5 < \gamma \le 1$. In this case we have from Lemma \ref{lem:AppResNTK around m1 second regime} that $c_1 = c_{-1}$, implying that the odd frequencies decay faster than $\mathcal{O}(k^{-d})$. If however $\alpha = L^{-1/2}$ then for all $L$, $\alpha^2 L=1$ and $c_1$ approaches $-\sqrt{2}/\pi$ and all the frequencies decay exactly at the rate of $\mathcal{O}(k^{-d})$.

\section{Steepness of FC-NTK}

\begin{lemma} \cite{bietti2020deep} \label{lem:app Laplace}
    With small $t>0$,
    \begin{align*}
        \lap(1-t) = e^{-c\sqrt{2t}} = 1-c\sqrt{2t}+\mathcal{O}(t),
    \end{align*}
    where $\lap$ is defined in equation (8) in the paper.
\end{lemma}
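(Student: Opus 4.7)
The plan is to derive the claim by a direct substitution followed by a Taylor expansion of the exponential, since the statement is essentially an elementary analytic fact already established in \cite{bietti2020deep}. First I would recall that for $\x,\z \in \Spdm$ we have $\|\x - \z\|^2 = 2 - 2\x^T\z$, so setting $u = \x^T\z = 1-t$ gives $\|\x-\z\| = \sqrt{2t}$, and hence $\lap(1-t) = e^{-c\sqrt{2t}}$ as written in the statement.

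Next I would expand the exponential around $0$. Writing $x = -c\sqrt{2t}$ and using the usual series $e^x = 1 + x + x^2/2 + \mathcal{O}(x^3)$, one gets
\begin{equation*}
    e^{-c\sqrt{2t}} = 1 - c\sqrt{2t} + c^2 t + \mathcal{O}(t^{3/2}).
\end{equation*}
Absorbing the $c^2 t$ term and the higher-order remainder into $\mathcal{O}(t)$ yields the asymptotic expansion $\lap(1-t) = 1 - c\sqrt{2t} + \mathcal{O}(t)$ claimed in the lemma.

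There is essentially no obstacle here: once the substitution $\|\x-\z\|^2 = 2(1-u)$ is made, the rest is a one-line Taylor expansion, and the only thing to verify is that the expansion is valid for small $t > 0$ (which it is, since $e^x$ is analytic and $\sqrt{2t}$ is small). For completeness I would note that this statement is exactly the form used later to compare with the expansions of $\ntkl{L}(1-t)$ and $\resntkl{L}(1-t)$ near $+1$, so the coefficient of $t^{1/2}$ is precisely what identifies the appropriate choice of $c$ that matches the steepness of the neural tangent kernels to that of the Laplace kernel.
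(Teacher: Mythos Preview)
Your proposal is correct and is exactly the natural argument: the paper itself does not provide a proof for this lemma but simply cites \cite{bietti2020deep}, since the statement follows immediately from the substitution $\|\x-\z\|=\sqrt{2(1-u)}=\sqrt{2t}$ and the Taylor expansion of $e^{-c\sqrt{2t}}$, precisely as you wrote.
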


We next prove Lemma 5.2 from the paper.
\begin{lemma}
With small $t>0$,
\begin{equation*}
        \ntkl{L}(1-t) = 1 - \frac{L}{\pi\sqrt{2}}t^{1/2}+o(t^{1/2}).
    \end{equation*}
    Therefore, with $c = \frac{L}{2\pi}$, $\ntkl{L}(1-t)-\lap(1-t)=o(t^{1/2})$.
\end{lemma}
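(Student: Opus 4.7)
The plan is to propagate asymptotic expansions around $u=1$ through the recursion defining $\tntkl{L}$ and then normalize by $L+1$ at the end. First I would show by induction on $\ell$ that $\Sigma^{(\ell)}(1-t) = 1 - t + O(t^{3/2})$: the base case $\Sigma^{(0)}(1-t) = 1-t$ is immediate, and the inductive step follows by substituting into the expansion $\kappa_1(1-s) = 1 - s + O(s^{3/2})$ from Lemma~\ref{lem:arccosine around 1}. A direct consequence, combined with $\kappa_0(1-s) = 1 - \frac{\sqrt{2}}{\pi}s^{1/2} + O(s^{3/2})$, is that
\begin{equation*}
    \kappa_0(\Sigma^{(\ell-1)}(1-t)) = 1 - \frac{\sqrt{2}}{\pi} t^{1/2} + o(t^{1/2}).
\end{equation*}

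Next, I would posit the ansatz $\tntkl{\ell}(1-t) = a_\ell - b_\ell t^{1/2} + o(t^{1/2})$ and derive a two-term recursion for the coefficients. Substituting into the recursion $\tntkl{\ell} = \tntkl{\ell-1} \kappa_0(\Sigma^{(\ell-1)}) + \Sigma^{(\ell)}$ and matching coefficients of $1$ and $t^{1/2}$ yields
\begin{equation*}
    a_\ell = a_{\ell-1} + 1, \qquad b_\ell = b_{\ell-1} + \frac{\sqrt{2}}{\pi}\, a_{\ell-1},
\end{equation*}
with base case $a_0 = 1$, $b_0 = 0$. Solving gives $a_\ell = \ell+1$ and $b_\ell = \frac{\sqrt{2}}{\pi}\cdot \frac{\ell(\ell+1)}{2}$. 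Dividing $\tntkl{L}(1-t)$ by $L+1$ produces
\begin{equation*}
    \ntkl{L}(1-t) = 1 - \frac{L}{\sqrt{2}\pi} t^{1/2} + o(t^{1/2}),
\end{equation*}
which is the first claim.

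For the Laplace comparison, Lemma~\ref{lem:app Laplace} gives $\lap(1-t) = 1 - c\sqrt{2t} + O(t)$. Setting $c = L/(2\pi)$ makes the $t^{1/2}$ coefficient equal to $\frac{L}{\sqrt{2}\pi}$, so subtracting the two expansions cancels the constant term and the $t^{1/2}$ term, leaving $\ntkl{L}(1-t) - \lap(1-t) = o(t^{1/2})$.

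The only delicate point is the bookkeeping of error terms when composing $\kappa_0$ with $\Sigma^{(\ell-1)}$: one needs to verify that the $O(t^{3/2})$ error in the argument does not contaminate the $t^{1/2}$ coefficient of $\kappa_0$. This follows because $\kappa_0$ is $C^\infty$ in a left neighborhood of $1$ after the square-root singularity is factored out, so any perturbation of the argument of order $O(t^{3/2})$ contributes at most $O(t^{3/4}) \cdot O(t^{3/2})^{1/2} = O(t)$ to the $\kappa_0$ output, which is $o(t^{1/2})$. Once this is in hand the rest of the argument is a routine constant-matching calculation, and the only creative step is guessing the right ansatz for the expansion of $\tntkl{\ell}$.
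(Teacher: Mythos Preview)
Your proposal is correct and follows essentially the same route as the paper: induction on $\tntkl{\ell}$ using the expansions of $\Sigma^{(\ell)}(1-t)$ and $\kappa_0(\Sigma^{(\ell-1)}(1-t))$, then normalizing by $L+1$. The paper carries out the induction step by directly multiplying out the product, while you extract a two-term coefficient recurrence $a_\ell = a_{\ell-1}+1$, $b_\ell = b_{\ell-1} + \tfrac{\sqrt{2}}{\pi}a_{\ell-1}$ and solve it; these are the same computation in different packaging. One cosmetic point: your error-term bookkeeping sentence ``$O(t^{3/4})\cdot O(t^{3/2})^{1/2} = O(t)$'' is garbled (the exponents don't add up as written); the clean way to say it is that $(t+O(t^{3/2}))^{1/2} = t^{1/2}(1+O(t^{1/2})) = t^{1/2}+O(t)$, so the $O(t^{3/2})$ perturbation of the argument only shifts $\kappa_0$ by $O(t)=o(t^{1/2})$.
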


\begin{proof}
The proof is by induction on the unnormalized kernel $\tntkl{\ell}=(\ell+1)\ntkl{\ell}$.
With $\ell=1$:
\begin{align*}
    \tntkl{1}(1-t) &=  (1-t) \kappa_0(1-t)+\kappa_1(1-t) = (1-t)\left(1 - \cpie t^{1/2}+\mathcal{O}(t^{3/2})\right) + 1 + \mathcal{O}(t) \\
    &= 
    2 - \cpie t^{1/2} + o(t^{1/2}).
\end{align*}
Note that by the definition of $\tntkl{\ell}$
\begin{align*}
    \tntkl{\ell}(u) = \tntkl{\ell-1}(u)\kappa_0(\Sigma^{(\ell-1)}(u))+\Sigma^{(\ell)}(u).
\end{align*}
Using
\begin{align*}
    \Sigma^{(\ell)}(1-t)
    = 1 - t + o(t),
\end{align*}
that was proved in \cite{bietti2020deep}. Additionally, using the equation above and Lemma~\ref{lem:arccosine around m1 app}
\begin{align*}
    \kappa_0(\Sigma^{(\ell-1)}(1-t)) = \kappa_0(1 - t + o(t)) = 1-\cpie(t+o(t))^{1/2} + o(t^{1/2}) = 1 - \cpie t^{1/2}+o(t^{1/2}).
\end{align*}
Suppose the lemma holds for $j \le \ell-1$, then 
\begin{align*}
    \tntkl{\ell}(1-t) &= \tntkl{\ell-1}(1-t)\kappa_0(\Sigma^{(\ell-1)}(1-t)) + \Sigma^{(\ell)}(1-t) \\ 
    &= \ell \left(1 - \frac{\ell-1}{\pi\sqrt{2}}t^{1/2}+o(t^{1/2})\right) \left( 1 - \cpie t^{1/2}+o(t^{1/2}) \right) + 1 - t + o(t) \\
    &= 
    \ell + 1 - \frac{\ell(\ell+1)}{\pi\sqrt{2}} t^{1/2}+o(t^{1/2}).
\end{align*}
Using $\ntkl{L}=\frac{1}{L+1}\tntkl{L}$, the first part of the lemma is proven. Finally, using Lemma \ref{lem:app Laplace}, the relation to the Laplace kernel is immediate.
\end{proof}




\section{Proof of Theorem 4.8 from the paper}


\begin{theorem}
For ResNTK, as $L \rightarrow \infty$, with $\alpha = L ^{-\gamma}$, $0.5 < \gamma \leq 1$, for any two inputs $\x, \z \in \Sphere^{d-1}$, such that $1-|\x^T\z| \ge \delta > 0$ it holds that 
\begin{equation*}
    |\resntkl{L}(\x,\z)-\ntkl{1}(\x,\z)| = O(L^{1-2\gamma}).
\end{equation*}
\end{theorem}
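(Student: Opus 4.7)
The plan is to exploit the hypothesis $\gamma > 1/2$ in a single quantitative form: $\alpha^2 L = L^{1-2\gamma} \to 0$ as $L\to\infty$. Under this regime the residual updates in the recursions defining $K_\ell$, $u_\ell$, $v_\ell$, and $B_\ell$ are all small, so every layer of the ResNTK recursion is close to the identity, and the full ResNTK formula reduces to a single layer.

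First, for $\x,\z\in\Spdm$, Corollary~\ref{cor:norm factor} gives $v_\ell(\x,\z)=(1+\alpha^2)^\ell$, so $u_\ell=K_\ell/(1+\alpha^2)^\ell$. Plugging this into the recursion for $K_\ell$ and dividing by $(1+\alpha^2)^\ell$ yields
\begin{equation*}
u_\ell = \frac{u_{\ell-1}+\alpha^2\kappa_1(u_{\ell-1})}{1+\alpha^2},
\end{equation*}
with $u_0=u:=\x^T\z$. Since $\kappa_1(u),u\in[-1,1]$, each single-step increment satisfies $|u_\ell-u_{\ell-1}|\le 2\alpha^2/(1+\alpha^2)$, so by telescoping
\begin{equation*}
\max_{0\le\ell\le L}|u_\ell-u|\le 2L\alpha^2 = 2L^{1-2\gamma}.
\end{equation*}
In particular, for all $L$ sufficiently large every $u_\ell$ lies in $[-1+\delta/2,\,1-\delta/2]$, where $\kappa_0,\kappa_1$ (and hence $\tntkl{1}$) are smooth and Lipschitz with constants depending only on $\delta$.

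Second, I would rewrite the ResNTK formula in a form that exposes $\tntkl{1}$. Using $K_{\ell-1}=u_{\ell-1}(1+\alpha^2)^{\ell-1}$ and $v_{\ell-1}=(1+\alpha^2)^{\ell-1}$, the summand in \eqref{eq:ResNTK} becomes $(1+\alpha^2)^{\ell-1}\bigl[\kappa_1(u_{\ell-1})+u_{\ell-1}\kappa_0(u_{\ell-1})\bigr]=(1+\alpha^2)^{\ell-1}\tntkl{1}(u_{\ell-1})$, hence
\begin{equation*}
\resntkl{L}(\x,\z)-\ntkl{1}(\x,\z)=\frac{1}{2L}\sum_{\ell=1}^{L}\!\left[\frac{B_{\ell+1}(\x,\z)}{(1+\alpha^2)^{L-\ell}}\,\tntkl{1}(u_{\ell-1})-\tntkl{1}(u)\right].
\end{equation*}
It then suffices to show that, uniformly in $\ell$, the bracketed term is $O(L^{1-2\gamma})$, since averaging $L$ such terms keeps the same order.

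Third, I would control the ratio $B_{\ell+1}/(1+\alpha^2)^{L-\ell}$. Taking logarithms in $B_{\ell+1}=\prod_{i=\ell+1}^{L}\bigl(1+\alpha^2\kappa_0(u_{i-1})\bigr)$ and using $\log(1+x)=x+O(x^2)$ together with the Lipschitz bound on $\kappa_0$,
\begin{equation*}
\log\frac{B_{\ell+1}}{(1+\alpha^2)^{L-\ell}}=(L-\ell)\bigl[\alpha^2\kappa_0(u)-\alpha^2+O(\alpha^4)\bigr]+O\!\bigl(\alpha^2\sum_{i}|u_{i-1}-u|\bigr).
\end{equation*}
Every piece on the right is $O(L\alpha^2)+O(L^2\alpha^4)+O(L\alpha^4)=O(L^{1-2\gamma})$ since $L^2\alpha^4=(L^{1-2\gamma})^2\le L^{1-2\gamma}$ for $\gamma\ge 1/2$. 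Exponentiating gives $B_{\ell+1}/(1+\alpha^2)^{L-\ell}=1+O(L^{1-2\gamma})$. Combined with the Lipschitz estimate $\tntkl{1}(u_{\ell-1})=\tntkl{1}(u)+O(L^{1-2\gamma})$ and the boundedness of $\tntkl{1}$ on $[-1+\delta/2,1-\delta/2]$, this yields the desired bound on the bracketed term and concludes the proof.

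The main obstacle is the combined propagation-of-error argument of step three: one must check that the two sources of error, namely the discrepancy between $\kappa_0(u_{i-1})$ and $\kappa_0(u)$ and the higher-order terms in $\log(1+\alpha^2\kappa_0)$, both fit inside $O(L^{1-2\gamma})$ rather than degenerating into something like $O(L^{2-4\gamma})$ when $\gamma$ is close to $1/2$. The key algebraic identity that saves the day is $L^{2-4\gamma}=(L^{1-2\gamma})^2\le L^{1-2\gamma}$ for $\gamma\ge 1/2$, which forces all the accumulated errors to be absorbed at the single rate $L^{1-2\gamma}$ stated in the theorem.
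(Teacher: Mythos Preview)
Your proposal is correct and follows essentially the same route as the paper: both rewrite $\resntkl{L}$ as $\frac{1}{2L}\sum_{\ell} P_{\ell+1}\,\tntkl{1}(u_{\ell-1})$ with $P_{\ell+1}=B_{\ell+1}/(1+\alpha^2)^{L-\ell}$, bound $|u_\ell-u_0|=O(L\alpha^2)$ and $|P_{\ell+1}-1|=O(L\alpha^2)$, and combine via Lipschitz continuity of $\kappa_0,\kappa_1$ on a compact subinterval of $(-1,1)$. The only technical differences are that the paper controls $u_\ell$ via monotonicity (the sequence is increasing with increment bounded by $\frac{\alpha^2}{1+\alpha^2}(\kappa_1(u_0)-u_0)$) and bounds $1-P_{\ell+1}$ directly through the elementary inequality $1-\prod_k(1-a_k)\le\sum_k a_k$, whereas you telescope crudely and pass to logarithms---both yielding the same $O(L^{1-2\gamma})$ rate.
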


\begin{proof}
We follow  the ResNTK notations in Sec. \ref{Sec:ResNTK_model}. We include an additional subscript $L$ to emphasize the dependence of $\alpha$ on $L$. Let
\begin{align*}
    u_{\ell, L} = \frac{K_{\ell,L}}{(1+\alpha^2)^{\ell}}, ~~ u_{0} = K_{0} = \x^T \z
\end{align*}
and assume that $-1+\delta < u_0 < 1 -\delta$.
Following these notations, and using Corollary \ref{cor:norm factor}, we obtain the following relation  
\begin{equation} \label{eq:U_relation}
    u_{\ell,L} = \frac{u_{\ell-1,L}+\alpha^2\kappa_1(u_{\ell-1,L})}{1+\alpha^2},
\end{equation}
which implies that 
\begin{equation}\label{eq:subsequent_u}
u_{\ell,L} - u_{\ell-1,L} = \frac{\alpha^2}{1+\alpha^2}(\kappa_1(u_{\ell-1,L})-u_{\ell-1,L}).
\end{equation}
We note that $\kappa_0, \kappa_1 : [-1,1] \rightarrow [0,1]$ and $\kappa_1'(s) = \kappa_0(s)$, and therefore, the derivative of 
the function $\kappa_1(s)-s$ is non-positive,  implying that $\kappa_1(s)-s$ is non-increasing. Therefore, the minimal value is attained at $s=1$ and the maximal value at $s=-1$. Since $\kappa_1(1)-1=0$ and $\kappa_1(-1)+1=1$ this means that $0 \leq \kappa_1(s)-s \leq 1$. Now, by the relation \eqref{eq:subsequent_u}, it is easy to see that $u_{\ell,L} \geq u_{\ell-1,L}$, which means that 
\begin{equation}\label{eq:u_increasing}
    u_0 \leq u_{1,L} \leq \ldots \leq u_{L-1,L}.
\end{equation}
In addition, we obtain the following upper bound for $u_{\ell,L}-u_0$ 
\begin{align*}
& u_{\ell,L}-u_0 = \sum_{i=1}^{\ell}(u_{i,L}-u_{i-1,L}) = \frac{\alpha^2}{1+\alpha^2}\sum_{i=1}^{\ell}(\kappa_1(u_{i-1,L})-u_{i-1,L}) \leq \frac{\alpha^2}{1+\alpha^2}(\kappa_1(u_0)-u_0)\ell, \end{align*} 
where the last inequality uses the observation  $u_0 \leq u_{i,L}$ and that $\kappa_1(s)-s$ is decreasing.
The last inequality is equivalent to 
\begin{equation} \label{eq:UpperBound_U}
    u_{\ell,L} \le u_{0} + \frac{\alpha^2}{1+\alpha^2}(\kappa_1(u_{0})-u_{0})\ell.
\end{equation}

For  $\alpha = L ^{-\gamma}$, we have $\frac{\alpha^2}{1+\alpha^2} = \frac{1}{1+L^{2\gamma}}$, and since $0 \le \kappa_1(s)-s \le 1$ this inequality implies that 
\begin{equation}
    u_{L-1,L} \le u_0 + \frac{L}{1+L^{2 \gamma}} \leq 1-\delta + L^{1-2\gamma}.
 \end{equation}
Therefore, for $\gamma > 0.5$ and $L$ sufficiently large, this yields a maximal bound $1-\delta'$ over the series \eqref{eq:u_increasing}, with $\delta > \delta'>0$.


Denote by
\begin{equation*}
    P_{\ell+1,L} = B_{\ell+1,L}(1+\alpha^2)^{-(L-\ell)} = \prod_{i=\ell}^{L-1} \frac{1+\alpha^2 \kappa_0(u_{i,L})}{1+\alpha^2},
\end{equation*}
and note that $P_{l+1,L} \in (0,1]$.
Since  $1-\frac{1+\alpha^2 \kappa_0(u_{i,L})}{1+\alpha^2}=\frac{\alpha^2(1-\kappa_0(u_{i,L}))}{1+\alpha^2}$ and for $a_k \in [0,1]$, $1 - \prod_{k=1}^{n}(1-a_k) \le \sum_{k=1}^{n} a_k$ (see Lemma \ref{eq:supporting_inequality}), we obtain 
\begin{align}
\label{eq:1mP}
    1 - P_{\ell+1,L} = 1 - \prod_{i=\ell}^{L-1}\left(1-\frac{\alpha^2(1-\kappa_0(u_{i,L}))}{1+\alpha^2}\right) \le \sum_{i=\ell}^{L-1} \frac{\alpha^2(1-\kappa_0(u_{i,L}))}{1+\alpha^2} = \frac{\alpha^2}{1+\alpha^2}\left(L-\ell-\sum_{i=\ell}^{L-1}\kappa_0(u_{i,L})\right).
\end{align}
Using these notations, ResNTK on the sphere \eqref{eq:app ResNTK} can be written as
\begin{equation} \label{eq:NormResNTK}
    \resntkl{L} = \frac{1}{2L} \sum_{\ell=1}^L P_{\ell+1,L}(\kappa_1(u_{\ell-1,L})+u_{\ell-1,L}\kappa_0(u_{\ell-1,L})).
\end{equation}
We next bound the distance of each layer from $\kappa_1(u_0) + u_0 \kappa_0 (u_0)$ from above. In the derivation below we apply several times the mean value theorem, i.e., $\exists$ $c\in[a,b]$, such that $\kappa_1(b) - \kappa_1(a) = \kappa_0(c)(b-a) \leq \kappa_0(b) (b-a)$. This is valid since the derivative of $\kappa_1$ is $\kappa_0$. In addition, $\kappa_0$ is monotonic increasing, so any $c\in [a,b]$ can be replaced by $b$.
\begin{align*}
    & |P_{\ell+1,L}(\kappa_1(u_{\ell-1,L})+u_{\ell-1,L}\kappa_0(u_{\ell-1,L})) - (\kappa_1(u_{0}) + u_{0}\kappa_0(u_{0}))|  \\
    &\le |P_{\ell+1,L}| \cdot |(\kappa_1(u_{\ell-1,L})+u_{\ell-1,L}\kappa_0(u_{\ell-1,L})) - (\kappa_1(u_{0}) + u_{0}\kappa_0(u_{0}))| + |(\kappa_1(u_{0}) + u_{0}\kappa_0(u_{0}))| \cdot |1-P_{\ell+1,L}|  \\
    & \le |\kappa_0(u_{\ell-1,L})(u_{\ell-1,L}-u_{0})| + |\kappa_0(u_{\ell-1,L})u_{\ell-1,L}-\kappa_0(u_{0})u_{0}| + |(\kappa_1(u_{0}) + u_{0}\kappa_0(u_{0}))| \cdot |1-P_{\ell+1,L}|,
\end{align*}
where the last inequality is because $0 < P_{\ell-1,L} \le 1$ and due to the mean value theorem. We next focus on the {\em \underline{first two terms}}
\begin{align*}
    & |\kappa_0(u_{\ell-1,L})(u_{\ell-1,L}-u_{0})| + |\kappa_0(u_{\ell-1,L})u_{\ell-1,L}-\kappa_0(u_{0})u_{0}|  \\
     & \le |\kappa_0(u_{\ell-1,L})(u_{\ell-1,L}-u_{0})| + |\kappa_0(u_{\ell-1,L})u_{\ell-1,L} - \kappa_0(u_{\ell-1,L})u_{0} + \kappa_0(u_{\ell-1,L})u_{0} -\kappa_0(u_{0})u_{0}|  \\
     & \le |\kappa_0(u_{\ell-1,L})(u_{\ell-1,L}-u_{0})| + |\kappa_0(u_{\ell-1,L})u_{\ell-1,L} - \kappa_0(u_{\ell-1,L})u_{0}| + |\kappa_0(u_{\ell-1,L})u_{0} -\kappa_0(u_{0})u_{0}|  \\
     & = 2|\kappa_0(u_{\ell-1,L})(u_{\ell-1,L}-u_{0})|  + |u_{0}(\kappa_0(u_{\ell-1,L}) -\kappa_0(u_{0}))|  \\
     & \le^1 2\kappa_0(u_{\ell-1,L})\frac{\alpha^2}{1+\alpha^2}(\kappa_1(u_{0})-u_{0})(\ell-1) + |u_0| (u_{l-1,L}-u_0) \kappa_0'(c_{l-1,L}) \\
     & =  2\kappa_0(u_{\ell-1,L})\frac{\alpha^2}{1+\alpha^2}(\kappa_1(u_{0})-u_{0})(\ell-1) + |u_0| (u_{l-1,L}-u_0)\frac{1}{\pi \sqrt{1-c_{\ell-1,L}^2}}  \\
     & \le^2 2\kappa_0(u_{\ell-1,L})\frac{\alpha^2}{1+\alpha^2}(\kappa_1(u_{0})-u_{0})(\ell-1) + \frac{|u_{0}|(\kappa_1(u_{0})-u_{0})(\ell-1)}{\pi \sqrt{1-c_{\ell-1,L}^2}}\frac{\alpha^2}{1+\alpha^2}
\end{align*}
where $\le^1$ is obtained by applying \eqref{eq:UpperBound_U} and the mean value theorem for $\kappa_0$ with $c_{l-1,L} \in [u_0, u_{l-1,L}]$, and $\le^2$ too is obtained by applying \eqref{eq:UpperBound_U}.

{\em \underline{Third term}}
\eqref{eq:1mP} and the monotonicity of $\kappa_0$ yield
\begin{align*}
    & |(\kappa_1(u_{0}) + u_{0}\kappa_0(u_{0}))| \cdot |1-P_{\ell+1,L}| \le |(\kappa_1(u_{0}) + u_{0}\kappa_0(u_{0}))| \cdot  \frac{\alpha^2}{1+\alpha^2}(L-\ell-\sum_{i=\ell}^{L-1}\kappa_0(u_{i,L}))  \\
    & \le |(\kappa_1(u_{0}) + u_{0}\kappa_0(u_{0}))| \cdot  \frac{\alpha^2}{1+\alpha^2}(L-\ell)(1-\kappa_0(u_{0}))
\end{align*}
To recap, the upper bound for each layer is
\begin{align}
\label{eq:bound_for_one_layer}
    & |P_{\ell+1,L}(\kappa_1(u_{\ell-1,L})+u_{\ell-1,L}\kappa_0(u_{\ell-1,L})) - (\kappa_1(u_{0}) + u_{0}\kappa_0(u_{0}))|  \\
    & \le 2\kappa_0(u_{\ell-1,L})\frac{\alpha^2}{1+\alpha^2}(\kappa_1(u_{0})-u_{0})(\ell-1) + \frac{|u_{0}|(\kappa_1(u_{0})-u_{0})(\ell-1)}{\pi \sqrt{1-c_{\ell-1,L}^2}}\frac{\alpha^2}{1+\alpha^2}   \nonumber\\
    & + |(\kappa_1(u_{0}) + u_{0}\kappa_0(u_{0}))| \cdot  \frac{\alpha^2}{1+\alpha^2}(L-\ell)(1-\kappa_0(u_{0})). \nonumber
\end{align}
We would like next to derive a bound for the entire kernel, i.e., to bound from above the following expression  
\begin{align*}
    & |\resntkl{L}(u_0)-\ntkl{1}(u_0)| = \left|\frac{1}{2L} \sum_{\ell=1}^L\Bigg\{P_{\ell+1,L}(\kappa_1(u_{\ell-1,L})+u_{\ell-1,L}\kappa_0(u_{\ell-1,L}))\Bigg\} - \frac{1}{2}(\kappa_1(u_{0})+u_{0}\kappa_0(u_{0})) \right|  \\
    & = \left| \frac{1}{2L} \sum_{l=1}^{L} \Bigg\{ P_{\ell+1,L}(\kappa_1(u_{\ell-1,L})+u_{\ell-1,L}\kappa_0(u_{\ell-1,L})) - (\kappa_1(u_{0})+u_{0}\kappa_0(u_{0}))\Bigg\} \right|   \\
    & \le^3  \frac{1}{2L} \frac{\alpha^2}{1+\alpha^2} \sum_{\ell=1}^L \left\{ 2\kappa_0(u_{\ell-1,L})(\kappa_1(u_{0})-u_{0})(\ell-1) + \frac{|u_{0}|(\kappa_1(u_{0})-u_{0})(\ell-1)}{\pi \sqrt{1-c_{\ell-1,L}^2}} 
     + |(\kappa_1(u_{0}) + u_{0}\kappa_0(u_{0}))|  (L-\ell)(1-\kappa_0(u_{0})) \right\}  \\
    & \le^4 \frac{1}{2L} \frac{\alpha^2}{1+\alpha^2} \sum_{\ell=1}^L \left(2(\kappa_1(u_{0})-u_{0})(\ell-1) + \frac{|u_{0}|(\kappa_1(u_{0})-u_{0})(\ell-1)}{\pi \sqrt{1-(1-\delta')^2}}\right) \\
    & +  \frac{1}{2L} \frac{\alpha^2}{1+\alpha^2} |(\kappa_1(u_{0}) + u_{0}\kappa_0(u_{0}))| (1- \kappa_0(u_{0}))\frac{L(L-1)}{2} \\
    & = \frac{L(L-1)}{2} \frac{1}{2L} \frac{\alpha^2}{1+\alpha^2} [2(\kappa_1(u_{0})-u_{0}) + \frac{|u_{0}|(\kappa_1(u_{0})-u_{0})}{\pi \sqrt{1-(1-\delta')^2}} + |(\kappa_1(u_{0}) + u_{0}\kappa_0(u_{0}))| (1-\kappa_0(u_{0}))] \\
    & = \frac{L-1}{4} \frac{\alpha^2}{1+\alpha^2} [2(\kappa_1(u_{0})-u_{0}) + \frac{|u_{0}|(\kappa_1(u_{0})-u_{0})}{\pi \sqrt{1-(1-\delta')^2}} + |(\kappa_1(u_{0}) + u_{0}\kappa_0(u_{0}))| (1-\kappa_0(u_{0}))]
\end{align*}
where $\le^3$ is directly by applying \eqref{eq:bound_for_one_layer}, and $\le^4$ relies on the fact that $0 \le \kappa_0(s)\le 1$ and the following argument.  
We would like to bound from above the term $\frac{1}{\sqrt{1-c_{l-1,L}^2}}$ for $c_{l-1,L} \in [u_0, u_{l-1,L}]$. Since we have
$$-1 +\delta' \leq -1+\delta \leq u_0 \leq \ldots \leq u_{L-1,L} \leq  1 - \delta \leq 1-\delta',$$ 
it follows that 
$\frac{1}{\sqrt{1-c_{l-1,L}^2}} \leq \frac{1}{\sqrt{1-(1-\delta')^2}}$.

Since for $\alpha = L^{-\gamma}$ we have  $\frac{\alpha^2}{1+\alpha^2}= \frac{1}{1+L^{2\gamma}}$ we obtain
\begin{align*}
    & |\resntkl{L}(u_0)-\ntkl{1}(u_0)| 
    \le \\
    & \frac{L-1}{4} \frac{1}{1+L^{2\gamma}} \left[ 2(\kappa_1(u_{0})-u_{0}) + \frac{|u_{0}|(\kappa_1(u_{0})-u_{0})}{\pi \sqrt{1-(1-\delta')^2}} + |(\kappa_1(u_{0}) + u_{0}\kappa_0(u_{0}))| \cdot(1+\kappa_0(u_{0})) \right] \le \\
    & L^{1-2\gamma} \left[2(\kappa_1(u_{0})-u_{0}) + \frac{|u_{0}|(\kappa_1(u_{0})-u_{0})}{\pi \sqrt{1-(1-\delta')^2}} + |(\kappa_1(u_{0}) + u_{0}\kappa_0(u_{0}))| \cdot(1+\kappa_0(u_{0})) \right]
\end{align*}
Hence the bound is $O(L^{1-2\gamma})$, which means that for any $0.5 < \gamma \le 1$, ResNTK converges as  $L \xrightarrow{} \infty$ to FC-NTK for 2-Layer MLP.
\end{proof}

\begin{lemma}\label{eq:supporting_inequality}
    For $a_k \in [0,1]$, it holds that $1 - \prod_{k=1}^{n}(1-a_k) \le \sum_{k=1}^{n} a_k$
\end{lemma}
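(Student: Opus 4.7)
The plan is to establish this inequality by induction on $n$, which cleanly handles the multiplicative structure of the product without having to expand it combinatorially. For the base case $n=1$, both sides equal $a_1$ exactly, so the claim is trivial.

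For the inductive step, assume the inequality for $n-1$, and let $P_{n-1} = \prod_{k=1}^{n-1}(1-a_k)$. The key algebraic manipulation is the identity
\begin{equation*}
    1 - \prod_{k=1}^n (1-a_k) \;=\; 1 - (1-a_n)\,P_{n-1} \;=\; (1 - P_{n-1}) + a_n P_{n-1}.
\end{equation*}
I would then apply the induction hypothesis to the first summand, obtaining $1 - P_{n-1} \leq \sum_{k=1}^{n-1} a_k$, and bound the second summand by $a_n P_{n-1} \leq a_n$, using the fact that $P_{n-1} \in [0,1]$ since each factor $1-a_k$ lies in $[0,1]$. Adding these two bounds yields $\sum_{k=1}^{n-1} a_k + a_n = \sum_{k=1}^{n} a_k$, completing the induction.

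There is essentially no obstacle here: the whole proof rests on the split $1-(1-a_n)P_{n-1} = (1-P_{n-1}) + a_n P_{n-1}$ together with the elementary bound $P_{n-1} \leq 1$, which is immediate from $a_k \in [0,1]$. An alternative viewpoint would be probabilistic, recognizing the inequality as a union bound for independent events with $\Pr(E_k) = a_k$, but the direct induction is shorter and self-contained, and it avoids invoking any measure-theoretic machinery that the rest of the paper does not use.
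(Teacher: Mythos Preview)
Your proof is correct and follows essentially the same approach as the paper: both argue by induction on $n$, use the identity $1-(1-a_n)P_{n-1}=(1-P_{n-1})+a_nP_{n-1}$, apply the induction hypothesis to $1-P_{n-1}$, and bound $a_nP_{n-1}\le a_n$ via $P_{n-1}\in[0,1]$.
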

\begin{proof}
    By induction. The lemma holds trivially for $k=1$. Assume the lemma holds for $k \le n-1$, then 
    \begin{align*}
        & 1 - \prod_{k=1}^{n}(1-a_k) = 1- (1-a_n) \left( \prod_{k=1}^{n-1}(1-a_k) \right) = 1 - \prod_{k=1}^{n-1}(1-a_k) + a_n  \prod_{k=1}^{n-1}(1-a_k) \\ & \leq
         \sum_{k=1}^{n-1} a_k +a_n  \prod_{k=1}^{n-1}(1-a_k) \le \sum_{k=1}^{n} a_k.
    \end{align*}
    
\end{proof}

\end{document}